\documentclass[10pt]{article}

\usepackage[accepted]{tmlr}

\usepackage{amsmath,amsfonts,bm}

\def\eqref#1{equation~\ref{#1}}
\def\1{\bm{1}}

\DeclareMathAlphabet{\mathsfit}{\encodingdefault}{\sfdefault}{m}{sl}
\SetMathAlphabet{\mathsfit}{bold}{\encodingdefault}{\sfdefault}{bx}{n}

\usepackage[utf8]{inputenc}
\usepackage[T1]{fontenc}

\usepackage{amsmath}
\usepackage{amssymb}
\usepackage{amsfonts}
\usepackage{amsthm}
\usepackage{mathtools}
\usepackage{mismath}
\usepackage{bbm}
\usepackage{nicefrac}

\usepackage{booktabs}
\usepackage{longtable}
\usepackage{multicol}
\usepackage{multirow}
\usepackage{makecell}
\usepackage{array}
\usepackage[para,online,flushleft]{threeparttable}
\usepackage{setspace}
\usepackage{enumitem}
\usepackage{eqparbox}

\usepackage{graphicx}
\usepackage[export]{adjustbox}
\usepackage{caption}
\usepackage{subcaption}
\usepackage{tikz}
\usetikzlibrary{shapes.geometric}

\usepackage{microtype}
\usepackage{xcolor}
\usepackage{textcomp}
\usepackage{etoolbox}

\let\TMLRAND\AND
\let\AND\undefined

\usepackage[noend]{algorithmic}
\usepackage{algorithm}

\let\AND\TMLRAND

\AtBeginEnvironment{algorithmic}{\let\AND\ALGORITHMICAND}

\renewcommand{\algorithmiccomment}[1]{%
  \hfill\#\ \eqparbox{COMMENT}{#1}%
}

\makeatletter
\patchcmd{\algorithmic}
  {\addtolength{\ALC@tlm}{\leftmargin} }
  {\addtolength{\ALC@tlm}{\leftmargin}}
  {}{}
\makeatother

\newcommand{\algrule}[1][.4pt]{%
  \par\vskip.5\baselineskip
  \hrule height #1
  \vfill
}

\usepackage{url}
\usepackage{hyperref}
\usepackage{cleveref}

\usepackage[textsize=tiny]{todonotes}

\usepackage{array}
\usepackage{eqparbox}
\renewcommand\algorithmiccomment[1]{%
  \hfill\#\ \eqparbox{COMMENT}{#1}%
}
\patchcmd{\algorithmic}{\addtolength{\ALC@tlm}{\leftmargin} }{\addtolength{\ALC@tlm}{\leftmargin}}{}{}
\makeatother

\theoremstyle{plain}
\newtheorem{theorem}{Theorem}[section]

\newtheorem{lemma}[theorem]{Lemma}
\newtheorem{corollary}[theorem]{Corollary}
\theoremstyle{definition}
\newtheorem{definition}[theorem]{Definition}

\theoremstyle{remark}
\newtheorem{remark}[theorem]{Remark}

\usepackage[textsize=tiny]{todonotes}
\usepackage[utf8]{inputenc} % allow utf-8 input
\usepackage[T1]{fontenc}    % use 8-bit T1 fonts
\usepackage{url}            % simple URL typesetting
\usepackage{booktabs}       % professional-quality tables
\usepackage{amsfonts}       % blackboard math symbols
\usepackage{nicefrac}       % compact symbols for 1/2, etc.
\usepackage{microtype}      % microtypography
\usepackage{xcolor}

\title{A Moving-Horizon Approximate Branch-and-Reduce Method for Deep Classification Trees}

\author{\name Chenxuanyin Zou \email zcxy@student.ubc.ca \\
      \addr Department of Chemical and Biological Engineering\\
      University of British Columbia
      \AND 
      \name Jiayang Ren \email rjy12307@student.ubc.ca \\
      \addr Department of Chemical and Biological Engineering\\
      University of British Columbia
      \AND
      \name Qiangqiang Mao \email maoq@student.ubc.ca \\
      \addr Department of Chemical and Biological Engineering\\
      University of British Columbia
      \AND
      \name Jing Liu \email jingliu@ece.ubc.ca  \\
      \addr Department of Electrical and Computer Engineering\\
      University of British Columbia
      \AND
      \name Marcus Lai \email marclai@student.ubc.ca \\
      \addr Department of Computer Science\\
      University of British Columbia
      \AND
      \name Yankai Cao\thanks{Corresponding author.}
      \email yankai.cao@ubc.ca \\
      \addr Department of Chemical and Biological Engineering\\
      University of British Columbia
      }
\def\month{08}  % Insert correct month for camera-ready version
\def\year{2026} % Insert correct year for camera-ready version
\def\openreview{\url{https://openreview.net/forum?id=4Sq5Byd4yS}} % Insert correct link to OpenReview for camera-ready version

\begin{document}

\maketitle
%\footnotetext{The authors used ChatGPT for language editing and LaTeX formatting assistance. All scientific ideas, analyses, claims, and results were produced and verified by the authors.}

\begin{abstract}
Despite the importance for interpretability, decision trees face severe scalability challenges. Existing global optimal methods are often limited by binary feature selection and shallow tree depths, whereas traditional heuristic approaches frequently sacrifice predictive accuracy. To overcome these limitations, this paper proposes a moving-horizon approximate branch-and-reduce method to train near-optimal deep classification trees on large-scale datasets with continuous features. Built on a hierarchical root-subtree optimization framework, the method solves the root-level problem via branch-and-reduce while approximating the induced subtree problem using greedy heuristics. Although the underlying framework is capable of guaranteeing global optimality, the approximation, which functions as a lookahead rollout in a reinforcement learning context, significantly boosts efficiency for deeper structures. A low-cost moving-horizon strategy is then employed to iteratively refine model accuracy. Extensive numerical results demonstrate that our method exceeds the testing accuracy of existing heuristic baselines while offering significantly greater scalability, in terms of both dataset size and tree depth, than global optimal solvers.

%Despite the importance for interpretability, decision trees face severe scalability challenges. Existing global optimal methods are often limited by binary feature selection and shallow tree depths, whereas traditional heuristic approaches frequently sacrifice predictive accuracy. To overcome these limitations, this paper proposes a moving-horizon approximate branch-and-reduce method to train near-optimal deep classification trees on large-scale datasets with continuous features. Built on a bilevel optimization framework, the method solves the upper-level problem via branch-and-reduce while approximating the lower-level problem using greedy heuristics. Although the underlying framework is capable of guaranteeing global optimality, the approximation, which functions as a lookahead rollout in a reinforcement learning context, significantly boosts efficiency for deeper structures. A low-cost moving-horizon strategy is then employed to iteratively refine model accuracy. Extensive numerical results demonstrate that our method exceeds the testing accuracy of existing heuristic baselines while offering significantly greater scalability, in terms of both dataset size and tree depth, than global optimal solvers.

\end{abstract}

\section{Introduction}
The decision tree (DT) model is a cornerstone of machine learning, lauded for its interpretable, flowchart-like structure that makes it particularly suitable for classification and regression tasks requiring transparency and comprehensibility \citep{freitas2014comprehensible,krzyw2017class}. Unlike many black-box models, DTs offer a higher degree of trustworthiness, which is critical in advancing AI for scientific research and high-stakes decision-making \citep{rudin2019stop,rudin2022black}. 
However, learning an optimal decision tree (ODT) has been classified to be $\mathcal{NP}$-hard \citep{Laurent1976obdt}. Traditional algorithms such as \texttt{CART} \citep{breiman1984CART}, \texttt{ID3} \citep{quinlan1986induction}, and \texttt{C4.5} \citep{quinlan1993c4} have been widely used due to their simplicity and efficiency, employing greedy heuristics to recursively partition data from the root to the leaves. Although these approaches generate trees efficiently, they fall significantly short of achieving optimality, especially for deeper trees.

%Efforts to address DT learning problems have consistently focused on enhancing both prediction accuracy and computational efficiency. 
Recent advancements in deterministic optimization techniques for learning ODTs have focused on approaches leveraging mixed-integer programming (MIP), satisfiability (SAT) solvers \citep{hu2020learning,alos2023interpretable}, and dynamic programming (DP) approaches \citep{nijssen2007mining,van2022fair,van2023necessary}. Among these, MIP-based methods \citep{gunluk2021optimal,zhu2020scalable,aghaei2024strong} formulate the training process as a mixed-integer optimization problem \citep{bertsimas2017optimal} and have achieved notable success by providing global optimization guarantees (optimality gap). However, efficiency remains a major limitation of these methods. For example, \texttt{Quant-BnB} \citep{mazumder2022quant} and \texttt{RS-OCT} \citep{hua2022scalable} have demonstrated the ability to generate optimal trees up to depth 3; however, extending these approaches to deeper trees remains computationally hard. In contrast, some other scalable DP methods, such as \texttt{DL8.5} \citep{aglin2020learning} and \texttt{MurTree} \citep{demirovic2022murtree}, leverage advanced caching strategies to enable the learning of relatively deeper trees. Despite these advances, most DP- and SAT-based approaches \citep{huisman2024optimal,lin2020generalized,avellaneda2020efficient} face inherent drawbacks. They often require binarizing continuous features, which increases the number of features and necessitates approximation techniques from \texttt{BinOCT} \citep{verwer2019learning} (some methods directly consider encoding DTs \citep{shati2023sat}). As a result, these methods solve a feature-binarized approximation of the original problem, which causes information loss, and they still struggle to build trees deeper than 5 on large datasets. 

Beyond deterministic approaches, several heuristic methods warrant consideration. One notable example is \texttt{TAO} \citep{carreira2018alternating,carreira2020ensembles,zharmagambetov2021improved}, which has been reported to yield only modest improvements over \texttt{CART} \citep{zhu2021tree,mazumder2022quant}. More recently, \texttt{DPDT} \citep{kohler2025breiman} has been proposed, which frames tree induction as a Markov Decision Process in conjunction with \texttt{CART}. Although \texttt{DPDT} has been shown to outperform \texttt{CART}, its further accuracy gains come at a substantial computational cost, which significantly limits its optimality. Within practical time budgets, the trees it produces remain far less accurate than global optimal ones. This gap highlights the urgent need for an advanced method that can deliver both high accuracy and scalability for deep decision trees on large-scale datasets.

%These issues highlight the pressing need for more efficient approaches to balance the optimization accuracy and the computational efficiency required for large-scale datasets and deeper DTs.

This paper introduces the Moving-Horizon Approximate Branch-and-Reduce (\texttt{MHABR}) algorithm for training deep classification trees on large-scale datasets with continuous features. This method is designed to achieve better optimization quality than heuristic methods (e.g., \texttt{CART}) while remaining  substantially more scalable than global optimal solvers. Extensive numerical experiments show that \texttt{MHABR} can successfully train trees of depth 8 on datasets with up to 60 million samples, achieving stronger optimization quality than the heuristic baselines we evaluate, while scaling to deeper trees and larger datasets than the global methods considered in our experiments. Additionally, we provide an $\alpha$-tuning option to mitigate overfitting, along with an extended variant that further improves solution quality and brings the result closer to the global optimum. %A comparison with related methods is summarized in \cref{method_compare}, highlighting their key characteristics.

\textbf{Contributions:} \textbf{(1)} We propose a hierarchical root-subtree optimization framework for learning DTs, where the splitting parameters at the root node are optimized according to the values of the two induced child-subtree optimization problems. \textbf{(2)} Within this framework, we introduce a branch-and-reduce (\texttt{BR}) method for the root-search problem. When the child-subtree problems are solved recursively and exactly, the framework guarantees convergence to global optimality. \textbf{(3)} To improve efficiency for deep trees, we approximate the child-subtree problems using \texttt{CART}. Under this approximate framework, exactness is retained for depth-2 trees because the depth-1 child-subtree problems are solved exactly. \textbf{(4)} We introduce a moving-horizon (MH) technique to iteratively refine branch parameters at low cost, enabling the training of deep trees on large-scale datasets.

\textbf{Performance:} We evaluate the proposed method on 59 UCI datasets \citep{Dua_2017}, with sample sizes ranging from 47 to 60,807,600 and tree depths from $2$ to $8$, and compare it against six baselines. 
\begin{itemize}[nosep, left=0pt]
    \item \textbf{Testing accuracy:} On 51 small datasets (fewer than 10K samples), \texttt{MHABR} achieves the highest average testing accuracy among the compared methods at depths 2 and 3, excluding datasets for which \texttt{Quant-BnB} does not converge, and exceeds \texttt{DL8.5} by 0.99\% on average across the reported depths. On the 8 medium- and large-scale datasets, \texttt{MHABR} improves average testing accuracy over \texttt{DPDT} by 3.01\% at depth 8.
    \item \textbf{Scalability:} On 5 medium datasets (10K–1M samples) and 3 large datasets (1M–60M samples), \texttt{MHABR} remains computationally feasible at depth 8, whereas the compared global optimal methods do not complete within the specified time budget. Across these datasets, \texttt{MHABR} improves average test accuracy over \texttt{CART} by 3.66\%.
    \item \textbf{Optimality:} \texttt{MHABR} guarantees global optimality for depth-2 trees. On the 42 small datasets completed by \texttt{Quant-BnB} at depth 3, its average training-accuracy gap to the global optimum is 0.58\%.
\end{itemize}

\section{Preliminary}
This paper focuses on training a DT model for a classification task. 
The notation primarily follows \citet{mazumder2022quant}.
We address a supervised learning problem involving a given dataset $\{(x_i,y_i)\}_{i=1}^{n}$, where $ x_i \in \mathbb{R}^p$ is the feature vector of sample $i$ and $y_i \in\mathcal{Y}$ is its corresponding class label. Here, $[n]:=\{1,\cdots,n \}$, $n$ denotes the number of samples, $p$ denotes the number of features, and $\mathcal{Y}:=[n_c]=\{1,\ldots,n_c\}$, $n_c$ denotes the number of classes. The features may be numerical, categorical, or binary. Initially, we scale each feature value of the dataset to the range $[0,\ 1]$.

%set $y_i \in\mathcal{Y}=\{1,\cdots,n_c \}$ that includes $n_c$ classes, where $[n]$ denotes $\{1,\cdots,n \}$ and $n$ is the number of samples, $p$ is the number of features, $x_i$ represents the feature vector that can admit numerical, categorical, or binary values. Initially, we scale each feature value of the dataset to the range $[0,\ 1]$. 

\subsection{Notations for Decision Trees}\label{secodt}

The induction of a decision tree $T:[0,1]^p\rightarrow\mathcal{Y}$ for a group of samples indexed by the set $\mathcal{I}$ can be formulated as an optimization problem expressed as
\begin{align} \label{loss}
T^*(\mathcal{I}) = \arg\min_{T\in\mathcal{T}_d} L(\mathcal{I},T), \quad\text{and} \quad\textit{V}_d (\mathcal{I}) := \min_{T\in\mathcal{T}_d} \textit{L}(\mathcal{I},T) ,\quad  %\text{specifically} \ V_0(\mathcal{I}) = \min_{c\in\mathcal{N}_c}L(\mathcal{I},c)
\end{align}
where $\mathcal{T}_d$ represents the family of DT of depth $d$ , $\textit{L}(\mathcal{I},T):= \sum\limits_{i\in\mathcal{I}} \ell(y_i,T(x_i))$ denotes the number of misclassified instances associated with tree $T$ over the samples $\mathcal{I}$, and $\ell:=\mathcal{Y}\times\mathcal{Y}\rightarrow \mathbb{R}$ denotes the loss function. For classification tasks in this paper, the loss function is defined as $\ell(y_i,\hat{y}_i)= \mathbbm{1}\{ y_i\neq \hat{y}_i  \}$.  In this formulation, a penalty term $\alpha\cdot\mathcal{C}$ can be introduced to balance the trade-off between training accuracy and model generalization, where $\mathcal{C}$ denotes the complexity of the decision tree model and $\alpha$ serves as the regularization coefficient. However, this paper focuses on optimizing the training process; thus, the penalty term is omitted in this formulation. The term will later be incorporated into numerical experiments to assess its impact on model performance.

For a sample index set $\mathcal{I}\subseteq[n]$ and feature $a$, let
$\mathcal{I}^a$ denote the indices in $\mathcal{I}$ ordered
nondecreasingly by $x_i^a$, and let
$u_1^a<\cdots<u_{n_a}^a$ be the $n_a$ distinct values of feature $a$
among these samples. Define $u_0^a=u_1^a-1$ and
$u_{n_a+1}^a=u_{n_a}^a+1$, and let
$\mathcal{K}^a:=\{0,\ldots,n_a\}$ index the $n_a+1$ candidate
partitions induced by feature $a$. For each $k\in\mathcal{K}^a$, define
$\beta_k^a:=(u_k^a+u_{k+1}^a)/2$. Since all thresholds in
$(u_k^a,u_{k+1}^a)$ induce the same partition, it suffices to consider
$\mathcal{B}^a:=\{\beta_k^a\mid k\in\mathcal{K}^a\}$ and
$\mathcal{B}:=\bigcup_{a\in[p]}\mathcal{B}^a$. For each fixed feature
$a$, define
$\zeta(k):=|\{i\in\mathcal{I}\mid x_i^a\leq\beta_k^a\}|$,
where the dependence on $a$ is suppressed for notational simplicity.
For $0\leq k_1\leq k_2\leq n_a$, we define
\begin{align}
\mathcal{I}_{[k_1,k_2]}^a
:=\{i\in\mathcal{I}\mid
\beta_{k_1}^a\leq x_i^a\leq\beta_{k_2}^a\}.
\end{align}
In particular, $\mathcal{I}_{[0,k]}^a$ and
$\mathcal{I}_{[k,n_a]}^a$ are the disjoint sample sets assigned to the
left and right children, respectively, by the split $\beta_k^a$.

%Define $\mathcal{K}^a := [n_a + 1]$ as the index set of distinct values, denoted as $u^a_1 < u^a_2 < \cdots < u^a_{n_a}$, with boundary points $u_0^{a} = u_1^a - 1$ and $u_{n_a+1}^{a} = u_{n_a}^{a} + 1$. Since all points within the interval $(u^a_{k-1}, u^a_{k})$ are equivalent for splitting, the splitting threshold is defined as $\beta^a_k:= \frac{u^a_{k-1} + u^a_{k}}{2}$ for each $k \in \mathcal{K}^a$. We define $\zeta:\mathcal{K}^a \rightarrow \mathcal{I}^a$ such that $\zeta(k)$ is the index of the first occurrence of the unique element $u^a_k$ in the sequence $(x^a_i)_{i \in \mathcal{I}^a}$. For the root node, it suffices to consider candidate trees with all splits located in the set $\mathcal{B}: = \{ \beta^a_k|a \in [p],  k\in\mathcal{K}^a \}$. Given a set $\mathcal{I}^a$ and two integers $k_1$ and $k_2$ such that $0\leq k_1 \leq k_2 \leq n_a$, we define
%\begin{align}
%\mathcal{I}^{a}_{[k_1,k_2]}:=\{ i\in \mathcal{I}^a \mid \beta^{a}_{k_1}\leq x^a_{i} \leq \beta^{a}_{k_2}  \}, 
%\end{align}
%as the set of sample indices $i$ for which the corresponding feature values $x^a_{i}$ fall within $[\beta^{a}_{k_1},\ \beta^{a}_{k_2}]$.

\subsection{A Hierarchical Root–Subtree Framework for Decision Tree Training}
We propose a hierarchical root-subtree framework for training decision trees based on the above notation. We denote $T: = [a, k, T_L, T_R]$, where $a$ is the feature selected at the root node, $k$ denotes the index of the root split among the candidate thresholds for feature $a$ (that is, the threshold is $\beta^{a}_k$), and $T_L$ and $T_R$ are the left and right subtrees, respectively. The root node typically exerts the most significant influence on the overall loss in a decision tree \citep{dwyer2007decision,shannon1999combining}, as it processes the entire dataset due to its position at the top of the data flow. Consequently, we isolate the parameters of the root node from the remainder of the tree for optimization through a root-level problem (\textit{RLP}). The optimal loss is subsequently defined as the sum of the losses from the two subtrees: $T_L$ and $T_R$, optimized in the child-subtree problem (\textit{CSP}). Then, the optimization problem for training a depth-$d$ tree $T$ can be denoted as:
\begin{align} 
\textit{V}_d (\mathcal{I}): =  \mathop{ \min}_{ a \in [p],\ k\in\mathcal{K}^a \atop T_L,T_R\in\mathcal{T}_{d-1}} L(\mathcal{I},[a,k,T_L,T_R])  =\min_{ a \in [p],\   k\in\mathcal{K}^a}  \left\{V_{d-1} (\mathcal{I}^{a}_{[0,k]})  + V_{d-1}(\mathcal{I}^{a}_{[k,n_{a}]})     \right\},  \label{bilevel}
\end{align}

as well as a hierarchical root-subtree form:
\begin{align} 
\mathop{\min}_{ a \in [p],\  k\in \mathcal{K}^a}\!\! \left\{\min_{T_L\in \mathcal{T}_{d-1}}\! \! L(\mathcal{I}^{a}_{[0,k]},T_L) \!+\!  \min_{T_R\in \mathcal{T}_{d-1}} \!\! L(\mathcal{I}^{a}_{[k,n_{a}]},T_R)  \right\} , \label{blobj}
\end{align}
where the samples split to the left and right subtrees are denoted by $\mathcal{I}^{a}_{[0,k]}$ and $\mathcal{I}^{a}_{[k,n_{a}]}$. 
In addition, we use a similar notation to define the optimal value when $a$ is fixed, or when $a$ and $k$ are both fixed. 
\begin{align} 
\textit{V}_d (\mathcal{I}, a) = \min_{  k\in \mathcal{K}^a}\textit{V}_d (\mathcal{I}, a, k) = & \min_{  k\in \mathcal{K}^a}  \left\{V_{d-1} (\mathcal{I}^{a}_{[0,k]})  + V_{d-1}(\mathcal{I}^{a}_{[k,n_{a}]}) \right\}.   \label{vd}
%\textit{V}_d (\mathcal{I}, a) := & \min_{  k\in \mathcal{K}^a}  \left\{V_{d-1} (\mathcal{I}^{a}_{[0,k]})  + V_{d-1}(\mathcal{I}^{a}_{[k,n_{a}]}) \right\} ,
 %\\
%\textit{V}_d (\mathcal{I}, a, k) := &  V_{d-1} (\mathcal{I}^{a}_{[0,k]})  + V_{d-1}(\mathcal{I}^{a}_{[k,n_{a}]}) . \label{vd}
 \end{align}

\section{A Branch-and-Reduce Method for Optimal Decision Trees}    \label{BP4ODT}
%The bilevel framework is explored in a BR method (see \cref{bbnp}). This section presents this approach for solving the upper-level problem (\textit{ULP}), assuming that the lower-level problem (\textit{LLP}) can be solved exactly and efficiently. The method for addressing LLPs is discussed in the next section. 
The hierarchical framework, which is explored in \texttt{BR} method (detailed in \Cref{bnb4odt}), forms the basis of the approach discussed in this section. Here, we present this approach for solving \textit{RLP}, under the key assumption that \textit{CSPs} can be solved exactly and efficiently. %The methodology for addressing the \textit{LLP} will be detailed in \cref{sec4}.

Within the \textit{RLP}, the decision variables are $a$ and $k$. The variable $a\in [p]$ can be readily enumerated given the typically limited number of features. However, the selection of $k \in \mathcal{K}^a$ is more complex, contingent upon the number of unique feature values $n_a$. For continuous features, $n_a$ can be as large as $n$, rendering the search space prohibitively extensive for exhaustive enumeration. To mitigate this challenge, we propose the \texttt{BR} method to determine the optimal $k$ in this section.

%\cref{bbnp} outlines the procedure of this method, denoted as BR($\mathcal{I},d$), which 
%A detailed analysis of this algorithm is provided in the subsequent subsection.
%This section introduces a BR method within the bilevel framework of \cref{vd} for ODTs. BR leverages an upper bounding alongside a range reduction strategy based on a special property of $V_d$ to reduce infeasible regions efficiently and obtain globally optimal solutions. Assuming that \textit{LLP} can be solved efficiently, we focus on the variables of \textit{ULP}. The variable $a$ can be enumerated as it has only $p$ options, while $k \in \mathcal{K}^a$ is determined by numerous distinct feature values, as a result, BR primarily concentrates on $k$.  

%We first introduce a bilevel BR algorithm (the function is denoted by \texttt{Bi}($\mathcal{I},d$) designed to learn globally optimal depth-$d$ decision trees on datasets containing continuous feature values. A detailed analysis of this algorithm is provided in the subsequent subsection. \cref{bbnp} outlines the procedure of this method, which iteratively refines candidate splits using a bisection branching strategy to minimize the objective function within the decision tree framework. 

\subsection{Upper Bound, Lower Bound, Reduction, and Branching Strategy}
The \texttt{BR} method involves four primary steps: Upper Bound, Lower Bound, Reduction Strategy, and Branching Strategy. We now present their details in a general form. Consider a node within the \texttt{BR} process, represented by the set of potential split indices %$\mathcal{K}_i = \{k \mid l \leq k \leq m\}$ ($l,m\in[n_a]$)
$\mathcal{K}_i=\{k\in\mathcal{K}^a\mid l\leq k\leq m\}$,
where $0\leq l\leq m\leq n_a$.
In the following description, we specifically select the midpoint $\bar{k}=\lceil \frac{l+m}{2}\rceil$ (the midpoint of $\mathcal{K}_i$) for branching.

\paragraph{Upper Bound} The upper bound $U$ denotes the best loss among previously evaluated splits, serving as a historical record throughout the iterative search within the feasible set. It is updated iteratively after evaluating a selected index $\bar{k}$ by:
\begin{align}\label{ub}
     U\leftarrow \min\{U,\ V_d(\mathcal{I},a,\bar{k}) \},\quad \text{where}\ V_d(\mathcal{I},a,\bar{k})=\textit{V}_{d-1} (\mathcal{I}^{a}_{[0,\bar{k}]}) + \textit{V}_{d-1} (\mathcal{I}^{a}_{[\bar{k},n_{a}]}).
\end{align}
% and is subsequently replaced by $\textit{V}_d(\mathcal{I}, a,\bar{k})$ if the latter yields a smaller value.

\paragraph{Boundary Analysis} Define $L(\mathcal{I})$ as the optimal loss of the dataset indexed by $\mathcal{I}$ when training a DT of a fixed depth:
      \begin{align} \label{sloss}
L(\mathcal{I}) = \min_{T\in\mathcal{T}}L(\mathcal{I},T).
    \end{align}
The following lemma bounds the loss function $L$ in \Cref{loss}.    
\begin{lemma} \label{app1}
 For two sample index sets $\mathcal{I}_1$ and $\mathcal{I}_2$ with $\mathcal{I}_1 \subseteq \mathcal{I}_2$, let $n_1$ and $n_2$ be the element numbers of $\mathcal{I}_1$ and $\mathcal{I}_2$, where $n_2\geq n_1$. Then $$0\leq L(\mathcal{I}_2)- L(\mathcal{I}_1)\leq n_2-n_1.$$
\end{lemma}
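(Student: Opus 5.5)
The plan is to prove the two inequalities separately, working directly from the definition of $L(\mathcal{I})$ in \Cref{sloss} as a minimum over a fixed family $\mathcal{T}$ of trees. The key observation is that $L(\mathcal{I},T)=\sum_{i\in\mathcal{I}}\ell(y_i,T(x_i))$ is additive over samples, with each term in $\{0,1\}$. It is also important that the tree family $\mathcal{T}$ does not depend on the sample set: any tree mapping $[0,1]^p\to\mathcal{Y}$ can be evaluated on any subset of samples. Thresholds chosen from $\mathcal{B}$ for $\mathcal{I}_2$ are still valid splitting rules on $\mathcal{I}_1$, so no issue arises from the data-dependent candidate set.

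\emph{Lower bound.} Let $T_2^*$ attain $L(\mathcal{I}_2)$; this minimum exists because the family of induced partitions of a finite sample set is finite. Since $\mathcal{I}_1\subseteq\mathcal{I}_2$ and the losses are nonnegative,
\begin{align*}
L(\mathcal{I}_1)\le L(\mathcal{I}_1,T_2^*)=L(\mathcal{I}_2,T_2^*)-\sum_{i\in\mathcal{I}_2\setminus\mathcal{I}_1}\ell(y_i,T_2^*(x_i))\le L(\mathcal{I}_2).
\end{align*}
This gives $0\le L(\mathcal{I}_2)-L(\mathcal{I}_1)$.

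\emph{Upper bound.} Let $T_1^*$ attain $L(\mathcal{I}_1)$ and evaluate it on $\mathcal{I}_2$. Because $\ell\le 1$,
\begin{align*}
L(\mathcal{I}_2)\le L(\mathcal{I}_2,T_1^*)=L(\mathcal{I}_1)+\sum_{i\in\mathcal{I}_2\setminus\mathcal{I}_1}\ell(y_i,T_1^*(x_i))\le L(\mathcal{I}_1)+(n_2-n_1).
\end{align*}
Here we use $|\mathcal{I}_2\setminus\mathcal{I}_1|=n_2-n_1$, which holds because $\mathcal{I}_1\subseteq\mathcal{I}_2$.

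The only potential obstacle is the upper bound. There one must justify that the tree optimal for $\mathcal{I}_1$ is feasible for $\mathcal{I}_2$ even when the threshold grid $\mathcal{B}$ is defined relative to the sample set. I would handle this by noting that $\mathcal{T}$ is the set of all depth-$d$ trees with arbitrary real thresholds, so feasibility is immediate. The restriction to midpoints is only a w.l.o.g. reduction, and it does not change optimal values.
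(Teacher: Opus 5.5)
Your proof is correct and follows essentially the same argument as the paper. For the lower bound you use nonnegativity of $\ell$; the paper phrases this as comparing the two minima directly rather than evaluating at a minimizer $T_2^*$, which is equivalent. For the upper bound you evaluate the $\mathcal{I}_1$-optimal tree $T_1^*$ on $\mathcal{I}_2$ and bound the $n_2-n_1$ extra terms by $1$ each, exactly as the paper does. Your remarks on the existence of minimizers and on the feasibility of $T_1^*$ for $\mathcal{I}_2$ make explicit two points the paper leaves implicit.
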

\begin{proof}
     From the formulation of \Cref{loss}, we have:
     \begin{align}
 L(\mathcal{I}_1) 
 = \min_{T\in\mathcal{T}}\sum_{i\in\mathcal{I}_1}\ell(y_i,T(x_i))
 &\leq  \min_{T\in\mathcal{T}}\sum_{i\in\mathcal{I}_1\cup\{\mathcal{I}_2\backslash\mathcal{I}_1\}}\ell(y_i,T(x_i))
 \nonumber \\
%& \geq \min_{T_3\in\mathcal{T}_d}\{\sum_{i\in\mathcal{I}_2}\ell(y_i,T_3(x_i)) - \sum_{i\in\mathcal{I}_1}\ell(y_i,T_3(x_i)) \}  + \min_{T_2\in\mathcal{T}_d}\sum_{i\in\mathcal{I}_2}\ell(y_i,T_2(x_i))\nonumber\\
&= \min_{T\in\mathcal{T}}\sum_{i\in\mathcal{I}_2}\ell(y_i,T(x_i)) =L(\mathcal{I}_2).
     \end{align}
 So, we have $L(\mathcal{I}_1)\leq L(\mathcal{I}_2)$, which implies that prediction errors monotonically increase with sample numbers. Suppose the optimal tree for $\mathcal{I}_1$ is $T^*_1$, and for $\mathcal{I}_2$ we have
\begin{align}
 L(\mathcal{I}_2)&\leq L(\mathcal{I}_2,T^*_1) = \sum_{i\in\mathcal{I}_2}\ell(y_i,T^*_1(x_i)) \nonumber\\
 % &= \sum_{i\in\mathcal{I}_1}\{\ell(y_i,T^*_1(x_i)) + \ell(y_i,T^*_1(x_i)) \}\nonumber\\
 &=\sum_{i\in\mathcal{I}_1}\ell(y_i,T^*_1(x_i)) + \sum_{i\in\mathcal{I}_2\backslash\mathcal{I}_1}\ell(y_i,T^*_1(x_i))  \nonumber\\
 &=L(\mathcal{I}_1) + \sum_{i\in\mathcal{I}_2\backslash\mathcal{I}_1}\ell(y_i,T^*_1(x_i)),
\end{align}
where the first inequality is because $L(\mathcal{I}_2)$ is the optimal value. Then we get:
\begin{align}
 L(\mathcal{I}_2) - L(\mathcal{I}_1) &\leq \sum_{i\in\mathcal{I}_2\backslash\mathcal{I}_1}\ell(y_i,T^*_1(x_i)) \leq \sum_{i\in\mathcal{I}_2\backslash\mathcal{I}_1} \textbf{1}
 = n_2 - n_1.
\end{align}
Now we have $0\leq L(\mathcal{I}_2)- L(\mathcal{I}_1)\leq n_2-n_1$, which bounds the loss function $L$.
\end{proof}

\paragraph{Lower Bound} 
The lower bound is based on a Lipschitz-type condition, which is illustrated by:

\begin{lemma}\label{conti}
For any $\mathcal{I} \subseteq [n]$ and $d \geq 1$, let $k_1$ and $k_2$ be two split indices corresponding to feature $a$. Then we have $\lvert V_d(\mathcal{I},a,k_1) - V_d(\mathcal{I},a,k_2) \rvert \leq \lvert \zeta(k_1)-\zeta(k_2)\rvert.$
\end{lemma}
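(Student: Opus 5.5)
Without loss of generality assume $k_1\le k_2$, so that $\zeta(k_1)\le\zeta(k_2)$ because $\zeta$ is nondecreasing in $k$. By \Cref{vd},
\[
V_d(\mathcal{I},a,k)=V_{d-1}(\mathcal{I}^a_{[0,k]})+V_{d-1}(\mathcal{I}^a_{[k,n_a]}).
\]
I will compare the left-subtree terms and the right-subtree terms separately, using \Cref{app1} with the depth fixed at $d-1$, so that $L(\cdot)=V_{d-1}(\cdot)$.

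\textbf{Step 1: nesting and cardinalities.} The left sets are nested: $\mathcal{I}^a_{[0,k_1]}\subseteq\mathcal{I}^a_{[0,k_2]}$. Their sizes are $\zeta(k_1)$ and $\zeta(k_2)$, since the left child of split $k$ contains exactly the samples with $x_i^a\le\beta_k^a$. The right sets are nested the other way: $\mathcal{I}^a_{[k_2,n_a]}\subseteq\mathcal{I}^a_{[k_1,n_a]}$. Their sizes are $|\mathcal{I}|-\zeta(k_2)$ and $|\mathcal{I}|-\zeta(k_1)$, because no sample value equals a threshold $\beta$ and so the left and right sets partition $\mathcal{I}$.

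\textbf{Step 2: apply \Cref{app1}.} Set $\Delta:=\zeta(k_2)-\zeta(k_1)\ge0$. Then
\[
0\le V_{d-1}(\mathcal{I}^a_{[0,k_2]})-V_{d-1}(\mathcal{I}^a_{[0,k_1]})\le\Delta,
\qquad
0\le V_{d-1}(\mathcal{I}^a_{[k_1,n_a]})-V_{d-1}(\mathcal{I}^a_{[k_2,n_a]})\le\Delta.
\]

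\textbf{Step 3: combine.} The difference $V_d(\mathcal{I},a,k_2)-V_d(\mathcal{I},a,k_1)$ equals the first (left) difference minus the second (right) difference. It therefore lies in $[-\Delta,\Delta]$, which is exactly $\lvert V_d(\mathcal{I},a,k_1)-V_d(\mathcal{I},a,k_2)\rvert\le\lvert\zeta(k_1)-\zeta(k_2)\rvert$. Note that the two changes have opposite signs, so they partially cancel. This is why the bound is $\Delta$ and not $2\Delta$.

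\textbf{Main obstacle.} The argument rests on the bookkeeping in Step 1. I must confirm that the closed-interval notation $\mathcal{I}^a_{[k_1,k_2]}$ really gives disjoint left and right children, with no sample counted in both. This holds because thresholds are midpoints strictly between distinct values, together with the boundary conventions for $\beta_0$ and $\beta_{n_a}$. I must also confirm that \Cref{app1} applies to $V_{d-1}$, including the depth-0 case $d=1$, which is a leaf predicting the majority class. The proof of \Cref{app1} only uses that the minimization is over a fixed family of trees, so it covers $\mathcal{T}_{d-1}$ for every $d\ge1$.
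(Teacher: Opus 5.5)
Your proposal is correct and follows the paper's proof essentially step for step. You assume $k_1\le k_2$ and write $V_d(\mathcal{I},a,k_2)-V_d(\mathcal{I},a,k_1)$ as the nonnegative increase of the left-child value minus the nonnegative decrease of the right-child value, bounding each by $\zeta(k_2)-\zeta(k_1)$ via Lemma~\ref{app1} and the nesting of the child sample sets; your extra checks on the disjointness of the children and on the depth-$0$ case make explicit what the paper leaves implicit.
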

\begin{proof}
Without loss of generality, suppose $0\leq k_1\leq k_2\leq n_a$, and let
$L(\mathcal I)$ be defined as in \Cref{sloss}. From \Cref{bilevel}, we have
\begin{align}
V_d(\mathcal I,a,k_1)
&=L(\mathcal I^a_{[0,k_1]})
+L(\mathcal I^a_{[k_1,n_a]}),\\
V_d(\mathcal I,a,k_2)
&=L(\mathcal I^a_{[0,k_2]})
+L(\mathcal I^a_{[k_2,n_a]}).
\end{align}
Subtracting the two expressions gives
\begin{align}
&V_d(\mathcal I,a,k_2)-V_d(\mathcal I,a,k_1) \nonumber\\
&=
\left\{
L(\mathcal I^a_{[0,k_2]})
-L(\mathcal I^a_{[0,k_1]})
\right\}
-
\left\{
L(\mathcal I^a_{[k_1,n_a]})
-L(\mathcal I^a_{[k_2,n_a]})
\right\}.
\label{eq:value-difference}
\end{align}

Since
$\mathcal I^a_{[0,k_1]}\subseteq\mathcal I^a_{[0,k_2]}$
and
$\mathcal I^a_{[k_2,n_a]}\subseteq\mathcal I^a_{[k_1,n_a]}$,
Lemma~\ref{app1} gives
\begin{align}
0
&\leq L(\mathcal I^a_{[0,k_2]}) -L(\mathcal I^a_{[0,k_1]})
\leq \zeta(k_2)-\zeta(k_1),\nonumber \\
0&\leq  L(\mathcal I^a_{[k_1,n_a]})-L(\mathcal I^a_{[k_2,n_a]})
\leq \zeta(k_2)-\zeta(k_1).\nonumber
\end{align}
Using the nonnegativity of the second difference in
\Cref{eq:value-difference}, we obtain
\begin{align}
V_d(\mathcal I,a,k_2)-V_d(\mathcal I,a,k_1)
\leq \zeta(k_2)-\zeta(k_1). \nonumber
\end{align}
Similarly, using the nonnegativity of the first difference,
\begin{align}
V_d(\mathcal I,a,k_2)-V_d(\mathcal I,a,k_1)
\geq -\bigl(\zeta(k_2)-\zeta(k_1)\bigr). \nonumber
\end{align}
Combining these inequalities yields
\begin{align}
\left|
V_d(\mathcal I,a,k_2)-V_d(\mathcal I,a,k_1)
\right|
\leq
\zeta(k_2)-\zeta(k_1)
=
\left|\zeta(k_2)-\zeta(k_1)\right|.
\end{align}
The result follows by symmetry.
\end{proof}

 From Lemma~\ref{conti}, for a given loss of $\bar{k}$, the lower bound of $\mathcal{K}_i$ is 
\begin{align}\label{lb}
  LB:=  \min\{  V_d(\mathcal{I},a,\bar{k}) - \lvert \zeta(\bar{k})-\zeta(l)\rvert  ,  V_d(\mathcal{I},a,\bar{k}) - \lvert \zeta(\bar{k})-\zeta(m)\rvert\}
\end{align}

\paragraph{Reduction Strategy}
Given $V_d(\mathcal{I},a,\bar{k})$ and an upper bound
$U\leq V_d(\mathcal{I},a,\bar{k})$, define
$\delta(\bar{k}):=V_d(\mathcal{I},a,\bar{k})-U$ and $\Delta_i:=\left\{k\in\mathcal{K}_i\mid|\zeta(k)-\zeta(\bar{k})|\leq\delta(\bar{k})
\right\}$. The following lemma shows that no candidate in $\Delta_i$ can strictly improve $U$; hence, we update $\mathcal{K}_i\leftarrow\mathcal{K}_i\setminus\Delta_i$.

\begin{lemma}
\label{prune}
Under the above definitions, for any $\mathcal{I}\subseteq[n]$ and $d\geq1$, every
$k\in\Delta_i$ satisfies
$V_d(\mathcal{I},a,k)\geq U$. Therefore, removing $\Delta_i$ while retaining the incumbent solution preserves the global optimum.
\end{lemma}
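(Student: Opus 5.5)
The argument is a direct application of the Lipschitz-type estimate in Lemma~\ref{conti}. Fix any $k\in\Delta_i$. Since both $k$ and $\bar{k}$ are split indices for the same feature $a$ on the same sample set $\mathcal{I}$, Lemma~\ref{conti} gives
\begin{align*}
V_d(\mathcal{I},a,k)\;\geq\; V_d(\mathcal{I},a,\bar{k})-\lvert\zeta(k)-\zeta(\bar{k})\rvert .
\end{align*}
By the definition of $\Delta_i$, we have $\lvert\zeta(k)-\zeta(\bar{k})\rvert\leq\delta(\bar{k})=V_d(\mathcal{I},a,\bar{k})-U$. Substituting this into the inequality yields $V_d(\mathcal{I},a,k)\geq V_d(\mathcal{I},a,\bar{k})-\bigl(V_d(\mathcal{I},a,\bar{k})-U\bigr)=U$, which is the first claim. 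The hypothesis $U\leq V_d(\mathcal{I},a,\bar{k})$ only guarantees $\delta(\bar{k})\geq 0$, so $\Delta_i$ is well defined and contains $\bar{k}$ itself. It plays no other role in the inequality.

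For the second claim, I will argue about the optimum of the root-level problem restricted to feature $a$. Let $k^\ast\in\mathcal{K}^a$ be any minimizer of $V_d(\mathcal{I},a,\cdot)$. There are two cases.
\begin{itemize}
\item If $k^\ast\notin\Delta_i$, it survives the update $\mathcal{K}_i\leftarrow\mathcal{K}_i\setminus\Delta_i$, so the optimum is still reachable in the remaining search space.
\item If $k^\ast\in\Delta_i$, then by the first part $V_d(\mathcal{I},a,\bar{k}_U)=U\leq V_d(\mathcal{I},a,k^\ast)$, where $\bar{k}_U$ is the already-evaluated split achieving $U$ (such a split exists because, by \eqref{ub}, $U$ is always the loss of some previously evaluated split). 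Hence the retained incumbent is itself optimal, with value $U=V_d(\mathcal{I},a,k^\ast)$.
\end{itemize}
In either case, the minimum of $V_d(\mathcal{I},a,\cdot)$ over the remaining candidates together with the incumbent equals the true minimum. Taking the minimum over $a\in[p]$ preserves the global optimum of \eqref{bilevel}.

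There is no real obstacle here. The only subtle point is bookkeeping: $U$ must be attained by a stored split, which holds by construction of the upper-bound update. I would also note that the argument uses $\zeta$ for the same feature $a$, consistent with Lemma~\ref{conti}.
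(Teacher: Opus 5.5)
Your proof is correct and is essentially the paper's argument: the paper assumes some $k'\in\Delta_i$ has $V_d(\mathcal{I},a,k')<U$ and contradicts Lemma~\ref{conti}, while you apply the same one-sided inequality directly, which, as you note, makes the hypothesis $U\leq V_d(\mathcal{I},a,\bar{k})$ unnecessary for the first claim. One small bookkeeping fix: the incumbent value $U$ need not come from a split of the same feature $a$, since it can come from the \texttt{CART} initialization or an earlier feature, so in your second case you should conclude only that the retained feasible incumbent satisfies $U\leq V_d(\mathcal{I},a,k^\ast)$ (not $U=V_d(\mathcal{I},a,k^\ast)$), which still preserves the global optimum of \Cref{bilevel}.
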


%\paragraph{Reduction Strategy} Given $V_d(\mathcal{I},a,\bar{k})$, define$\delta(\bar{k}):=V_d(\mathcal{I},a,\bar{k})-U$ and the reduction strategy aims to reduce the search space by removing $\Delta_i:=\left\{k\in\mathcal{K}_i\mid|\zeta(k)-\zeta(\bar{k})|\leq\delta(\bar{k})\right\}$. The following lemma shows that no candidate in $\Delta_i$ can strictly improve the incumbent $U$. Consequently, these candidates can be safely removed by updating $\mathcal{K}_i\leftarrow\mathcal{K}_i\setminus\Delta_i$. %The following lemma establishes a reduction strategy in which the candidate set is updated as $\mathcal{K}_i \gets \mathcal{K}_i \setminus \Delta$.
%\begin{lemma} \label{prune}
%For any $\mathcal{I}\subseteq[n]$ and $d\geq 1$, let $V_d(\mathcal{I}, a,\bar{k})$ be a loss value of Problem (\ref{bilevel}) corresponding to the split rule $\beta^a_{\bar{k}}$, and an upper bound $U$ (assume $U\leq V_d(\mathcal{I}, a,\bar{k})$) of this problem. Then, for any $ \lvert \zeta(\bar{k}) - \zeta(k) \rvert\leq\delta(\bar{k})$, we have $V_d(\mathcal{I},a,k) \geq U$; so, no split index in $\Delta_i$ can yield an objective value strictly smaller than $U$; consequently, removing $\Delta_i$ preserves the global optimal objective value.
%\end{lemma}
\begin{proof}
    Suppose there is a split $k'\in\Delta_i$ such that $V_d(\mathcal{I},a,k') < U$. By Lemma~\ref{conti}, we have 
    \begin{align} \label{ctrd}
    \lvert V_d(\mathcal{I},a,\bar{k}) - V_d(\mathcal{I},a, k') \rvert \leq \lvert \zeta(\bar{k})-\zeta(k')\rvert\leq \delta(\bar{k})= V_d(\mathcal{I},a,\bar{k}) - U.
    \end{align}
   However, since $V_d(\mathcal{I}, a, k') < U\leq  V_d(\mathcal{I},a,\bar{k})$, we have 
\begin{align}
\lvert  V_d(\mathcal{I},a,\bar{k}) - V_d(\mathcal{I},a, k') \rvert=V_d(\mathcal I,a,\bar{k})-V_d(\mathcal I,a,k')>  V_d(\mathcal{I},a,\bar{k}) - U ,%>0,
\end{align}
which is a contradiction of \Cref{ctrd}. Therefore, $V_d(\mathcal{I},a,k)\geq U$ for every $k\in\Delta_i$. Since the incumbent solution with value $U$ is retained, removing $\Delta_i$ preserves the global optimum.
\end{proof}

\paragraph{Branching Strategy}  %The branching strategy aims to enhance reduction efficiency by leveraging the gap between the upper bound and the loss of the selected split. 
To facilitate reduction, the branching strategy bisects the candidate split set at its \textbf{midpoint} $\bar{k}$, ensuring a balanced and effective reduction \citep{ryoo1996branch,tawarmalani2004global}. We can divide $\mathcal{K}_i$ by the midpoint $\bar{k}$ into two subsets: $\mathcal{K}_L$ and $\mathcal{K}_R$ after reduction.% as below.% The branched sets are given by:
%\begin{align} 
%\mathcal{K}_L =\{k\in \mathcal{K}_i \mid l\leq\zeta(k)\leq \zeta(\bar{k})-\delta(\bar{k}) \}, \quad
%\mathcal{K}_R =\{k\in \mathcal{K}_i \mid \zeta(\bar{k})+\delta(\bar{k})\leq \zeta(k) \leq m\}. \label{kar}
%\end{align}

%both $\mathcal{K}_L$ and $\mathcal{K}_R$ will be added to $\mathbb{K}$ if they are not empty sets. The branching and reduction processes are then repeated until all the branched sets are empty. 

%The branch-and-reduce process is then recursively applied to $\mathbb{K}_L$ and $\mathbb{K}_R$ until all the branched sets are empty. 

%\textbf{Recursion:} The BR function consists of two recursive processes. The first one, \texttt{Bi}, computes subtrees iteratively in a top-down manner, beginning at depth $d-1$ and progressing to depth $1$. The second process is the BR function itself, which recursively evaluates potential splits during the bisection branching procedure, effectively narrowing the feasible range of candidate splits. 

\subsection{The Procedure of Branch-and-Reduce Method} \label{bnb4odt}
Now, we are ready to introduce the procedure of our \texttt{BR} method. This method is designed to learn global optimal depth-$d$ decision trees.
The process begins with an initial estimate (e.g., using \texttt{CART}) of the upper bound $U$ and the dataset $\mathcal{I}$. For $ a\in [p]$, the algorithm sorts the samples by their feature values to obtain $\mathcal{I}^a$ and constructs the candidate split set $\mathcal{B}^a$. The corresponding index set $\mathcal{K}^a$ contains all split candidates $\beta^a_k$ for $k \in \mathcal{K}^a$, which is then passed to the main loop to obtain the optimal $k$.

\begin{algorithm}[h]
\begin{algorithmic}[1]
 \caption{Branch-and-reduce method (\texttt{BR})} 
 \label{bnr}
\FUNCTION{\texttt{ExactTreeSearch}($\mathcal{I},d$):  }
\STATE  Initialize $U$ and $T$ by \texttt{CART};
\FOR{$a\in [p] $} 
   \STATE Sort $\mathcal{I}$ to obtain $\mathcal{I}^a$, $\mathcal{K}^a$, and set $\mathbb{K}\leftarrow\{\mathcal{K}^a\}$;
   \STATE $U,T\leftarrow \texttt{ExactSplitSearch}(U,T,\mathcal{I}^a, a ,d ,\mathbb{K})$;
   \COMMENT{\textit{solve for optimal trees}}
\ENDFOR
\ENDFUNCTION
\STATE \textbf{return:}  Optimal loss $U$ and decision tree $T$.\\
 \algrule
   \FUNCTION{\texttt{ExactSplitSearch}($U,T,\mathcal{I}^a,a,d,\mathbb{K}$):}
   \STATE Initialize the iteration index $i = 0$, $LB=0$;
   \WHILE{$\mathbb{K}\neq \emptyset$} 
    \NODESELECT {}
    \STATE Select $\mathcal{K}_i$ from $\mathbb{K}$, set $\mathbb{K}\leftarrow\mathbb{K}\backslash\{\mathcal{K}_i\}$, and update $i \leftarrow i+1$;
    \ENDNODESELECT 
    \UPPERBOUND {}
    \STATE Select the midpoint $\bar{k} \in \mathcal{K}_i$, obtain $\mathcal{I}^{a}_{[0,\bar{k}]}$, $\mathcal{I}^{a}_{[\bar{k},n_a]}$;     
    \STATE $(\textit{V}_{d-1} (\mathcal{I}^{a}_{[0,\bar{k}]}),\ \ T_L)\leftarrow   \texttt{ExactTreeSearch}(\mathcal{I}^{a}_{[0,\bar{k}]}, d-1)$,
    \STATE 
     $(\textit{V}_{d-1} (\mathcal{I}^{a}_{[\bar{k},n_{a}]}),T_R)\leftarrow     \texttt{ExactTreeSearch}(\mathcal{I}^{a}_{[\bar{k},n_{a}]},d-1)$;
  %  \COMMENT{\color{cyan}\textit{recursion for subtrees}}
    \STATE Evaluate loss $\textit{V}_d (\mathcal{I}, a, \bar{k}) \!=\!\textit{V}_{d-1} (\mathcal{I}^{a}_{[0,\bar{k}]}) \!+\! \textit{V}_{d-1} (\mathcal{I}^{a}_{[\bar{k},n_{a}]})$;
    \IF {$ \textit{V}_d(\mathcal{I},a,\bar{k})< U $}
   \STATE $U\leftarrow\textit{V}_d(\mathcal{I},a,\bar{k})$, update $T\leftarrow[a,\bar{k},T_L,T_R ]$; %$T\leftarrow [a,\bar{k},T_L,T_R ]$; 
   \ENDIF
   \ENDUPPERBOUND
    \STATE \textbf{lower bound}   
    \STATE \quad Calculate $LB$ by \Cref{lb}, and obtain the optimality gap $U-LB$;
    \IF{$U-LB\leq0$}
     \STATE  \quad \textbf{continue}   
     \COMMENT{\textit{prune the current region}}
    \ENDIF
   \REDUCING{}  
     \STATE $\delta (\bar{k})\leftarrow V_d(\mathcal{I},a,\bar{k}) - U$, and get $\Delta_i=\{ k\in\mathcal{K}_i :|\zeta(k)-\zeta(\bar{k}) |\leq \delta(\bar{k}) \}$,
     \ENDREDUCING
     \BRANCHING{}  
     \STATE Divide the $\mathcal K_i$ into $\mathcal K_L=  \{k\in\mathcal K_i\setminus\Delta_i:k<\bar k\}$, and $\mathcal K_R=  \{  k\in\mathcal K_i\setminus\Delta_i:  k>\bar k  \}$;
     \STATE Update $\mathbb{K}\leftarrow\mathbb{K}\cup\{\mathcal{K}_L\}$, if $\mathcal{K}_L \neq \emptyset$, \quad $\mathbb{K}\leftarrow\mathbb{K}\cup\{\mathcal{K}_R\}$, if $\mathcal{K}_R \neq \emptyset$;  
    \ENDBRANCHING
   \ENDWHILE
   \ENDFUNCTION
   \STATE \textbf{return:}  Optimal loss $U$ and decision tree $T$.
\end{algorithmic}
\end{algorithm}

The main loop of the \texttt{BR} method solves for $\textit{V}_d (\mathcal{I}, a)$ with fixed $a$. At each iteration, the steps of this loop are as follows: \textbf{(i)} A \texttt{BR} node containing the set $\mathcal{K}_i$ is selected from the entire feasible region of $k$, a step referred to as \texttt{BR} node selection. \textbf{(ii)} The midpoint candidate split $\bar{k}$ is then chosen from $\mathcal{K}_i$. The algorithm evaluates $\textit{V}_d(\mathcal{I}, a,\bar{k})$, and updates the upper bound if the new solution improves upon the current best result. \textbf{(iii)} Computing $\delta(\bar{k})$ by the updated upper bound $U$, the reduction strategy narrows down the set $\mathcal{K}_i$. \textbf{(iv)} Once the reduced set is determined, branching occurs on the remaining region. While the lower bound from \Cref{lb} enables the calculation of the optimality gap and termination based on a specified tolerance, to ensure a fair comparison with state-of-the-art methods, the subsequent method employs a complete search strategy with a tolerance of $0$. Therefore, we consider terminating the main loop when $\mathbb{K} = \emptyset$. We can easily prove that \Cref{bnr} converges to the \textbf{global optimum} of \Cref{bilevel} based on Lemmas~\ref{conti} and \ref{prune}. 
\\
\\
\begin{theorem}
    \Cref{bnr} converges to the global optimum of \Cref{bilevel}.
\end{theorem}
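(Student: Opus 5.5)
We argue by induction on the depth $d$, and for fixed $d$ we argue about a single call of \texttt{ExactSplitSearch} with fixed feature $a$. The induction hypothesis is that \texttt{ExactTreeSearch}$(\mathcal{J},d-1)$ terminates and returns $V_{d-1}(\mathcal{J})$ together with a tree attaining it, for every $\mathcal{J}\subseteq[n]$. The base case is depth $0$ (or depth $1$, if the recursion bottoms out there). It is trivial: a leaf predicting the majority class is optimal, and a depth-1 search over finitely many splits is exhaustive. Under this hypothesis, every evaluation in the upper-bound step computes the exact value $V_d(\mathcal{I},a,\bar{k})$. So the inner loop is a correct branch-and-reduce over the finite set $\mathcal{K}^a$ with exact oracle values.

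\textbf{Termination.} Each iteration removes some $\mathcal{K}_i$ from $\mathbb{K}$ and adds at most the two sets $\mathcal{K}_L,\mathcal{K}_R$. Both are strict subsets of $\mathcal{K}_i$, since neither contains $\bar{k}$. The sets in $\mathbb{K}$ are pairwise disjoint subsets of the finite set $\mathcal{K}^a$. Hence the potential $\sum_{\mathcal{K}\in\mathbb{K}}|\mathcal{K}|$ strictly decreases at every iteration and the loop terminates. The outer loop over $a\in[p]$ is finite, and the recursion depth is $d$, so the whole procedure terminates.

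\textbf{Correctness.} For a fixed $a$, we maintain the following invariant: every $k\in\mathcal{K}^a$ either (i) lies in some set of $\mathbb{K}$, or (ii) satisfies $V_d(\mathcal{I},a,k)\ge U$ for the current $U$. Initially (i) holds for every $k$, and $U$ only decreases, so once (ii) holds it persists. In an iteration processing $\mathcal{K}_i$, each $k\in\mathcal{K}_i$ falls into one of four cases:
\begin{itemize}
\item $k=\bar{k}$: it was evaluated, and the update gives $U\le V_d(\mathcal{I},a,\bar{k})$.
\item $k\in\Delta_i$: Lemma~\ref{prune} gives $V_d(\mathcal{I},a,k)\ge U$. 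Its hypothesis $U\le V_d(\mathcal{I},a,\bar{k})$ holds right after the update.
\item The region is discarded because $U-LB\le 0$: Lemma~\ref{conti} gives $V_d(\mathcal{I},a,k)\ge V_d(\mathcal{I},a,\bar{k})-|\zeta(k)-\zeta(\bar{k})|$. Since $\zeta$ is monotone in $k$ and $l\le k\le m$, we have $|\zeta(k)-\zeta(\bar{k})|\le\max\{|\zeta(l)-\zeta(\bar{k})|,|\zeta(m)-\zeta(\bar{k})|\}$. Therefore $V_d(\mathcal{I},a,k)\ge LB\ge U$.
\item Otherwise $k$ is put into $\mathcal{K}_L$ or $\mathcal{K}_R$.
\end{itemize}
At termination $\mathbb{K}=\emptyset$, so $U\le V_d(\mathcal{I},a,k)$ for all $k$. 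Since $U$ is always the value of an actual evaluated tree $T$ (or the \texttt{CART} initializer, also a feasible depth-$d$ tree), we get $U= \min(U_{\text{prev}},V_d(\mathcal{I},a))$.

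Threading $U$ through all features $a\in[p]$ then yields $U=\min_a V_d(\mathcal{I},a)=V_d(\mathcal{I})$, which is the optimum of \Cref{bilevel}, attained by the returned $T$. This closes the induction.

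The main obstacle is the bookkeeping in the invariant. Two points need care. First, the lower-bound pruning step uses $LB$ computed from endpoints, so we need monotonicity of $\zeta$ in $k$ to extend the bound to interior points. Second, the reduction uses the updated $U$, which is exactly what makes Lemma~\ref{prune}'s hypothesis valid. A minor detail is that the \texttt{CART} initial tree must be a feasible depth-$d$ tree, so that $U$ is always achieved and never falls below the true optimum.
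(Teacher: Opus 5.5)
Your proof is correct and follows the same route as the paper: induction on depth, where the inductive hypothesis makes each evaluated $V_d(\mathcal{I},a,\bar{k})$ exact, and Lemma~\ref{prune} certifies that reduction never discards a split that could strictly improve the incumbent. Your version is more complete than the paper's: you give an explicit termination argument, you handle the $U-LB\le 0$ pruning step via Lemma~\ref{conti} and the monotonicity of $\zeta$ (the paper's proof does not mention this step), and you note that the \texttt{CART} initializer is a feasible tree, so the final incumbent equals $V_d(\mathcal{I})$.
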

\begin{proof}
Since \Cref{bilevel} has a hierarchical structure, we prove the result by induction on the tree depth. We begin with the case of a depth-1 tree (including \texttt{R-CART}), where the \textit{CSP} corresponds to optimizing a constant fit. To establish the optimality of \Cref{bnr} for a depth-1 tree, it suffices to show that the reduction strategy does not exclude any optimal solution, as the algorithm without reduction effectively performs an exhaustive enumeration. Lemma~\ref{prune} demonstrates that no candidate that can strictly improve the incumbent is removed from the reduced set $\Delta$; therefore, pruning preserves the global optimal objective value.

%the optimal split is never removed from the reduced set $\Delta$, thereby ensuring that \Cref{bnr} converges to the optimal solution of the \textit{CSP}. 

Next, we assume that \Cref{bnr} converges to the global optimal solution for a depth-$(d-1)$ tree. For a depth-$d$ tree, each \textit{CSP} subproblem corresponds to optimizing a depth-$(d-1)$ tree, which, by assumption, is solved to global optimality. The \textit{RLP} follows the same proof structure as the depth-1 case. Consequently, by induction, \Cref{bnr} converges to the global optimum of \Cref{bilevel}.
%Since \cref{bilevel} is a bilevel programming problem, we discuss both \textit{ULP} and \textit{LLP}. We first consider a depth-1 tree (as well as the \texttt{R-CART}), where \textit{LLP} is an optimization problem of the optimal constant fit. To prove the optimality of \cref{bnr} for a depth-1 tree is to prove that the reduction strategy does not exclude any optimal solution, as the algorithm without a reduction strategy is an enumeration. \cref{prune} illustrates that the optimal split is not in the reduced set $\Delta$, thus guaranteeing that \cref{bnr} converges to the optimal solution of \textit{LLP}. Now, we suppose \cref{bnr} converges to the global optimal solution for a depth-$d-1$ tree. Then, for a depth-$d$ tree, \textit{LLP}s can be solved to globally optimal solutions as they are optimizing depth-$d-1$ trees. For \textit{ULP}, it follows the proof of a depth-1 tree. So, for a depth-$d$ tree, the proof follows this case. Therefore, we can conclude that \cref{bnr} converges to the global optimum of \cref{bilevel}.
\end{proof}

\begin{remark}
The reduction strategy is the main source of efficiency gains in this algorithm, with a trivial subtraction operation. It is similar to the first lower bounding problem proposed by \citet{mazumder2022quant} but is generalized to any depth. The other lower bounding problems proposed by \citet{hua2022scalable,mazumder2022quant} involve solving complicated lower bounding problems, leading to additional computational overhead. In contrast, \texttt{BR} requires only the computation of $\delta(\bar{k})$.    
\end{remark}

%While the lower bound by \cref{lb} enables calculating the optimality gap and termination based on a specified tolerance, for fairness with state-of-the-art algorithms, the subsequent method employs a complete search strategy. Therefore, we consider terminating the main loop when $\mathbb{K} = \emptyset$, omitting the calculation of $LB$.
%Unlike traditional branch-and-bound methods, which rely on both upper and lower bounds to define an optimality gap, our approach operates solely with an upper bound and a reduction strategy to reduce the search range.

%\begin{remark}
%The reduction strategy contributes the most to the efficiency in this algorithm, with a trivial subtraction operation. It is similar to the first lower bounding problem proposed by \citet{mazumder2022quant} but is generalized to any depth. The other lower bounding problems proposed by \citet{hua2022scalable,mazumder2022quant} involve solving complicated lower bounding problems, leading to additional computational overhead. In contrast, BR requires only the computation of $\delta(\bar{k})$.    
%\end{remark}

%The optimality of \cref{bbnp} is established in the following theorem, with its proof based on \cref{conti,prune} and more details provided in \cref{proofopt}.
%\begin{theorem}\label{opt}
%   \cref{bbnp} converges to the global optimum of \cref{bilevel}.
%\end{theorem} 

\section{A Moving-Horizon Approximate Method for Deep Trees} \label{sec4}
In the above hierarchical optimization framework, we assume that each value $V_{d-1}$ in a \textit{CSP} is computed exactly. 
Under this assumption, global convergence can be guaranteed for trees of any depth. This condition can be satisfied by recursively calling \texttt{BR} for \textit{CSPs}, but it results in an exponential increase in computational time. To address this limitation, this section introduces an approximate method that significantly improves computational efficiency while maintaining high accuracy. Despite the approximation, we show that the global optimality for depth-2 trees is still guaranteed.

%BR defines a bilevel optimization problem where the overall loss depends on the outcomes of two subtrees obtained from the lower-level optimization. The calculation of $V_{d-1}$ in \textit{LLP} for finding optimal subtrees $T_L, T_R$ involves a recursive process from $L_{d-1}$ to $L_0$, leading to an exponential increase in computational complexity, making it particularly challenging for deep trees. Although \texttt{BB}-based methods \citep{hua2022scalable,mazumder2022quant} achieve strong performance for trees with depths 2 and 3, they encounter significant time constraints when $d \geq 3$. Therefore, this section introduces an approximate approach that improves computational efficiency while preserving high accuracy. 

\subsection{Child-Subtree Approximation}
In tree structures, the influence of individual nodes on classification loss generally decreases with depth, as deeper nodes tend to affect fewer data points. This implies that \textit{CSPs} have less impact on overall accuracy compared to \textit{RLP}. Consequently, we employ the \texttt{CART} algorithm to obtain approximate solutions for \textit{CSPs}. This method offers computational efficiency in addressing the \textit{CSPs}, with limited impact on the optimality, but substantially reducing the overall computational demand. Let $W_d(\mathcal{I})$ denote the loss of a depth-$d$ \texttt{CART} tree $\hat{T}$ on dataset $\mathcal{I}$. The \texttt{CART} splitting rule for each branch node typically involves minimizing the combined loss of the potential children (detailed in \Cref{recallCART}). %relative to the parents' loss:  
%\begin{align} \label{losscart}
   % \min_{ a \in [p] ,  k\in \mathcal{K}^a} V_0(\mathcal{I}^a_{[0,k]}) + V_0(\mathcal{I}^a_{[k,n_a]}), %\text{where} \ V_0(\mathcal{I}) = \min_{c\in\mathcal{N}_c}L(\mathcal{I},c)
%\end{align}
In our implementation, we use the result of this approximate method to update the upper bound, where the approximate problem is denoted as %$\hat{V}_d (\mathcal{I}, a,\bar{k})$, replacing the $V_d (\mathcal{I}, a,\bar{k})$ in \cref{bilevel}. The approximate problem is
\begin{align} \label{approx}
\hat{V}_d (\mathcal{I}) := \min_{a \in [p] ,  k\in \mathcal{K}^a}  \hat{V}_d (\mathcal{I}, a,k)  = \min_{a \in [p],  k\in \mathcal{K}^a}  \{ W_{d-1}(\mathcal{I}^a_{[0,k]}) + W_{d-1}(\mathcal{I}^a_{[k,n_a]})\}.
%\hat{V}_d (\mathcal{I}) := \min_{a \in [p] \atop  k\in \mathcal{K}^a}  \hat{V}_d (\mathcal{I}, a,k),\quad \text{where}\ \hat{V}_d (\mathcal{I}, a,k)  = \{ W_{d-1}(\mathcal{I}^a_{[0,k]}) + W_{d-1}(\mathcal{I}^a_{[k,n_a]})\}.
\end{align}
Compared to \texttt{CART}, this formulation performs a broader root-level search under approximate subtree evaluation, leading to improved accuracy, which is essential for the effectiveness of the following MH procedure. Drawing upon the procedural steps of \texttt{BR}, we can obtain an \textit{approximate branch-and-reduce~(\texttt{ABR})} method to solve $\hat{V}_d(\mathcal{I})$, detailed in \Cref{abbnp}. Then, we have the following lemma, stating that \textbf{\texttt{ABR} is no worse than \texttt{CART}} in this case.

\begin{lemma}
\label{cartapp}
Let $U_{\texttt{ABR}}(\mathcal{I},d)$ denote the loss returned by \Cref{abbnp} for a
depth-$d$ tree, and let $W_d(\mathcal{I})$ denote the loss of the depth-$d$ \texttt{CART} tree (detailed in \Cref{recallCART}) used to initialize the algorithm. Then the following statements hold.
\begin{enumerate}[leftmargin=*, nosep]
    \item With or without the reduction strategy,
    $\widehat V_d(\mathcal{I}) \leq U_{\texttt{ABR}}(\mathcal{I},d) \leq W_d(\mathcal{I})$.
    \item If the reduction is disabled, then
    $U_{\texttt{ABR}}(\mathcal{I},d) = \widehat V_d(\mathcal{I}) $.
    \item For $d=2$, \Cref{abbnp} converges to the \textbf{global optimal tree}, even when the reduction strategy is applied.
\end{enumerate}
\end{lemma}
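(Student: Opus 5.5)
The three statements are proved in turn, modelling \Cref{abbnp} on \Cref{bnr} with the exact subtree values $V_{d-1}$ replaced by the \texttt{CART} losses $W_{d-1}$. Two facts about the algorithm are used throughout. First, $U$ is initialized to $W_d(\mathcal{I})$ and is only ever decreased. Second, every value assigned to $U$, including the initial one, is the loss of an actual split evaluated by the approximate objective $\widehat V_d(\mathcal{I},a,\bar k)$.

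\emph{Upper bound in (1).} This follows from monotonicity: $U$ starts at $W_d(\mathcal{I})$ and never increases, so $U_{\texttt{ABR}}(\mathcal{I},d)\le W_d(\mathcal{I})$.

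\emph{Lower bound in (1).} I first note that the \texttt{CART} tree itself is feasible for the approximate problem. By the recursive definition of \texttt{CART} (\Cref{recallCART}), its depth-$d$ tree consists of a root split $(a_c,k_c)$ followed by the depth-$(d-1)$ \texttt{CART} trees on the two children. Hence
\[
W_d(\mathcal{I})=W_{d-1}(\mathcal{I}^{a_c}_{[0,k_c]})+W_{d-1}(\mathcal{I}^{a_c}_{[k_c,n_{a_c}]})=\widehat V_d(\mathcal{I},a_c,k_c)\ge \widehat V_d(\mathcal{I}).
\]
Every later update sets $U$ to some $\widehat V_d(\mathcal{I},a,\bar k)$, which is at least the minimum $\widehat V_d(\mathcal{I})$. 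So $U\ge\widehat V_d(\mathcal{I})$ holds at every stage, whether or not reduction is used.

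\emph{Statement (2).} With reduction disabled, $\Delta_i$ is empty. Each processed set $\mathcal{K}_i$ is split into $\{\bar k\}$ and two strictly smaller remainders. An induction on the size of the sets in $\mathbb{K}$ then shows that every $k\in\mathcal{K}^a$ is eventually chosen as a midpoint, for every $a$. I also take the lower-bound pruning test $U-LB\le 0$ to be part of the reduction and hence disabled; if it stays active, I would argue it is valid only when a Lipschitz property holds, as in the next paragraph. Since every split is evaluated, the final $U$ is the minimum over all $(a,k)$, which equals $\widehat V_d(\mathcal{I})$. Together with (1), this gives equality.

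\emph{Statement (3).} For $d=2$ the child problems are depth-1 problems. The key observation is that depth-1 \texttt{CART} is exact: it enumerates all single splits and takes majority labels in the leaves, which minimizes misclassification. This is the step I expect to be the crux. The \texttt{CART} criterion in \Cref{recallCART} is stated as minimizing the combined children loss, so I would verify that with 0-1 loss it selects the split minimizing $V_0(\mathcal{I}_L)+V_0(\mathcal{I}_R)$. That gives $W_1=V_1$. Consequently $\widehat V_2(\mathcal{I},a,k)=V_2(\mathcal{I},a,k)$ for all $(a,k)$, and \Cref{abbnp} coincides with \Cref{bnr} applied to the exact objective, with depth-1 subproblems. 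Lemmas~\ref{conti} and \ref{prune} then apply verbatim, since they depend only on $V_{d-1}$ being the optimal loss $L(\cdot)$ of a fixed-depth class. So neither the reduction step nor the $LB$ pruning removes any split that strictly improves $U$. Repeating the depth-1 argument of the preceding theorem shows that the returned tree attains $V_2(\mathcal{I})$, i.e.\ it is globally optimal.

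For $d\ge3$ this argument fails because $W_{d-1}$ need not be monotone under inclusion, so Lemma~\ref{app1} breaks. This is exactly why only (1) is claimed in general when reduction is used.
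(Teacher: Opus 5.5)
Your proof is correct and follows the paper's route essentially step for step: feasibility of the \texttt{CART} root split gives $\widehat V_d(\mathcal{I})\le W_d(\mathcal{I})$, and the monotone incumbent gives both bounds in (1); exhaustive evaluation gives (2); and $W_1=V_1$ makes $\widehat V_2=V_2$, so Lemmas~\ref{conti} and~\ref{prune} apply verbatim for (3). The only detail the paper adds is the degenerate case where \texttt{CART} stops at the root ($|\mathcal{I}|\le 1$ or $V_0(\mathcal{I})=0$), in which there is no root split $(a_c,k_c)$ but $W_d(\mathcal{I})=\widehat V_d(\mathcal{I})=0$ trivially; also, your $LB$ caveat is moot, because \Cref{abbnp}, unlike \Cref{bnr}, has no lower-bound pruning step.
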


\begin{proof}
We first establish the relationship between
$\widehat V_d(\mathcal{I})$ and $W_d(\mathcal{I})$. If the depth-$d$ \texttt{CART} tree is nonterminal, let $(a',k')$ denote the root split selected by \texttt{CART}. By the recursive definition of \texttt{CART},
\begin{align}
W_d(\mathcal{I})&=W_{d-1}(\mathcal{I}^{a'}_{[0,k']}) + W_{d-1}(\mathcal{I}^{a'}_{[k',n_{a'}]}) \nonumber\\
&=\widehat V_d(\mathcal{I},a',k').
\end{align}
Since $(a',k')$ is a feasible candidate in the minimization defining
$\widehat V_d(\mathcal{I})$,
\begin{align}
\widehat V_d(\mathcal{I})=\min_{a\in[p],\,k\in\mathcal{K}^a}\widehat V_d(\mathcal{I},a,k)
\leq\widehat V_d(\mathcal{I},a',k')
=W_d(\mathcal{I}).
\label{eq:cart-approx-bound}
\end{align}
If \texttt{CART} terminates at the root ($|\mathcal{I}|\leq1$ or $V_0(\mathcal{I})=0$), then $W_d(\mathcal{I})=\widehat V_d(\mathcal{I})=0$, so the same inequality holds.

Each candidate root split $(a,k)$ evaluated by \Cref{abbnp} has loss
$\widehat V_d(\mathcal{I},a,k)\geq\widehat V_d(\mathcal{I})$.
Moreover, by \eqref{eq:cart-approx-bound}, the initial \texttt{CART}
incumbent also satisfies
$W_d(\mathcal{I})\geq\widehat V_d(\mathcal{I})$.
Since the final incumbent is the minimum of the initial \texttt{CART}
loss and the losses of the evaluated candidates, while the incumbent is
updated only when a strictly smaller loss is found, we obtain
\begin{align}
\widehat V_d(\mathcal{I})
\leq
U_{\texttt{ABR}}(\mathcal{I},d)
\leq
W_d(\mathcal{I}),
\end{align}
regardless of whether reduction is enabled. This proves the first
statement.

When the reduction strategy is disabled, \Cref{abbnp} evaluates every
candidate root split $(a,k)$, where $a\in[p]$ and
$k\in\mathcal{K}^a$. Therefore,
\begin{align}
U_{\texttt{ABR}}(\mathcal{I},d)&=\min_{a\in[p],\,k\in\mathcal{K}^a}
\widehat V_d(\mathcal{I},a,k)
=\widehat V_d(\mathcal{I}).
\end{align}
Combining this equality with the first statement gives
$U_{\texttt{ABR}}(\mathcal{I},d)
=\widehat V_d(\mathcal{I})\leq W_d(\mathcal{I})$,
which proves the second statement.

For $d=2$, each child-subtree problem has depth $1$. A depth-$1$ tree
contains only one branch node, and \texttt{CART} evaluates the candidate splits at
that node exactly. Therefore, 
\begin{align}
    W_1(\mathcal{I}^{a}_{[0,k]})=V_1(\mathcal{I}^{a}_{[0,k]})\ \text{and} \ W_1(\mathcal{I}^{a}_{[k,n_a]})=V_1(\mathcal{I}^{a}_{[k,n_a]}).
\end{align}
It follows that $\widehat V_2(\mathcal{I},a,k)=V_2(\mathcal{I},a,k)$ for every candidate root split $(a,k)$. Taking the minimum over all candidate root splits gives $\widehat V_2(\mathcal{I})=V_2(\mathcal{I})$. Consequently, the Lipschitz-type condition in Lemma~\ref{conti} and the safe reduction result in Lemma~\ref{prune} apply directly to \Cref{abbnp} when $d=2$. %Thus, the global optimal objective value is preserved, and \Cref{abbnp} returns a \textbf{global optimal depth-$2$ tree}.
Thus, the reduction strategy preserves the global optimal objective
value, and
\begin{align}
U_{\texttt{ABR}}(\mathcal{I},2)
=\widehat V_2(\mathcal{I})
=V_2(\mathcal{I}).
\end{align}
Therefore, \Cref{abbnp} returns a \textbf{global optimal depth-$2$
tree}, even when reduction is enabled.
\end{proof}

\begin{algorithm}[h]
\begin{algorithmic}[1]
\caption{Approximate branch-and-reduce method (\texttt{ABR})}
\label{abbnp}
\FUNCTION{\texttt{ApproxTreeSearch}($\mathcal{I},d$):}
\STATE Initialize $U$ and $T$ by $\texttt{CART}(\mathcal{I},d)$;
\FOR{$a\in[p]$}
\STATE Sort $\mathcal{I}$ by feature $a$ to obtain $\mathcal{I}^{a}$ and $\mathcal{K}^{a}$, and set $\mathbb{K}\leftarrow\{\mathcal{K}^{a}\}$;
\STATE $U,T\leftarrow\texttt{ApproxSplitSearch}(U,T,\mathcal{I}^{a},a,d,\mathbb{K})$;
\COMMENT{\textit{solve the root-level problem}}
\ENDFOR
\STATE \textbf{return:} Approximate loss $U$ and decision tree $T$.\
\ENDFUNCTION
\algrule
\FUNCTION{\texttt{ApproxSplitSearch}($U,T,\mathcal{I}^a,a,d,\mathbb{K}$):}
\STATE Initialize the iteration index $i=0$;
\WHILE{$\mathbb{K}\neq\emptyset$}
\NODESELECT{}
\STATE Select $\mathcal{K}_i$ from $\mathbb{K}$, set $\mathbb{K}\leftarrow\mathbb{K}\backslash\{\mathcal{K}_i\}$, and update $i\leftarrow i+1$;
\ENDNODESELECT
\UPPERBOUND{}
\STATE Select the midpoint $\bar{k}\in\mathcal{K}_i$, and obtain $\mathcal{I}^{a}_{[0,\bar{k}]}$ and $\mathcal{I}^{a}_{[\bar{k},n_a]}$;
\STATE $\big(W_{d-1}(\mathcal{I}^{a}_{[0,\bar{k}]}),\widehat{T}_L\big)
\leftarrow\texttt{CART}(\mathcal{I}^{a}_{[0,\bar{k}]},d-1)$;\quad $\big(W_{d-1}(\mathcal{I}^{a}_{[\bar{k},n_a]}),\widehat{T}_R\big)
\leftarrow\texttt{CART}(\mathcal{I}^{a}_{[\bar{k},n_a]},d-1)$;
\STATE Evaluate loss
$\widehat{V}_d(\mathcal{I},a,\bar{k})=W_{d-1}(\mathcal{I}^{a}_{[0,\bar{k}]})+W_{d-1}(\mathcal{I}^{a}_{[\bar{k},n_a]})$;
\IF{$\widehat{V}_d(\mathcal{I},a,\bar{k})<U$}
\STATE $U\leftarrow\widehat{V}_d(\mathcal{I},a,\bar{k})$, and update
$T\leftarrow[a,\bar{k},\widehat{T}_L,\widehat{T}_R]$;
\ENDIF
\ENDUPPERBOUND
\REDUCING{}
\STATE $\delta(\bar{k})\leftarrow\widehat{V}_d(\mathcal{I},a,\bar{k})-U$, and obtain
$\Delta_i=\left\{k\in\mathcal{K}_i:
|\zeta(k)-\zeta(\bar{k})|\leq\delta(\bar{k})\right\}$;
\ENDREDUCING
\BRANCHING{}
\STATE Divide the remaining candidates into
$\mathcal{K}_L=\left\{k\in\mathcal{K}_i\setminus\Delta_i:
k<\bar{k}\right\}$ and $\mathcal{K}_R=
\left\{k\in\mathcal{K}_i\setminus\Delta_i:k>\bar{k}\right\}$;
\STATE Update $\mathbb{K}\leftarrow\mathbb{K}\cup\{\mathcal{K}_L\}$ if $\mathcal{K}_L\neq\emptyset$, and $\mathbb{K}\leftarrow\mathbb{K}\cup\{\mathcal{K}_R\}$ if $\mathcal{K}_R\neq\emptyset$;
\ENDBRANCHING
\ENDWHILE
\STATE \textbf{return:} Approximate loss $U$ and decision tree $T$.
\ENDFUNCTION
\end{algorithmic}
\end{algorithm}

\begin{remark}
\label{rem:approx_reduction}
Lemmas~\ref{conti} and \ref{prune} are established for the exact value function $V_d$, for which the child-subtree problems are solved to global optimality. Therefore, their safe-pruning guarantee applies directly to \Cref{bnr}. For $d>2$, \Cref{abbnp} replaces the exact child-subtree values $V_{d-1}$ with the \texttt{CART} losses $W_{d-1}$ and consequently evaluates the approximate objective $\widehat V_d$. We do not assume or prove that $\widehat V_d$ satisfies the Lipschitz-type condition $|\widehat V_d(I,a,k_1)-\widehat V_d(I,a,k_2)| \leq |\zeta(k_1)-\zeta(k_2)|$. Therefore, for $d>2$, the reduction step in \Cref{abbnp} should be understood as a heuristic acceleration strategy. In this setting, we do not claim that reduction preserves the minimizer of $\widehat V_d$, that the returned solution equals $\widehat V_d(I)$, or that the returned tree is global optimal.

Nevertheless, two validity properties remain. First, because \Cref{abbnp} is initialized by \texttt{CART} and updates the incumbent only when a lower-loss solution is found, its returned training loss is no greater than the \texttt{CART} initialization. Second, when reduction is disabled, \Cref{abbnp} performs an exhaustive root-level search and returns the exact minimizer of the approximate objective $\widehat V_d$. Thus, the exact bound theory provides a valid reduction rule for \Cref{bnr} and for the depth-$2$ special case of \Cref{abbnp}, while motivating a computationally effective heuristic reduction for deeper approximate trees.
\end{remark}

\subsection{A Reinforcement Learning Perspective} \label{RL}
\texttt{ABR} can be regarded as an approximate DP method \citep{bertsekas2024course}, and can be explained by Reinforcement Learning (RL) theories. 
%Following \texttt{DPDT} \citep{kohler2025breiman}, we consider the state as $\chi_{\rho-1} = (\mathcal{I}_{t\in\{ \rho\}}, \rho)$, where $\mathcal{I}_{t\in\{ \rho\}}$ denotes the samples at each node in layer $\rho$ ($\rho \in [d]$). The action corresponds to splitting rules $\pi_\rho = \{(a_t, b_t)\}_{t\in\{\rho \}}$ at layer $\rho$. The reward function is defined as $\mathcal{R}(\chi_{\rho-1}, \pi_\rho) = \sum_{t \in \{\rho-1\}} V_0(\mathcal{I}_t) \;-\; \sum_{t \in \{\rho\}} V_0(\mathcal{I}_t)$, meaning the loss descent between layer $\rho-1$ and $\rho$. For a depth-$d$ tree, the cost-to-go objective from $\chi_0$ to $\chi_d$ is
%\begin{align}
%J_0^*(\chi_0) = \min_{\pi_1,\dots,\pi_d} \Big[ \mathcal{R}(\chi_0, \pi_1) + J_1^*(\chi_1) \Big], \ \text{where } J_\rho^*(\chi_\rho) = \sum_{\kappa=\rho}^d \mathcal{R}(\chi_{\kappa-1}, \pi_\kappa).
%\end{align}
Following \texttt{DPDT} \citep{kohler2025breiman}, we model the DT problem as a Markov Decision Process. At each layer, the state is defined by the \textit{set of samples} present at the nodes of that layer. The action at a given layer corresponds to the \textit{set of splitting rules} applied to all nodes in that layer. The reward measures the \textit{loss descent} achieved by the splits, calculated as the difference between the total loss at the previous layer and that at the current layer.

\begin{figure}[h]
\begin{center}
\includegraphics[width=1\linewidth]{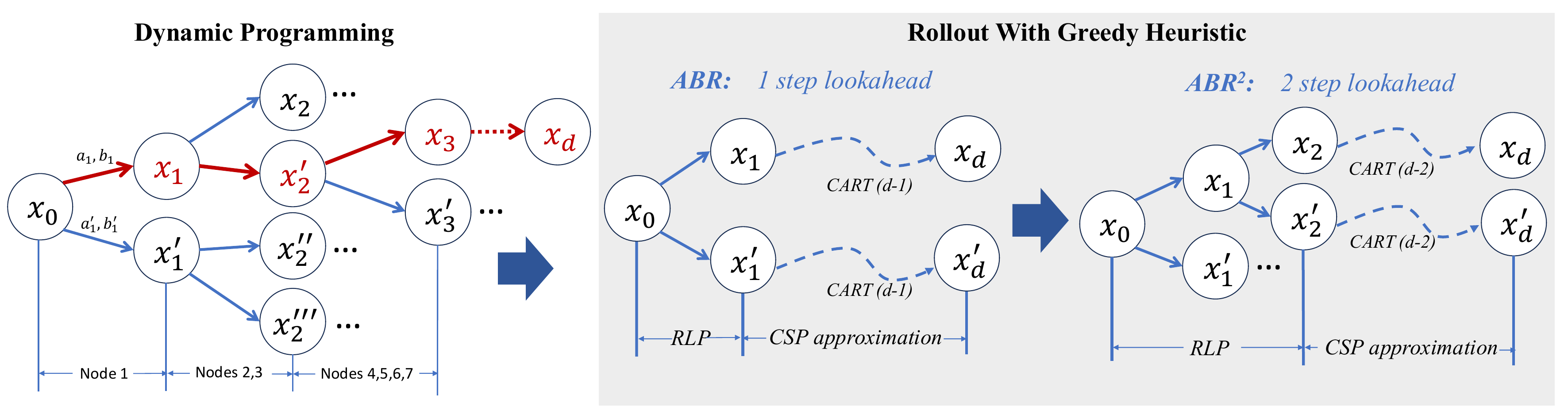}
\end{center}
\caption{Illustration of the Child-Subtree Approximation via RL ($x$ denotes the state).}
 \label{figRL}
\end{figure}

For a depth-$d$ tree, the overall objective is to select splitting rules at all layers to minimize the cumulative loss from the root to the leaves. This can be interpreted as a “cost-to-go” problem: $$\textit{1st stage cost + optimal tail problem cost}.$$
%where at each layer, we choose the splits that maximize the reduction in loss while accounting for the expected improvement in the subsequent layers.
\Cref{figRL} is an illustration of Child-Subtree approximation. Computing the global optimum requires DP that explores all feasible splits, whose number grows exponentially with tree depth. To make this tractable, the \textit{CSP} is approximated in the \textit{value space} (see Section 1.2.3 of \citet{bertsekas2024course}) by constructing a suboptimal policy from two child subtrees using heuristics such as \texttt{CART}, replacing the \textit{optimal tail problem cost}. This approximation can be further enhanced by starting the heuristic from stage 2, a 2-step lookahead method, which improves performance at the cost of additional computation.

\subsection{A Moving-Horizon Approach}

The proposed approximation method uses \texttt{CART} to generate suboptimal solutions for the left and right subtrees when $d > 2$. To improve solution quality, we introduce the MH approach that iteratively refines branch parameters. At each node, the subsequent nodes form a subtree that is re-optimized with its corresponding samples $\mathcal{I}_{sub}$, creating a new optimization subproblem. As tree depth increases, this iterative refinement helps close the gap between \texttt{ABR} and the global optimum. As illustrated in \Cref{figMH}, the MH process begins at node 1, where \texttt{ABR} updates all branch nodes in the depth-4 tree ($T_{d=4}$, nodes ${1, 2, \ldots, 15}$). Next, with node~2 as the root, \texttt{ABR} refines the branch nodes of the depth-3 subtree ($T_{d=3}$, nodes ${2, 4, 5, 8, 9, 10, 11}$). This iterative procedure continues in each step, with the subtree depth decreasing until the subtree depth $d_{sub}=2$. Here, nodes such as ${4, 8, 9}$ in $T_{d=2}$ are updated three times throughout this process. 

\begin{figure}[h]
\begin{center}
\includegraphics[width=0.84\linewidth]{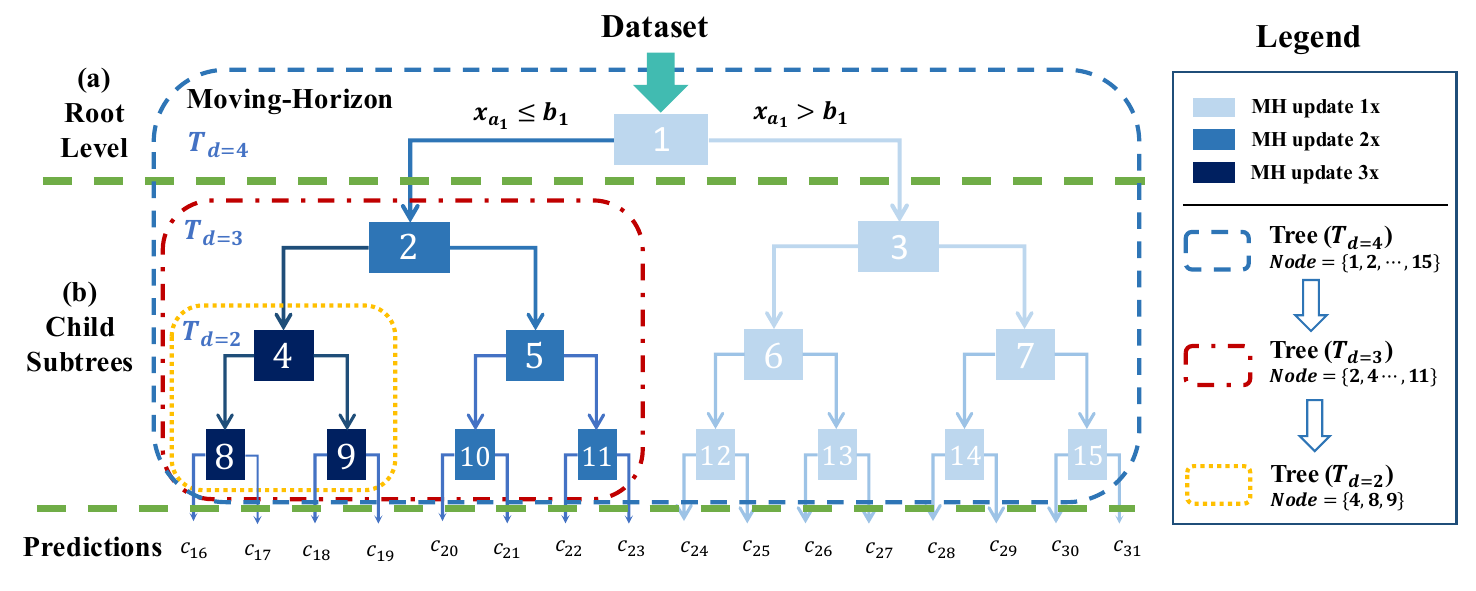}
\end{center}
\caption{An example of MH when $d=4$.}
 \label{figMH}
\end{figure}

After applying the MH refinement at node $t\in\mathcal{N}_B$, we update the corresponding subtree parameters of $T$ using $T_{sub}$ as follows
\begin{align} \label{mhtrans}
T[t\cdot 2^{j}:(t+1)\cdot 2^{j}-1] \Leftarrow T_{sub}[2^{j}:2^{j+1}-1],\quad j \in \{0, \ldots, d - d_{sub}\}.
\end{align}
This iterative refinement can substantially improve the accuracy of deep decision trees, requiring at most $2^{d-1}-1$ iterations at relatively low cost, and terminates when a subtree reaches an optimal loss, i.e., $d_{sub}=1$ or $L(\mathcal{I}_{sub}, T_{sub}) = 0$.

\subsection{Why \texttt{ABR} and MH Can Improve Over \texttt{CART}} \label{LLPapprox}

We now give intuition for why the proposed approximation can outperform greedy \texttt{CART}. The key difference is that \texttt{CART} selects each split using a purely local criterion, whereas \texttt{ABR} evaluates a candidate root split by combining it with approximate subtree costs. As a result, \texttt{ABR} can prefer a split that appears suboptimal under a one-step greedy criterion but yields a better tree once the downstream subtree structure is considered. 

%A simple illustrative example (detailed in \Cref{appXOR}) is the XOR problem \citep{guyon2003introduction}, where a single greedy split may fail to reveal the informative feature interaction, while a depth-2 lookahead can recover the correct structure. 

\paragraph{XOR as an Illustration of Lookahead Benefits} Consider the canonical noiseless XOR dataset (a special case) consisting of the four input patterns $(x^1,x^2)\in\{0,1\}^2$, each appearing once, with class label $y=x^1\oplus x^2$. A root split on either $x^1$ or $x^2$ produces two child nodes that each contain one observation from each class and therefore yields no immediate reduction in classification loss; consequently, a one-step greedy method may fail to identify either feature as informative. In contrast, a depth-2 lookahead recognizes that splitting first on one feature and then on the other perfectly classifies all four observations. This example illustrates how \texttt{ABR} can select a root split with little immediate benefit but strong downstream value; the complete loss calculations and resulting tree are provided in \Cref{appXOR}.

Although \texttt{ABR} uses a stronger root-level objective than \texttt{CART}, it still relies on approximate subtree values. The purpose of the MH refinement is to reduce the resulting approximation error by re-optimizing selected subtrees after the higher-level structure has been fixed. In this sense, MH can be viewed as a local repair mechanism: it revisits subproblems that were previously solved only approximately and may further reduce the training loss. For \texttt{CART}, the depth-$d$ objective is
\begin{align}
W_d(\mathcal I) = W_{d-1}(\mathcal I^{a'}_{[0,k']}) + W_{d-1}(\mathcal I^{a'}_{[k',n_{a'}]}),
\end{align}
where $(a',k')$ is the greedy root split. In contrast, \texttt{ABR} selects the root split according to
\begin{align}
\hat V_d(\mathcal I) = W_{d-1}(\mathcal I^{a^*}_{[0,k^*]}) + W_{d-1}(\mathcal I^{a^*}_{[k^*,n_{a^*}]}),
\end{align}
where $(a^*,k^*)$ minimizes the approximate lookahead objective. Thus, \texttt{ABR} may select a different root split from \texttt{CART}, and the subsequent MH refinements can further improve the resulting tree by re-optimizing selected subtrees. We evaluate this effect empirically in the ablation study of \Cref{secablation}.

\subsection{A Moving-Horizon Approximate Branch-and-Reduce Method }
By leveraging the child-subtree approximation and the MH technique, we present the \texttt{MHABR} algorithm for a depth-$d$ tree, as described in \Cref{mhbbnp}. Designed specifically for trees with $d \geq 2$, the algorithm applies the MH refinement for at most $2^{d-1}-1$ iterations. At each iteration, it refines the current model by optimizing a selected subtree, using \texttt{ABR} (\Cref{abbnp}) to yield a subtree solution $T_{sub}$ with depth $d_{sub}$ and objective value $U_{sub}$. Because \texttt{ABR} maintains and updates an incumbent solution over candidate thresholds within the \texttt{branch} subroutine, it naturally admits a warm-start strategy. In practice, warm-starting improves efficiency and ensures that the returned subtree solution is no worse than the \texttt{CART} initialization. We now analyze monotonicity of the MH refinement and the computational complexity of \Cref{mhbbnp}.
\begin{algorithm}[H]
\small
\begin{spacing}{1.0}
\caption{Moving-horizon approximate branch-and-reduce method (\texttt{MHABR})}
\label{mhbbnp}
\begin{algorithmic}[1]
\FUNCTION{\texttt{MHABR}($\mathcal{I},d$):}
\STATE Initialize $U$ and $T$ using
$\texttt{CART}(\mathcal{I},d)$;
\FOR{$t \in \{1,\cdots,2^{d-1}-1\} $}
\STATE Set the remaining subtree depth as
$d_{\mathrm{sub}}\leftarrow
d-\lfloor\log_2 t\rfloor$;
\STATE Obtain the sample set $\mathcal{I}_t$ reaching node $t$
under the current tree $T$;

\IF{$\mathcal{I}_t\neq\emptyset$}
    \STATE Let $T_t$ denote the current subtree rooted at node $t$,
    and calculate its loss
    $U_t\leftarrow L(\mathcal{I}_t,T_t)$;

    \STATE Calculate
    $U_{\mathrm{sub}},T_{\mathrm{sub}}
    \leftarrow
    \texttt{ApproxTreeSearch}
    (\mathcal{I}_t,d_{\mathrm{sub}})$;

    \IF{$U_{\mathrm{sub}}<U_t$}
        \STATE Update $T$ by replacing $T_t$ with
        $T_{\mathrm{sub}}$ according to \Cref{mhtrans};

        \STATE Update the total loss as
        $U\leftarrow U-U_t+U_{\mathrm{sub}}$;
    \ENDIF
\ENDIF
\ENDFOR
\STATE \textbf{return:} Loss $U$ and decision tree $T$.
\ENDFUNCTION
\end{algorithmic}
\end{spacing}
\end{algorithm}

\begin{corollary}
\label{mh_monotonicity}
Let $U^{(r)}$ denote the total training loss after the $r$th accepted
moving-horizon update, with $U^{(0)}=W_d(\mathcal{I})$. Then
$U^{(r+1)}\leq U^{(r)}$ for every accepted update. Consequently, the final
\texttt{MHABR} solution satisfies $U_{\texttt{MHABR}}(\mathcal{I},d)\leq W_d(\mathcal{I})$, regardless
of whether reduction is enabled in the approximate subtree searches.
\end{corollary}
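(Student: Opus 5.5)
The plan is to argue directly from the acceptance rule in \Cref{mhbbnp}, keeping track of how the total loss decomposes over the subtree being replaced. Initially $U$ is set from $\texttt{CART}(\mathcal{I},d)$, so $U^{(0)}=W_d(\mathcal{I})=L(\mathcal{I},T^{(0)})$. The invariant to maintain is that after every step the stored $U$ equals the true training loss $L(\mathcal{I},T)$ of the current tree $T$. Monotonicity and the final bound will then follow by induction on the number of accepted updates.

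The key step is a loss decomposition at node $t$. Every sample reaching node $t$ under the current tree $T$ is routed by the fixed ancestors of $t$. The samples reaching $t$ are exactly $\mathcal{I}_t$, and they are classified only by the subtree $T_t$. All other samples never enter $T_t$ and are classified by nodes outside it. Hence
\begin{align}
L(\mathcal{I},T)=L(\mathcal{I}\setminus\mathcal{I}_t,T)+L(\mathcal{I}_t,T_t),\nonumber
\end{align}
where the first term depends only on nodes outside the subtree rooted at $t$. The replacement rule in \Cref{mhtrans} overwrites only the positions $t\cdot2^j,\dots,(t+1)2^j-1$ for $j\in\{0,\ldots,d-d_{\mathrm{sub}}\}$, which are exactly the descendants of $t$ up to depth $d$. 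The ancestors of $t$ and the set $\mathcal{I}_t$ are therefore unchanged, and so is the first term. I also need $U_{\mathrm{sub}}=L(\mathcal{I}_t,T_{\mathrm{sub}})$. This holds because \texttt{ApproxTreeSearch} returns the loss of the tree it returns: it is either the \texttt{CART} incumbent with loss $W_{d_{\mathrm{sub}}}(\mathcal{I}_t)$, or a tree $[a,\bar k,\widehat T_L,\widehat T_R]$ whose loss is $W_{d-1}+W_{d-1}$ by construction. The new total loss is therefore $U-U_t+U_{\mathrm{sub}}$, which matches the update line and preserves the invariant.

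An update is accepted only if $U_{\mathrm{sub}}<U_t$. In that case $U^{(r+1)}=U^{(r)}-U_t+U_{\mathrm{sub}}<U^{(r)}$, which gives $U^{(r+1)}\le U^{(r)}$. Rejected iterations and empty $\mathcal{I}_t$ leave $T$ and $U$ unchanged. Chaining these inequalities over the finitely many iterations ($t\le 2^{d-1}-1$) gives $U_{\texttt{MHABR}}\le U^{(0)}=W_d(\mathcal{I})$. The argument uses only the acceptance test, not any optimality of \texttt{ABR}. It therefore holds whether or not reduction is enabled, consistent with \Cref{rem:approx_reduction}. (Lemma~\ref{cartapp} additionally gives $U_{\mathrm{sub}}\le W_{d_{\mathrm{sub}}}(\mathcal{I}_t)$, but this is not needed.)

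I expect the main obstacle to be the decomposition step. It requires verifying that the index map in \Cref{mhtrans} places $T_{\mathrm{sub}}$ exactly on the descendant positions of $t$ in heap order. It also requires checking that the subtree depth $d_{\mathrm{sub}}=d-\lfloor\log_2 t\rfloor$ fits within the overall depth, so that the replaced tree is well formed and no samples outside $\mathcal{I}_t$ are rerouted. I would check this by noting that node $t$ at level $\lfloor\log_2 t\rfloor$ has descendants at relative level $j$ occupying the contiguous block $[t2^j,(t+1)2^j-1]$. This block matches the level-$j$ block $[2^j,2^{j+1}-1]$ of $T_{\mathrm{sub}}$.
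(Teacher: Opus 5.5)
Your proposal is correct and follows the paper's argument: updates are accepted only on strict improvement, the total is updated as $U\leftarrow U-U_t+U_{\mathrm{sub}}$, and the inequalities chain back to $U^{(0)}=W_d(\mathcal{I})$. Your added invariant $U=L(\mathcal{I},T)$, obtained from the loss decomposition at node $t$, is something the paper simply takes for granted. One small correction: as printed, \Cref{mhtrans} lets $j$ range only over $\{0,\ldots,d-d_{\mathrm{sub}}\}=\{0,\ldots,\lfloor\log_2 t\rfloor\}$ (for $t=1$ this overwrites only the root), so those positions are not ``exactly the descendants of $t$ up to depth $d$.'' Your decomposition needs the evidently intended full transplant over all levels $j\in\{0,\ldots,d_{\mathrm{sub}}\}$ of the subtree, leaves included. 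You should state that assumption explicitly rather than attribute it to the printed range.
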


\begin{proof}
At an iteration associated with node $t$, \Cref{abbnp} replaces the current subtree only if its new loss $U_{\mathrm{sub}}$ is strictly smaller than the current subtree loss $U_t$. The total loss is then updated as $U^{(r+1)}=U^{(r)}-U_t+U_{sub}<U^{(r)}$. If the candidate subtree is not better, no replacement is performed and the total loss remains unchanged. Since the initial tree is obtained by \texttt{CART}, $U^{(0)}=W_d(\mathcal{I})$, and hence the final loss cannot exceed $W_d(\mathcal{I})$.
\end{proof}

\begin{theorem}%[Computational complexity]%[Computational complexity with repeated sorting]
\label{complexity}
Let $n>1$ and $p$ denote the number of samples and features, respectively, and let $\tilde{n}\geq 1$ be the maximum actual number of candidate feature-threshold pairs evaluated in any \texttt{BR} or \texttt{ABR} root-level search. Suppose that, whenever a tree or subtree problem containing $n_t\geq 2$ samples is processed, its samples are independently sorted with respect to all $p$ features, requiring $\mathcal{O}(pn_t\log n_t)$ time. Problems containing at most one sample are treated as terminal and are not sorted. We ignore implementation-dependent speedups such as warm-starting, reduction effectiveness, and early termination. Then, for a tree of depth $d$, 
\begin{enumerate}[nosep, leftmargin=*]
    \item The cost of \texttt{CART} is bounded by $\mathcal{O}(pnd\log n)$;
    \item The cost of \texttt{BR} is bounded by $\mathcal{O}\left(pn\log n\,\tilde{n}^{\,d-1}\right)$;
    \item The cost of \texttt{MHABR} is bounded by $\mathcal{O}\left(\tilde{n}\,pn\log n\,\frac{d(d-1)}{2}\right),\ d\geq 2$.
\end{enumerate}
 Since each feature admits at most $n+1$ candidate thresholds, the total number of candidate feature-threshold pairs satisfies $\tilde{n}\leq p(n+1)=\mathcal{O}(pn)$. Consequently, substituting $\tilde{n}=\mathcal{O}(pn)$ gives the following coarse worst-case bounds:
 \begin{enumerate}[nosep, leftmargin=*]
    \item The cost of \texttt{BR} is bounded by $\mathcal{O}\left((pn)^d\log n\right)$;
    \item The cost of \texttt{MHABR} is bounded by $\mathcal{O}\left(p^2n^2\log n\,\frac{d(d-1)}{2}\right)$.
\end{enumerate}
\end{theorem}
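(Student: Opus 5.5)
The plan is to charge every cost to sorting, since by hypothesis each processed problem with $n_t\ge2$ samples costs $\mathcal{O}(pn_t\log n_t)$ to sort, and all other per-problem work (a linear scan to evaluate a depth-1 split, or bookkeeping) is dominated by this. The basic tool is a \emph{level-additivity} fact: the sample sets at the nodes of any single layer of a tree are disjoint subsets of $\mathcal{I}$, so $\sum_t pn_t\log n_t\le pn\log n$ by $n_t\le n$ and $\sum_t n_t\le n$.

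\textbf{CART.} CART processes the root, then its two children, and so on for $d$ layers. By level-additivity each layer costs at most $\mathcal{O}(pn\log n)$, so the total is $\mathcal{O}(pnd\log n)$.

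\textbf{BR.} Let $C_d(n)$ bound the cost of \texttt{ExactTreeSearch}$(\mathcal{I},d)$ with $|\mathcal{I}|=n$. Induction on $d$ gives $C_d(n)\le c\,pn\log n\,\tilde n^{\,d-1}$.
\begin{itemize}
\item \emph{Base case.} For $d=1$ the children are leaves, so the cost is sorting plus a scan of the candidates, i.e.\ $\mathcal{O}(pn\log n)$.
\item \emph{Inductive step.} The root sort costs $\mathcal{O}(pn\log n)$, and at most $\tilde n$ candidate pairs are evaluated. Each evaluation calls depth-$(d-1)$ searches on two disjoint child sets of sizes $n_L+n_R\le n$, costing at most $c\,p(n_L\log n_L+n_R\log n_R)\tilde n^{\,d-2}\le c\,pn\log n\,\tilde n^{\,d-2}$.
\item \emph{Summing.} This gives $C_d\le \mathcal{O}(pn\log n)+\tilde n\cdot c\,pn\log n\,\tilde n^{\,d-2}$. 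Since $\tilde n\ge1$, this is $\mathcal{O}(pn\log n\,\tilde n^{\,d-1})$ after adjusting constants.
\end{itemize}

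\textbf{ABR.} For one call of \texttt{ApproxTreeSearch}$(\mathcal{I}_t,d_{\rm sub})$ on $n_t$ samples:
\begin{itemize}
\item the initial CART and the root sort cost $\mathcal{O}(pn_t d_{\rm sub}\log n_t)$;
\item each of the at most $\tilde n$ candidate evaluations runs CART of depth $d_{\rm sub}-1$ on two disjoint children, costing $\mathcal{O}(pn_t(d_{\rm sub}-1)\log n_t)$ by the CART bound and level-additivity.
\end{itemize}
So the ABR cost is $\mathcal{O}(\tilde n\,pn_t\log n_t\,d_{\rm sub})$, where the initial CART term is absorbed because $\tilde n\ge1$. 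A cleaner count is $d_{\rm sub}-1$ per candidate plus one for the root sort.

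\textbf{MHABR.} Group the iterations $t\in\{1,\dots,2^{d-1}-1\}$ by layer $j=\lfloor\log_2 t\rfloor\in\{0,\dots,d-2\}$, where $d_{\rm sub}=d-j$. Within a layer the sets $\mathcal{I}_t$ are disjoint at the moment they are processed. Earlier updates may reshape the tree, but nodes of the same layer always partition a subset of $\mathcal{I}$, so level-additivity still applies. Layer $j$ therefore costs $\mathcal{O}(\tilde n\,pn\log n\,(d-j))$ using the $(d_{\rm sub}-1)$-per-candidate count. The $d_{\rm sub}$ factors, counted as $d_{\rm sub}-1$ CART layers each, sum to $\sum_{j=0}^{d-2}(d-j-1)=\sum_{s=1}^{d-1}s=\frac{d(d-1)}{2}$. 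The root sorts and the initial CART contribute lower-order terms that are absorbed. This gives $\mathcal{O}\bigl(\tilde n\,pn\log n\,\frac{d(d-1)}{2}\bigr)$.

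\textbf{Coarse bounds.} Substitute $\tilde n\le p(n+1)=\mathcal{O}(pn)$. For BR this gives $pn\log n\,(pn)^{d-1}=(pn)^d\log n$; for MHABR it gives $p^2n^2\log n\,\frac{d(d-1)}{2}$.

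\textbf{Main obstacle.} The delicate step is the MHABR accounting. One must make sure the level-additivity argument survives tree modifications made by earlier MH iterations. One must also choose the per-call count so that the layer sum gives exactly $\frac{d(d-1)}{2}$ rather than $\frac{d(d+1)}{2}$, and justify absorbing the extra root-sort and initial-CART terms into the big-$\mathcal{O}$ for $d\ge2$.
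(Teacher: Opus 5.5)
Your proposal is correct and follows essentially the same route as the paper. Both use the disjoint-node inequality $\sum_t n_t\log n_t\le n\log n$, depth induction (or, equivalently, layer summation) for \texttt{CART} and \texttt{BR}, a per-call \texttt{ABR} bound of $\mathcal{O}(\tilde n\,pn_t(d_{\mathrm{sub}}-1)\log n_t)$, and a layer-by-layer sum over $d_{\mathrm{sub}}=2,\dots,d$ for \texttt{MHABR}. Your explicit check that earlier MH updates leave same-layer sample sets disjoint is something the paper only asserts, and your $\frac{d(d-1)}{2}$ versus $\frac{d(d+1)}{2}$ concern is moot because the two agree up to a constant factor for $d\ge2$.
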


\begin{proof}
We first establish an inequality that will be used throughout the proof. Consider a collection of mutually disjoint tree nodes with sample sizes $n_1,\ldots,n_r$ satisfying $\sum_{j=1}^{r}n_j\leq n$. For every $n_j>0$, we have $n_j\leq n$, and hence $\log n_j\leq\log n$. Therefore,
\begin{align}\label{cost_ineq}
\sum_{j=1}^{r}n_j\log n_j\leq \sum_{j=1}^{r}n_j\log n
\leq n\log n.
\end{align}

\paragraph{\texttt{CART}}
We consider the \texttt{CART} routine used in our implementation, which is based on the \texttt{DecisionTree.jl} package, and analyze its complexity under the repeated-sorting assumption of the theorem.
Let $C_{\texttt{CART}}(n,d)$ denote the cost of constructing a depth-$d$ \texttt{CART} tree from $n$ samples. At a node $t$ containing $n_t$ samples, sorting the samples with respect to all $p$ features costs $\mathcal{O}(pn_t\log n_t)$. After sorting, \texttt{CART} scores all candidate thresholds by a linear sweep using cumulative class counts. Since there are at most $p(n_t+1)=\mathcal{O}(pn_t)$ feature-threshold pairs, this sweep requires $\mathcal{O}(pn_t)$ time and is dominated by the sorting cost.

We prove by induction on $d$ that $C_{\texttt{CART}}(n,d)=\mathcal{O}(pnd\log n)$. For $d=1$, \texttt{CART} processes only the root node. Therefore, $C_{\texttt{CART}}(n,1)=\mathcal{O}(pn\log n)$. Now suppose that the result holds for trees of depth $d-1$. Let $n_L$ and $n_R$ denote the numbers of samples assigned to the left and right children of the root, respectively. Since the two child nodes partition the samples, $n_L+n_R=n$. The cost of constructing a depth-$d$ tree satisfies
\begin{align}
C_{\texttt{CART}}(n,d)
&\leq \mathcal{O}(pn\log n) + C_{\texttt{CART}}(n_L,d-1) + C_{\texttt{CART}}(n_R,d-1) \nonumber\\
&\leq \mathcal{O}(pn\log n) + \mathcal{O}(p(d-1) \left(n_L\log n_L+n_R\log n_R\right)),  \nonumber
\end{align}

where the second inequality follows from the induction hypothesis. By \Cref{cost_ineq},
we have $ n_L\log n_L+n_R\log n_R\leq n\log n$. Hence,
\begin{align}
C_{\texttt{CART}}(n,d)
&\leq \mathcal{O}(pn\log n) + \mathcal{O}(pn(d-1)\log n) =\mathcal{O}(pnd\log n).
\end{align}

Thus, the claimed bound holds for \texttt{CART}.

\paragraph{\texttt{BR}}

Let $C_{\texttt{BR}}(n,d)$ denote the cost of solving a depth-$d$ tree exactly using BR. We prove by induction on $d$ that $C_{\texttt{BR}}(n,d)=\mathcal{O}\left(pn\log n\,\tilde{n}^{\,d-1}\right)$. For $d=1$, \texttt{BR} sorts the samples with respect to all $p$ features and scans the candidate root splits. Therefore, $C_{\texttt{BR}}(n,1)=\mathcal{O}(pn\log n)$.
This agrees with the claimed bound because $\tilde{n}^{\,d-1}=\tilde{n}^{\,0}=1$.

Now suppose that the result holds for depth $d-1$. At the root of a depth-$d$ \texttt{BR} problem, at most $\tilde{n}$ candidate splits are evaluated. For candidate split $j$, let $n_{j,L}$ and $n_{j,R}$ denote the sample sizes of the corresponding left and right subproblems. The two child sample sets partition the parent sample set, so $n_{j,L}+n_{j,R}=n$. The depth-$d$ \texttt{BR} cost therefore satisfies
\begin{align}
C_{\texttt{BR}}(n,d)
&\leq \mathcal{O}(pn\log n)+\sum_{j=1}^{\tilde{n}}
\left[C_{\texttt{BR}}(n_{j,L},d-1)+C_{\texttt{BR}}(n_{j,R},d-1)\right]. \nonumber
\end{align}

Applying the induction hypothesis gives
\begin{align}
C_{\texttt{BR}}(n,d)
\leq  \mathcal{O}(pn\log n) + \mathcal{O}\left(p\tilde{n}^{\,d-2}
\sum_{j=1}^{\tilde{n}}
\left[n_{j,L}\log n_{j,L} + n_{j,R}\log n_{j,R} \right]\right). \nonumber
\end{align}

For each candidate $j$, \Cref{cost_ineq} gives $n_{j,L}\log n_{j,L} + n_{j,R}\log n_{j,R}\leq n\log n$. Since at most $\tilde{n}$ candidates are evaluated,
\begin{align}
C_{\texttt{BR}}(n,d)
&\leq \mathcal{O}(pn\log n) + \mathcal{O}(p\tilde{n}^{\,d-2}\cdot\tilde{n}n\log n )\nonumber\\
&=\mathcal{O}(pn\log n )+ \mathcal{O}(pn\log n\,\tilde{n}^{\,d-1} )\nonumber\\
&= \mathcal{O}\left( pn\log n\,\tilde{n}^{\,d-1} \right).
\end{align}
Thus, the exact \texttt{BR} complexity grows multiplicatively with the number of candidate splits at each recursive depth. Since $\tilde{n}=\mathcal{O}(pn)$,
\begin{align}
    C_{\texttt{BR}}(n,d)=\mathcal{O}\left((pn)^d\log n\right).
\end{align}
Thus, for fixed $p$ and $n$, the \texttt{BR} bound is exponential in the tree depth $d$; for fixed $d$, it is polynomial in $n$ and $p$, with a degree that increases with $d$.

\paragraph{One \texttt{ABR} call}
Before introducing \texttt{MHABR}, we consider an \texttt{ABR} call on a subtree containing $n$ samples and having remaining depth $d_{sub}\geq2$. After initialization, at most $\tilde{n}$ root candidates are evaluated. For candidate $j$, let $n_{j,L}$ and $n_{j,R}$ be the sample sizes of the corresponding left and right child subproblems. The two child-subtree problems induced by each candidate split are approximated using depth-$(d_{sub}-1)$ \texttt{CART} trees. Using the \texttt{CART} bound,
\begin{align}
C_{\texttt{CART}}(n_{j,L},d_{sub}-1) + C_{\texttt{CART}}(n_{j,R},d_{sub}-1)
\leq \mathcal{O}(p(d_{sub}-1)\left[n_{j,L}\log n_{j,L}+n_{j,R}\log n_{j,R}\right]).
\end{align}

Because $n_{j,L}+n_{j,R}=n$, by \Cref{cost_ineq}, $n_{j,L}\log n_{j,L} + n_{j,R}\log n_{j,R} \leq n\log n$. Hence, the cost of evaluating one candidate root split is bounded by $\mathcal{O}\left(pn(d_{sub}-1)\log n\right)$. Evaluating at most $\tilde{n}$ root candidates gives
\begin{align}\label{ABR_cost}
C_{\texttt{ABR}}(n,d_{sub})
&\leq \mathcal{O}(pn\log n)+\sum_{j=1}^{\tilde{n}}\mathcal{O}(pn(d_{sub}-1)\log n )\nonumber\\
&=\mathcal{O}\left(\tilde{n}\,pn(d_{sub}-1)\log n\right).
\end{align}
The first term accounts for sorting and constructing the candidate root splits. It is dominated by the candidate-evaluation term when $\tilde{n}\geq1$ and $d_{sub}\geq2$.

\paragraph{\texttt{MHABR}}
The moving-horizon procedure applies \texttt{ABR} to subtrees with remaining depths $\{d,d-1,\ldots,2\}$. Consider one moving-horizon level at which every processed subtree has remaining depth $d_{sub}$. Let these subtrees contain $n_1,\ldots,n_r$ samples. The subtrees processed at a fixed remaining depth $d_{sub}$ are rooted at distinct nodes of the same tree level. Their sample sets are therefore mutually disjoint, and $\sum_{t=1}^{r}n_t\leq n$. By the definition of $\tilde{n}$, each \texttt{ABR} call evaluates at most $\tilde{n}$ root candidates. Applying \Cref{ABR_cost} to each subtree gives
%Since the corresponding tree nodes are disjoint, $\sum_{t=1}^{r}n_t\leq n$.
\begin{align}
\sum_{t=1}^{r}C_{\texttt{ABR}}(n_t,d_{sub})
&=\mathcal{O}\left(\tilde{n}\,p(d_{sub}-1)\sum_{t=1}^{r}n_t\log n_t\right) \nonumber\\
&=\mathcal{O}\left(\tilde{n}\,pn(d_{sub}-1)\log n\right),
\end{align}
where the final inequality follows from \Cref{cost_ineq}. Summing over all remaining subtree depths $d_{sub}=d,d-1,\ldots,2$ gives
\begin{align}
C_{\texttt{MHABR}}(n,d)
&=
\mathcal{O}\left(\tilde{n}\,pn\log n\sum_{d_{sub}=2}^{d}(d_{sub}-1)\right)
%=\mathcal{O}\left(\tilde{n}\,pn\log n\sum_{j=1}^{d-1}j\right)
=
\mathcal{O}\left(\tilde{n}\,pn\log n\frac{d(d-1)}{2}\right).
\end{align}
Finally, since $\tilde{n}=\mathcal{O}(pn)$,
\begin{align}
C_{\texttt{MHABR}}(n,d)
=\mathcal{O}\left(p^2n^2\log n\frac{d(d-1)}{2}\right).
\end{align}
This completes the proof.
\end{proof}

These bounds are intended only as coarse worst-case estimates; in practice, the observed runtime is substantially reduced by warm-starting, reduction, and early stopping.

\section{Numerical Experiments}  \label{num}
This section presents a comprehensive empirical evaluation of our proposed algorithm across various benchmark datasets (detailed in \Cref{appnum}). We assess key performance metrics, specifically predictive accuracy and computational efficiency, against five primary baseline methods. These baselines encompass widely used heuristics, including \texttt{CART} \citep{ben_sadeghi_2022_7359268}, \texttt{LS-OCT} \citep{dunn2018optimal}, and \texttt{DPDT} \citep{kohler2025breiman}, as well as state-of-the-art global optimization solvers such as \texttt{DL8.5} \citep{aglin2020learning} and \texttt{Quant-BnB} \citep{mazumder2022quant}. Furthermore, due to the lack of an open-source implementation for direct reproducibility, we provide a supplementary comparison against the self-reported results of \texttt{TAO} \citep{carreira2018alternating} in \Cref{comptao}. A central focus of this evaluation is demonstrating three key advantages of our method: (i) it consistently recovers the global optimum at depth $d=2$; (ii) it achieves vast computational efficiency gains over exact global solvers at depths $d>2$ with only a marginal trade-off in optimality; and (iii) it can well scale to deep trees while delivering superior accuracy compared to heuristic baselines.

\textbf{Datasets and computing environment:} We collected 59 classification datasets from the UCI Machine Learning Repository \citep{Dua_2017}, spanning both binary and multi-class tasks, with sample sizes ranging from 47 to 60,807,600. The datasets were categorized into 3 groups: 51 small-scale datasets ($n<10K$) and 5 medium-scale ($10K\leq n\leq 1M$), and 3 large-scale datasets ($n\geq 1M$). Before the experiments, each dataset was randomly split, with 75\% for training and 25\% for testing, respectively. The results reflect the average of 10 runs for each dataset. All experiments are conducted on a 40-core \textit{Intel Xeon Gold 5115 CPU} (2.40 \texttt{GHz}) with 93.9 \texttt{GB RAM}.

\textbf{Implementation:} Our algorithm is implemented in \texttt{Julia}, with the source code publicly available on GitHub\footnote{\url{https://github.com/YankaiGroup/MHABR.jl}}. We also provide an enhanced version of \texttt{CART} (\Cref{apprcart}), which is based on \texttt{DecisionTree.jl} package and is integrated into \texttt{MHABR}. All the baselines are obtained from their official repositories, except \texttt{LS-OCT}, which we reproduced in \texttt{Julia}. Time limits are set to 4 hours for small-scale datasets and 24 hours for larger datasets. More details are given in \Cref{app_IPD}.

%Notably, both \texttt{CART} and \texttt{MHABR} complete within these time limits, whereas \texttt{DL8.5} fails in some cases. 

\subsection{Performance on Small Datasets} \label{smallset}

%We evaluate our algorithm on 51 small datasets, with results shown in \cref{51acc,addsmall}. To further assess performance, we test on binarized datasets and measure the gap to \texttt{DL8.5}. Our algorithm offers an \textbf{$\mathbf{\alpha}$-tuning} option to mitigate overfitting. This procedure involves selecting an appropriate $\alpha$ value for the penalty term, $\alpha\cdot \mathcal{C}$, which is added to the objective function \cref{loss}. For both \texttt{CART} and \texttt{MHABR}, we cache the best results from a set of 21 candidate parameters, $\{0,0.05,…,1\}$. \texttt{IAI-OCT} is enforced to apply a grid search to identify the optimal combination of hyperparameters, resulting in fine-tuned outcomes. 

We evaluate our algorithm on 51 small datasets, encompassing both an aggregate analysis to illustrate the relative performance of the methods and detailed individual comparisons. Since global methods often suffer from limited scalability while heuristic approaches frequently compromise on optimality, we defer a granular analysis of these trade-offs to the scenario-specific discussions.

\Cref{51acc} provides overall results of these methods. To rigorously assess optimality, we also utilize binarized datasets to benchmark the optimality gap against \texttt{DL8.5}. Among the competing methods, \texttt{DPDT} emerges as the strongest baseline; like our approach, it is non-greedy, yet it offers superior efficiency and scalability compared to other baselines. Furthermore, \texttt{DPDT} can be configured to mimic our root-node search strategy by adjusting the \texttt{(N,p)} parameters. Consequently, to ensure a comprehensive comparison, we explicitly evaluate this variant, denoted as \texttt{DPDT(Np)}.

%To rigorously assess optimality, we also utilize binarized datasets to benchmark the optimality gap against \texttt{DL8.5}. Among the competing methods, \texttt{DPDT} emerges as the strongest baseline; like our approach, it is heuristic but non-greedy, yet it offers superior efficiency and scalability compared to other baselines. Furthermore, \texttt{DPDT} can be configured to mimic our root-node search strategy by adjusting the \texttt{(N,p)} parameters. Consequently, to ensure a comprehensive comparison, we explicitly evaluate this variant, denoted as \texttt{DPDT(NP)}.
%In addition, our algorithm provides an \textbf{$\mathbf{\alpha}$-tuning} option to mitigate overfitting. This procedure involves selecting an appropriate $\alpha$ value for the penalty term, $\alpha \cdot \mathcal{C}$, which is added to the objective function in \cref{loss}. To assess the best performance, we compare the tuned \texttt{MHABR} to the tuned \texttt{CART} and \texttt{IAI-OCT}, where \texttt{IAI-OCT} employs a grid search to identify the optimal combination of hyperparameters, yielding fine-tuned results. For both \texttt{CART} and \texttt{MHABR}, $\alpha$ is selected from a set of 21 candidate parameters $\{0,0.05,\dots,1\}$.

\begin{table}[h]
  \caption{Average training and testing accuracy on 51 small datasets}
  \centering
 % \setlength{\tabcolsep}{2.2mm} % column padding
 % \renewcommand{\arraystretch}{1.1} % row padding
 % \small
\begin{adjustbox}{width=0.99\textwidth}
% Please add the following required packages to your document preamble:
% \usepackage{multirow}
\begin{tabular}{c|c|cc|cccccc}
\toprule[1.5pt]
& \multirow{2}{*}{Depth} & \multicolumn{2}{c|}{Binarized Datasets} & \multicolumn{6}{c}{Original Datasets (ODs)}                                                 \\ \cline{3-10} 
&                        & \texttt{DL8.5}         & $\texttt{MHABR}_{bin}$    & \texttt{Quant-BnB}                     & \texttt{LS-OCT}  & \texttt{CART}    & \texttt{DPDT}    & \texttt{DPDT(Np)} & \texttt{MHABR}              \\ \midrule[1.0pt]\multirow{4}{*}{\makecell{Training \\ accuracy\\(\%) }} & 2                      & $83.18^*$     & $83.18$                 & $\mathbf{84.31}^*$             & $83.90$ & $81.84$ & $83.54$ & $83.78$  & $\mathbf{84.31}^*$ \\
& 3                      & $87.71^*$     & $86.97$                 & $\underline{\mathbf{89.44}}^*$ & $87.59$ & $85.96$ & $88.19$ & $88.02$  & $88.93$            \\
& 4                      & $90.79^*$     & $89.89$                 & /                              & $90.14$ & $89.02$ & $90.95$ & $91.00$  & $\mathbf{92.26}$   \\
& 8                      & $95.89^*$     & $94.79$                 & /                              & $96.94$ & $96.70$ & $97.72$ & $98.01$  & $\mathbf{98.55}$   \\ \midrule[0.5pt]
\multirow{4}{*}{\makecell{Testing \\ accuracy\\(\%) }}  & 2                      & $79.18$       & $75.14$                 & $79.89$                        & $79.67$ & $78.63$ & $79.52$ & $79.64$  & $\mathbf{79.91}$            \\
& 3                      & $81.56$       & $79.82$                 & $\underline{82.56}$            & $82.18$ & $81.13$ & $81.96$ & $82.08$  & $\mathbf{82.42}$            \\
& 4                      & $82.42$       & $81.52$                 & /                              & $\mathbf{83.53}$ & $82.67$ & $83.30$ & $83.16$  & $83.30$            \\
& 8                      & $82.62$       & $83.58$                 & /                              & $85.20$ & $85.20$ & $\mathbf{85.35}$ & $84.95$  & $84.10$            \\ \bottomrule[1.5pt]
\end{tabular}
  \end{adjustbox}
\begin{tablenotes}
\scriptsize{
\item[1] ``*'' denotes global optimal solutions.
\item[2] \textbf{Bold} numbers indicate the best result in each row.   
\item[3] ${bin.}$ denotes binarized datasets.
\item[4] \underline{Underlined} numbers indicate the average over 42 datasets that \texttt{Quant-BnB} completes.
\item[5] \texttt{Quant-BnB} fails to complete at depths 4 and 8.
 }
\end{tablenotes}
  \footnotesize{ }
\label{51acc}
\end{table}

%\textbf{Accuracy:} As shown in \Cref{51acc}, \texttt{MHABR} consistently achieves superior training accuracy, outperforming other heuristic methods by \textbf{at least 3.15\%} on average. \texttt{Quant-BnB} is competitive only at $d=3$, completing just 42 datasets with an average gap of only \textbf{0.58\%} above \texttt{MHABR} and failing to converge for deeper trees. On binarized datasets, $\texttt{MHABR}_{\text{bin.}}$ performs slightly below \texttt{DL8.5} (with an average optimality gap of 0.68\%), yet it is still surpassed by \texttt{MHABR} by 3.85\%. Notably, \texttt{MHABR} reaches the \textbf{global optimum} at $d=2$ on both types. 

\textbf{Accuracy:} As shown in \Cref{51acc}, \texttt{MHABR} achieves the strongest average \emph{testing} accuracy among the compared methods at depths $d=2$ and $d=3$ (excluding \texttt{Quant-BnB} on datasets for which it does not converge). At $d=8$, the gap between training and testing accuracy indicates some overfitting on the small datasets, so these results should be interpreted together with validation-based tuning results reported later. Even in this regime, however, \texttt{MHABR} maintains an average \textbf{0.99\%} advantage over \texttt{DL8.5}. A broader comparison with non-global baselines on medium- and large-scale datasets, where overfitting is less pronounced, is provided in the following subsection.

In terms of training accuracy, \texttt{MHABR} consistently achieves the highest values among the heuristic baselines, outperforming \texttt{CART} by an average of \textbf{2.63\%}. Since training accuracy more directly reflects the quality of the optimized tree on the training objective, we use it here as an indicator of optimization quality rather than generalization performance. Under this metric, \texttt{Quant-BnB} is competitive only at $d=3$, where it completes 42 datasets and yields an average gap of 0.51\% relative to \texttt{MHABR}$,$ but it fails to converge for deeper trees. On binarized datasets, $\texttt{MHABR}_{\text{bin.}}$ performs slightly below \texttt{DL8.5}, with an average gap of 0.69\%, whereas standard \texttt{MHABR} on the original continuous datasets exceeds \texttt{DL8.5} by an average of 1.62\%. Notably, \texttt{MHABR} attains the \textbf{global optimum} at $d=2$ on both dataset types. Although \texttt{DPDT(Np)} slightly improves upon standard \texttt{DPDT} in training accuracy, it remains on average 0.81\% below \texttt{MHABR}, suggesting that the MH refinements improve optimization quality beyond the one-shot approximate search.

\textbf{Scalability and efficiency:}
A key property of \texttt{MHABR} is its scalability, demonstrated by a slower increase in computational cost with tree depth compared to global optimal methods, while maintaining higher accuracy than heuristic baselines. As shown in \Cref{figavgtime}, across the 51 small-scale datasets, \texttt{Quant-BnB} becomes prohibitively expensive for $d \geq 3$, and \texttt{DL8.5} exhibits a sharp rise in computational cost with increasing depth, particularly between depths 4 and 8, eventually exceeding the runtime of \texttt{MHABR}. In contrast, \texttt{MHABR} exhibits nearly linear growth in computational cost, similar to other heuristic methods, which highlights its superior scalability for deeper trees.

\begin{figure}[h]
\centering
\includegraphics[width=0.84\linewidth]{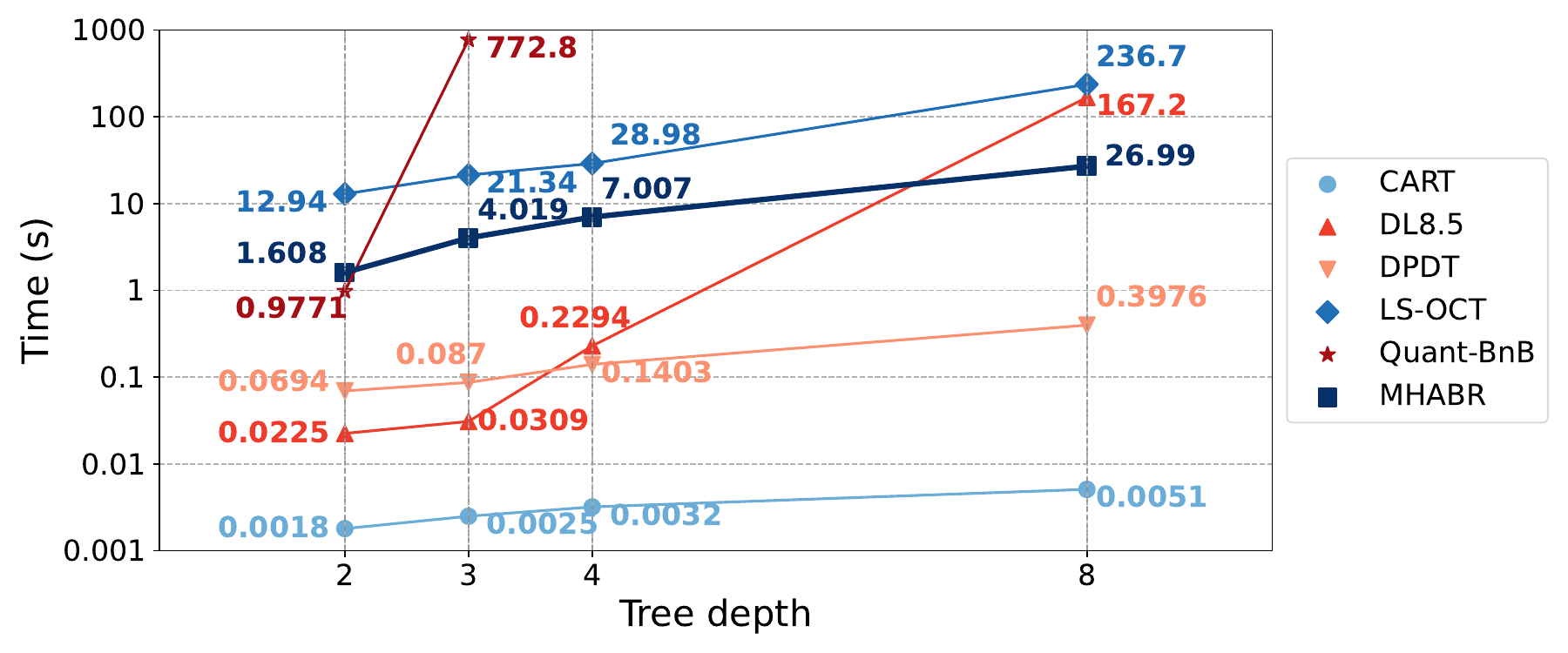}
%\captionof{figure}{Training time on 51 small datasets.}
\captionof{figure}{Average training time across 51 small datasets on a logarithmic scale. \texttt{CART} and \texttt{DPDT} remain the fastest methods, whereas \texttt{Quant-BnB} and \texttt{DL8.5} exhibit sharp runtime growth with depth. \texttt{MHABR} is slower than the heuristic baselines but scales more moderately than the global optimization baselines, requiring $26.99~\mathrm{s}$ at depth~8, compared with $167.2~\mathrm{s}$ for \texttt{DL8.5} and $236.7~\mathrm{s}$ for \texttt{LS-OCT}.}
\label{figavgtime}
\end{figure}

\subsection{Medium-scale and Large-scale Datasets}
This section evaluates the proposed method on the challenge of learning deep trees (for $d=4,8$) for large-scale datasets, where \texttt{Quant-BnB} fails for all cases and \texttt{DL8.5} only succeeds on several datasets at depth 4 and fails at depth 8. \Cref{largeD} compares the results on five medium and three large datasets against 3 baselines. For large-scale applications of our algorithm, we incorporate additional techniques (detailed in \Cref{largeapp}) to improve the efficiency of our algorithm, as detailed in the following, which is further analyzed in \Cref{secablation}.

%To test \texttt{MHABR} in these 3 large datasets, 
% \texttt{DL8.5} does not have this option, so its results are limited to the best it could achieve in 24 hours. 

\begin{table*}[h]  
\setcounter{table}{0}
\centering
\refstepcounter{table}
\captionof{table}{Performance comparison on 5 medium-scale (top) and 3 large-scale datasets (bottom). }
 \label{largeD}
\begin{adjustbox}{width=1\textwidth,center} 
\begin{tabular}{c c c c l c c r c c r}
\toprule[1.5pt]
\multicolumn{1}{c}{\multirow{2}{*}{Dataset}} & \multicolumn{1}{c}{\multirow{2}{*}{$n$}} & \multicolumn{1}{c}{\multirow{2}{*}{$p$}} & \multicolumn{1}{c}{\multirow{2}{*}{$n_c$}} & \multicolumn{1}{l}{\multirow{2}{*}{Method}} 
& \multicolumn{3}{c}{$d=4$} & \multicolumn{3}{c}{$d=8$} \\ 
\cmidrule(lr){6-8} \cmidrule(lr){9-11}
\multicolumn{1}{c}{} & \multicolumn{1}{c}{} & \multicolumn{1}{c}{} & \multicolumn{1}{c}{} & \multicolumn{1}{c}{} 
& Train (\%) & Test (\%) & Time (s) 
& Train (\%) & Test (\%) & Time (s) \\   
\midrule[1pt]

% ===== Dataset 1: Avila =====
\multirow{5}{*}{Avila} & \multirow{5}{*}{10,430} & \multirow{5}{*}{10} & \multirow{5}{*}{12}
& \texttt{CART}   & 57.24 & 56.58 & 0.03 & 76.77 & 74.87 & 0.05 \\
& & & & \texttt{DPDT}   & 58.71 & 58.85 & 0.90 & 79.71 & 77.00 & 2.96 \\
& & & & \texttt{LS-OCT} & 60.33 & 59.87 & 183.61 & 79.84 & 77.00 & 1585.04 \\
& & & & \texttt{MHABR}  & \textbf{61.59} & \textbf{60.75} & 8.42 & \textbf{93.19} & \textbf{91.53} & 14.21 \\
\midrule[0.5pt]

% ===== Dataset 2: Eeg =====
\multirow{5}{*}{Eeg} & \multirow{5}{*}{14,980} & \multirow{5}{*}{14} & \multirow{5}{*}{2}
& \texttt{CART}   & 70.12 & 69.30 & 0.03 & 79.31 & 76.30 & 0.06 \\
& & & & \texttt{DPDT}   & 72.86 & 71.92 & 1.20 & 83.72 & 79.48 & 4.24 \\
& & & & \texttt{LS-OCT} & 71.94 & 71.16 & 326.68 & 81.93 & 78.43 & 2498.00 \\
& & & & \texttt{MHABR}  & \textbf{74.50} & \textbf{72.84} & 23.98 & \textbf{88.93} & \textbf{82.51} & 102.00 \\
\midrule[0.5pt]

% ===== Dataset 3: Htru =====
\multirow{5}{*}{\makecell[l]{Htru}} & \multirow{5}{*}{17,898} & \multirow{5}{*}{8} & \multirow{5}{*}{2}
& \texttt{CART}   & 97.94 & 97.83  & 0.05 &  98.66 & \textbf{97.75}  & 0.08  \\
& & & & \texttt{DPDT} & 98.14 &  \textbf{97.92} & 1.63  &  98.87 &  97.68 &  4.82   \\
& & & & \texttt{LS-OCT} & 98.11 & \textbf{97.92}  & 230.05 &  98.74 & 97.66  &  1897.76 \\
& & & & \texttt{MHABR}  & \textbf{98.23} & 97.81  & 521.25  &  \textbf{99.26} &  97.42 &  3363.61 \\
\midrule[0.5pt]

% ===== Dataset 4: Shuttle =====
\multirow{5}{*}{\makecell[l]{Shuttle }} & \multirow{5}{*}{43,500} & \multirow{5}{*}{9} & \multirow{5}{*}{7}
& \texttt{CART}   &  99.83 &  99.80 & 0.03 & \textbf{100.00} &  99.95 &  0.04 \\
& & & & \texttt{DPDT}   & 99.90  &  99.88 &  0.93 &  \textbf{100.00} & \textbf{99.96}  & 1.64 \\
& & & & \texttt{LS-OCT} &  99.92 & 99.89 & 665.52 & \textbf{100.00} &  \textbf{99.96}  &  5655.26 \\
& & & & \texttt{MHABR}  &  \textbf{99.97} & \textbf{99.95}  &  28.53 &  \textbf{100.00} &  99.95 & 30.09 \\
\midrule[0.5pt]

% ===== Dataset 5: Skin =====
\multirow{5}{*}{\makecell[l]{Skin-\\seg.}} & \multirow{5}{*}{245,057} & \multirow{5}{*}{3} & \multirow{5}{*}{2}
& \texttt{CART}   & 97.42 & 97.37 & 0.08 & 99.50 & 99.49 & 0.10 \\
& & & & \texttt{DPDT}   & 98.22 & 98.20 & 5.48 & 99.76 & 99.74 & 10.58 \\
& & & & \texttt{LS-OCT} & 98.41 & 98.39 & 1471.57 & 99.75 & 99.73 & 15259.16 \\
& & & & \texttt{MHABR}  & \textbf{98.72} & \textbf{98.72} & 46.62 & \textbf{99.95} & \textbf{99.91} & 164.83 \\
\midrule[0.5pt]
\midrule[0.5pt]

% ===== Dataset 4: SUSY =====
\multirow{3}{*}{SUSY} & \multirow{3}{*}{5,000,000} & \multirow{3}{*}{18} & \multirow{3}{*}{2}
& \texttt{CART}   & 76.60 & 76.61 & 29.32 & 78.38 & 78.34 & 75.79 \\
& & & & \texttt{DPDT}   & 76.86 & 76.87 & 1197.44 & 78.72 & 78.68 & 2951.08 \\
& & & & \boxed{\texttt{MHABR}} & \textbf{77.84} & \textbf{77.86} & 17632.26 & \textbf{78.94} & \textbf{78.89} & 81290.06 \\
\midrule[0.5pt]

% ===== Dataset 5: HIGGS =====
\multirow{3}{*}{HIGGS} & \multirow{3}{*}{11,000,000} & \multirow{3}{*}{28} & \multirow{3}{*}{2}
& \texttt{CART}   & 65.58 & 65.55 & 81.98 & 69.47 & 69.40 & 216.10 \\
& & & & \texttt{DPDT}   & 66.20 & 66.18 & 3223.50 & 69.67 & 69.57 & 7158.52 \\
& & & & \boxed{\texttt{MHABR}} & \textbf{66.88} & \textbf{66.85} & 11826.94 & \textbf{70.17} & \textbf{70.05} & 78370.11 \\
\midrule[0.5pt]

% ===== Dataset 6: WESAD =====
\multirow{3}{*}{WESAD} & \multirow{3}{*}{60,807,600} & \multirow{3}{*}{8} & \multirow{3}{*}{8}
& \texttt{CART}   & 57.46 & 57.46 & 123.23 & 80.27 & 80.26 & 293.24 \\
& & & & \texttt{DPDT}   & 61.21 & 61.20 & 5225.47 & 79.45 & 79.45 & 14623.90 \\
& & & & \boxed{\texttt{MHABR}} & \textbf{61.33} & \textbf{61.33} & 1464.82 & \textbf{85.37} & \textbf{85.38} & 7977.35 \\
\bottomrule[1.5pt]
\end{tabular}
\end{adjustbox}
\begin{tablenotes}
\scriptsize{
\item[1] \textbf{Bold} numbers indicate the best result among all the methods.   
\item[2] \texttt{LS-OCT} fails to converge on large datasets.\\
%\item[3] \texttt{DL8.5} converges only at $d=4$ on medium datasets; in other cases, it terminates at the time limit or fails to converge. \quad \quad\\
\item[3] \boxed{\texttt{MHABR}} is a light version of \texttt{MHABR} incorporating additional techniques. \quad \quad\quad \quad\quad \quad\quad \quad\quad \quad\quad \quad\quad \quad\quad \quad\quad \quad\quad \quad\quad \quad\quad \quad
 }
\end{tablenotes}
\end{table*}

\textbf{Techniques for large-scale datasets: mini-batch and tolerance termination} 
%The mini-batch technique uses a fraction $\theta$ of the dataset partitioned into the subtrees for training, with a tolerance $\varepsilon$ as the stopping criterion. While this technique may slightly reduce accuracy, it significantly reduces computational time. The specific steps and analysis for implementation are detailed as follows.% in \cref{minibatch}.
To enhance efficiency for large-scale datasets, mini-batch sampling and a tolerance termination criterion are employed. Mini-batch sampling uses a subset of the dataset, defined by a sampling ratio $\theta\in(0,1]$, as input to \texttt{ABR}, while the tolerance criterion determines termination. Although this approach may slightly reduce accuracy, it significantly decreases computational time, as demonstrated later. The primary factor affecting computational time is the number of splits in the initial branch, influenced by dataset characteristics, with continuous features often contributing more distinct values, many of which minimally impact the loss function.

For the mini-batch strategy, a proportion $\theta$ of the original data is sampled for each subtree. On large-scale datasets, continuous features often require fewer samples, as their splits generally result in only small fluctuations in the overall loss. To further enhance the efficiency of this algorithm for extremely large-scale datasets, we consider a tolerance parameter $\varepsilon$ as a termination criterion. Let $\texttt{length}(\mathbb{K})$ denote the size of the split index set. The termination criterion can be formally defined as:
\begin{align} \label{terminate} 
{\texttt{length}(\mathbb{K})}/{n} \leq \varepsilon. 
\end{align}
%For a fixed feature index $a$, the candidate set of the \texttt{BR} method, denoted by $\mathcal{K}^a$, contains $n_a+1$ elements. If no element is reduced during the process, the bisection branching strategy generates a set $\mathbb{K}$ with at most $(n_a+1)/2^j$ elements after the $j$-th iteration. By \Cref{terminate}, the algorithm will terminate after at most $\lceil \log_2(({n_a+1)}/{n\varepsilon)} \rceil$ iterations. %Since the algorithm generates two subsets per iteration, the total number of splits evaluated throughout this process is $\texttt{length}(\mathbb{K}) \leq \varepsilon \cdot n$ when the termination criterion is satisfied. 

For a fixed feature index $a$, the candidate set $\mathcal{K}^a$ of \texttt{BR} contains $n_a+1$ threshold indices. After $j$ successive bisections, any descendant candidate region $\mathbb{K}$ contains at most $\left\lceil (n_a+1)/2^j\right\rceil$ indices. By \Cref{terminate}, the search of this region terminates once
$\texttt{length}(\mathbb{K})\leq n\cdot\varepsilon$. Consequently, this condition is reached after at most $\max\left\{0,\left\lceil \log_2\left((n_a+1)/(n\varepsilon)\right)\right\rceil\right\}$ successive bisections, up to integer rounding.

\textbf{Performance on medium and large datasets:} Across the evaluated datasets and baselines, \texttt{MHABR} achieves the highest average test accuracy among the compared methods on these datasets, while remaining computationally feasible for deeper trees. On average, it outperforms \texttt{CART} by \textbf{3.66\%} and \texttt{DPDT} by \textbf{3.01\%} in testing accuracy at $d=8$. On medium datasets, although solvers like \texttt{DPDT} often require less runtime, \texttt{MHABR} uses additional computation to achieve higher test accuracy on these datasets. Specifically, across the five medium-scale datasets, \texttt{MHABR} significantly improves testing accuracy, outperforming \texttt{CART} by an average of 1.84\% at depth 4 and 4.59\% at depth 8, while exceeding \texttt{DPDT} by 0.66\% at depth 4 and 3.49\% at depth 8. The largest improvement is observed on the \textit{Avila} dataset at $d=8$, where \texttt{MHABR} surpasses all other methods by at least 13.35\% in training accuracy and 14.53\% in testing accuracy. Furthermore, on large-scale datasets where exact methods like \texttt{LS-OCT} fail entirely, \texttt{MHABR} achieves the highest test accuracy among the compared methods on these datasets. It consistently converges alongside \texttt{CART} and \texttt{DPDT}, outperforming \texttt{CART} by an average of 1.14\% at depth 4 and 4.13\% at depth 8, while exceeding \texttt{DPDT} by 0.63\% at depth 4 and 3.89\% at depth 8.

\paragraph{Runtime Determinants}
The training time of \texttt{MHABR} is not necessarily monotone in the sample size $n$, because its dominant computational burden is determined more directly by the number of unique split candidates, characterized by $n_a$ and $|\mathcal K^a|$. As shown in \Cref{largeD}, \texttt{Shuttle} contains substantially more observations than \texttt{Htru}, yet requires considerably less training time. A similar non-monotone relationship is also observed for \texttt{Quant-BnB}; for example, \texttt{Skin} is larger than \texttt{Avila} but is solved in less time \citep{mazumder2022quant}. These observations indicate that sample size alone is an incomplete predictor of computational cost. Repeated feature values do not create additional candidate thresholds, while dataset-dependent reduction and early termination can further contract the effective search space.

%{\color{red}\textbf{Performance on medium and large datasets:} On medium datasets, though \texttt{DL8.5} achieves accuracy competitive with \texttt{MHABR} at depth-4 trees, it requires more time, and will terminate early when $d=8$ due to the 24-hour time limit. On large datasets, \texttt{LS-OCT} fails entirely, and \texttt{DL8.5} also struggles to produce solutions. In contrast, \texttt{MHABR} consistently achieves the best accuracy, with only \texttt{CART} and \texttt{DPDT} able to complete the runs alongside it. \texttt{MHABR} significantly improves testing accuracy over these two heuristic algorithms, outperforming \texttt{CART} by 2.58\% at depth $4$ and \textbf{4.94\%} at depth $8$, and exceeding \texttt{DPDT} by 0.86\% at depths $4$ and \textbf{4.06\%} and at depths $8$, respectively. Notably, \texttt{MHABR} delivered exceptional results on dataset \texttt{Avila} for $d=8$, surpassing other methods by at least 12.4\% in both training and testing accuracy. }

\subsection{Compared to \texttt{DPDT}}
\texttt{DPDT} is one of the strongest heuristic baselines in our experiments. We therefore provide a more detailed comparison in this subsection. We evaluate \texttt{DPDT} under different parameter configurations and with $\alpha$-tuning to show that \texttt{MHABR} achieves better optimization quality while remaining computationally efficient under a comparable search space.

Recall the training results of $d=8$ across 51 datasets in \Cref{51acc}; more specifically, \texttt{MHABR} strictly outperforms \texttt{DPDT} on 23 datasets with an overall improvement of 46.29\%, while being only 3.93\% worse on 7 datasets. These results demonstrate that \texttt{MHABR} achieves better optimization quality compared to default \texttt{DPDT}.

\begin{figure}[h]
    \centering
    \includegraphics[width=0.64\linewidth]{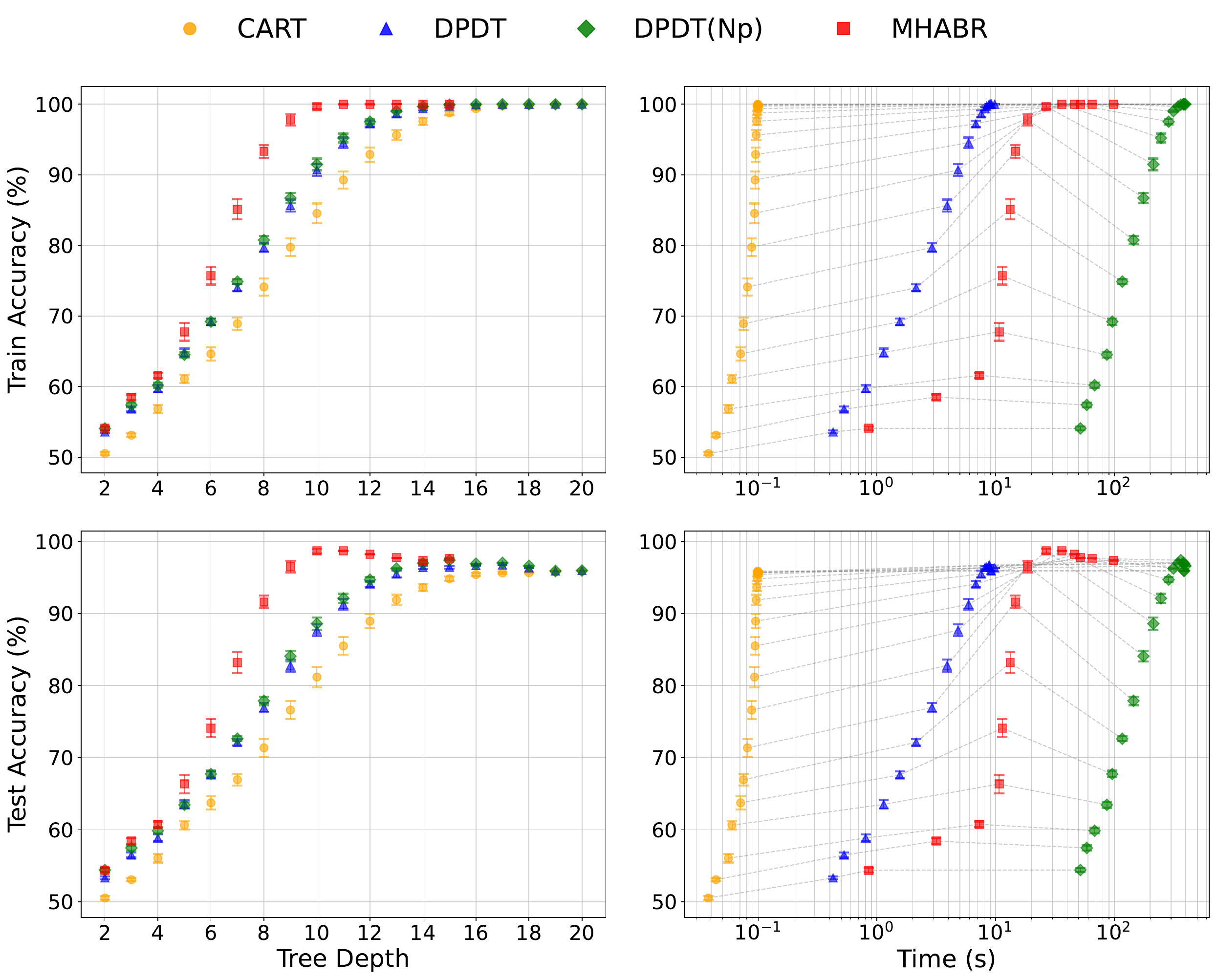}
    \caption{Training/testing accuracy vs. tree depth/running time on \texttt{Avila}}
    \label{fig_pareto_time}
\end{figure}

\begin{figure}[h]
    \centering
    \includegraphics[width=0.64\linewidth]{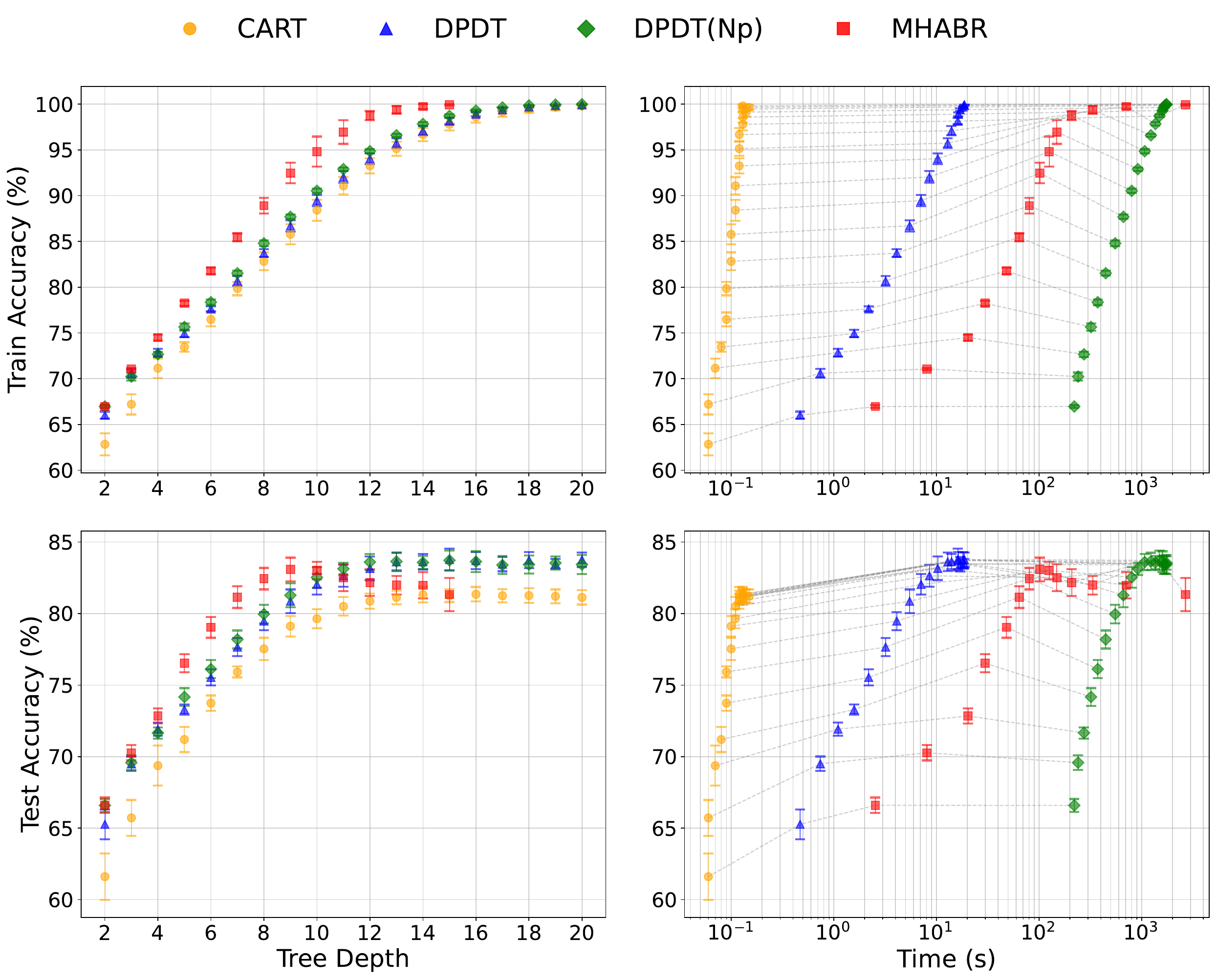}
    \caption{Training/testing accuracy vs. tree depth/running time on \texttt{Eeg}}
    \label{fig_pareto_time2}
\end{figure}

For a detailed comparison with \texttt{DPDT} under comparable settings, we also compare to \texttt{DPDT(NP)}, \texttt{cart\_nodes\_list=(Np, )}, which enables it to achieve the same theoretical accuracy as \texttt{ABR}. To illustrate the performance gains brought by the reduction and MH techniques, we use the datasets \texttt{Avila} and \texttt{Eeg} to plot the corresponding Pareto fronts for \texttt{CART}, \texttt{DPDT}, \texttt{DPDT(NP)}, and \texttt{MHABR}. We evaluate tree depths from $2$ to $20$. For \texttt{MHABR}, we report results only up to $d = 15$, since it already reaches almost $100\%$ training accuracy at depth $10$ and begins to show signs of overfitting thereafter.

The Pareto fronts shown in \Cref{fig_pareto_time,fig_pareto_time2} illustrate the trade-offs between training/testing accuracy and tree depth, as well as between training/testing accuracy and training time. To clearly show the relationship between tree depth and running time, we also connect the points of each same depth in the figures of training/testing accuracy vs. time. In \Cref{fig_pareto_time}, among all methods, \texttt{MHABR} achieves the highest training accuracy and the highest peak test accuracy among the compared methods in this experiment. It achieves almost perfect training accuracy 99.66\% at depth $d = 10$, while its testing accuracy also achieves the highest peak performance $98.68\%$. Beyond this regime, its testing accuracy declines, reflecting its tendency to overfit at larger depths. In contrast, \texttt{DPDT} and \texttt{DPDT(Np)} approach this level only at depths $d \geq 14$, whereas \texttt{CART} approaches it only at depths $d\geq17$. In \Cref{fig_pareto_time2}, \texttt{MHABR} shows weaker testing accuracy when $d \ge 10$, which is expected due to overfitting, as its training accuracy is nearly 100\%. This leads to a drop in testing accuracy while running time increases. Nonetheless, excluding overfitting, \texttt{MHABR} still outperforms the other methods.

In terms of running time, \texttt{CART} is the fastest method, but it requires substantially deeper trees (i.e., much larger models) to achieve competitive accuracy. \texttt{DPDT} improves accuracy relative to \texttt{CART} with only a modest increase in computation, but its strength is mainly evident in its default configuration. For achieving higher accuracy, \texttt{MHABR} performs better and requires less time than \texttt{DPDT(Np)}. Since model size is crucial for the interpretability of decision trees, obtaining high accuracy with a smaller model is particularly desirable, which is an advantage offered by \texttt{MHABR}.

\texttt{MHABR} achieves higher accuracy than \texttt{DPDT(Np)} at the same depth (before overfitting), demonstrating the substantial performance gains contributed by the MH component. Moreover, \texttt{MHABR} requires less training time than \texttt{DPDT(Np)} before overfitting occurs, highlighting its superior efficiency enabled by the Reduction technique. Although the slope of the running-time increase for \texttt{MHABR} becomes steeper than that of \texttt{DPDT(Np)} once the depth exceeds 13, an effect attributable to the MH procedure, consistent with our complexity analysis, the training accuracy has already reached 100\% at this point and cannot improve further. Thus, MH iterations can be relaxed or omitted beyond this depth.

\begin{table*}[h]
\centering
\scriptsize
\setlength{\tabcolsep}{4pt}
\renewcommand{\arraystretch}{1.2}
\caption{$\alpha$-tuning results of 25 datasets with $n\geq 500$.}
\begin{adjustbox}{width=\textwidth}
\begin{tabular}{c|cccc|cccc}
\toprule[1.5pt]
\multirow{2}{*}{Depth} & \multicolumn{4}{c|}{Training Accuracy} & \multicolumn{4}{c}{Testing Accuracy} \\ \cmidrule(rl){2-5} \cmidrule(rl){6-9}                       
& CART & DPDT & DPDT(Np) & MHABR & CART & DPDT & DPDT(Np) & MHABR \\
\midrule[1pt]
\textbf{2} & 81.29 (1.19) & 82.54 (1.13) & 82.57 (1.06) & \textbf{82.89} (1.03) 
& 80.07 (1.94) & 81.05 (1.87) & 81.09 (1.93) & \textbf{81.34} (3.16) \\
\textbf{3} & 84.29 (1.49) & 85.69 (1.33) & 85.78 (1.19) & \textbf{86.54} (1.12) 
& 82.57 (1.85) & 83.47 (1.61) & 83.54 (1.71) & \textbf{83.92} (1.77) \\
\textbf{4} & 86.66 (1.44) & 87.93 (1.56) & 88.23 (1.23) & \textbf{89.20} (1.53) 
& 84.43 (1.79) & 84.87 (1.69) & 84.77 (1.76) & \textbf{85.40} (1.90) \\
\textbf{5} & 88.35 (1.91) & 90.27 (2.12) & 90.47 (1.90) & \textbf{91.76} (1.77) 
& 85.42 (2.02) & 86.11 (2.05) & 86.26 (2.12) & \textbf{86.31} (1.95) \\
\bottomrule[1.5pt]
\end{tabular}
\label{table_val_2}
\end{adjustbox}
\end{table*}

\paragraph{$\alpha$-tuning}
To evaluate the methods under a realistic model-selection setting, we split the data into 50\% for training, 25\% for validation, and 25\% for testing subsets. For every method, $\alpha$ is selected from $\alpha \in \{0.0, 0.001, 0.005, 0.01, 0.05, 0.1, 0.2\}$, using the same validation-based tuning procedure. We consider 25 datasets with more than 500 samples and evaluate tree depths from 2 to 5, thereby reducing the instability associated with validation-based selection on very small datasets. As shown in \Cref{table_val_2}, \texttt{MHABR} achieves the highest average training and testing accuracy at every evaluated depth under this common tuning protocol. In particular, its consistent advantage in testing accuracy indicates that \texttt{MHABR} provides the strongest overall predictive performance among the compared methods after $\alpha$-tuning. Because $\alpha$ is selected by validation, \Cref{table_val_2} evaluates model-selection and predictive performance, not the ability of each method to minimize a common unpenalized training objective.

%To provide a fairer evaluation reducing the instability associated with validation-based selection on very small datasets, we consider 25 datasets that are larger than 500, across depths 2–5. As shown in \Cref{table_val_2}, \texttt{MHABR} achieves the highest average training and testing accuracy at every evaluated depth under this common tuning protocol. In particular, its consistent advantage in testing accuracy demonstrates that \texttt{MHABR} provides the strongest overall predictive performance among the compared methods after $\alpha$-tuning.
%As shown in \Cref{table_val_2}, \texttt{MHABR} consistently outperforms the other methods in training and testing accuracy across all depths. 

%The efficiency of our algorithm also enables achieving higher accuracy through hyperparameter tuning, such as $\alpha$-tuning.

%Now, we examine the computational efficiency of our proposed method. \cref{figavgtime} illustrates the increasing trend in runtime for three algorithms: \texttt{CART}, \texttt{DL8.5}, and \texttt{MHABR}, across tree depths $d\in\{2,3,4,8\}$. The results indicate a clear exponential increase in runtime for \texttt{DL8.5}, whereas \texttt{MHABR} exhibits a runtime growth trend comparable to that of \texttt{CART} as the tree depth increases. As analyzed in the computational complexity section (\cref{complexity}), \texttt{MHABR} effectively mitigates the escalation of computational cost with increasing tree depth.

%\textbf{Compared to \texttt{Quant-BnB}:} 
%\subsection{Optimality gap and extension}

%, albeit at the cost of higher computational time.

\subsection{Discussion about comparison with global optimal methods}%\texttt{DL8.5} and \texttt{Quant-BnB}}
In this subsection, we provide a comparative analysis of \texttt{MHABR} and global optimal methods: \texttt{DL8.5} and \texttt{Quant-BnB}, focusing on small datasets for which all evaluated methods successfully identified feasible solutions. The results show that our method achieves higher training accuracy than \texttt{DL8.5} in this setting and offers a favorable empirical trade-off between optimization quality and runtime relative to \texttt{Quant-BnB}.

\begin{figure}[h]
    \centering
    \includegraphics[width=\linewidth]{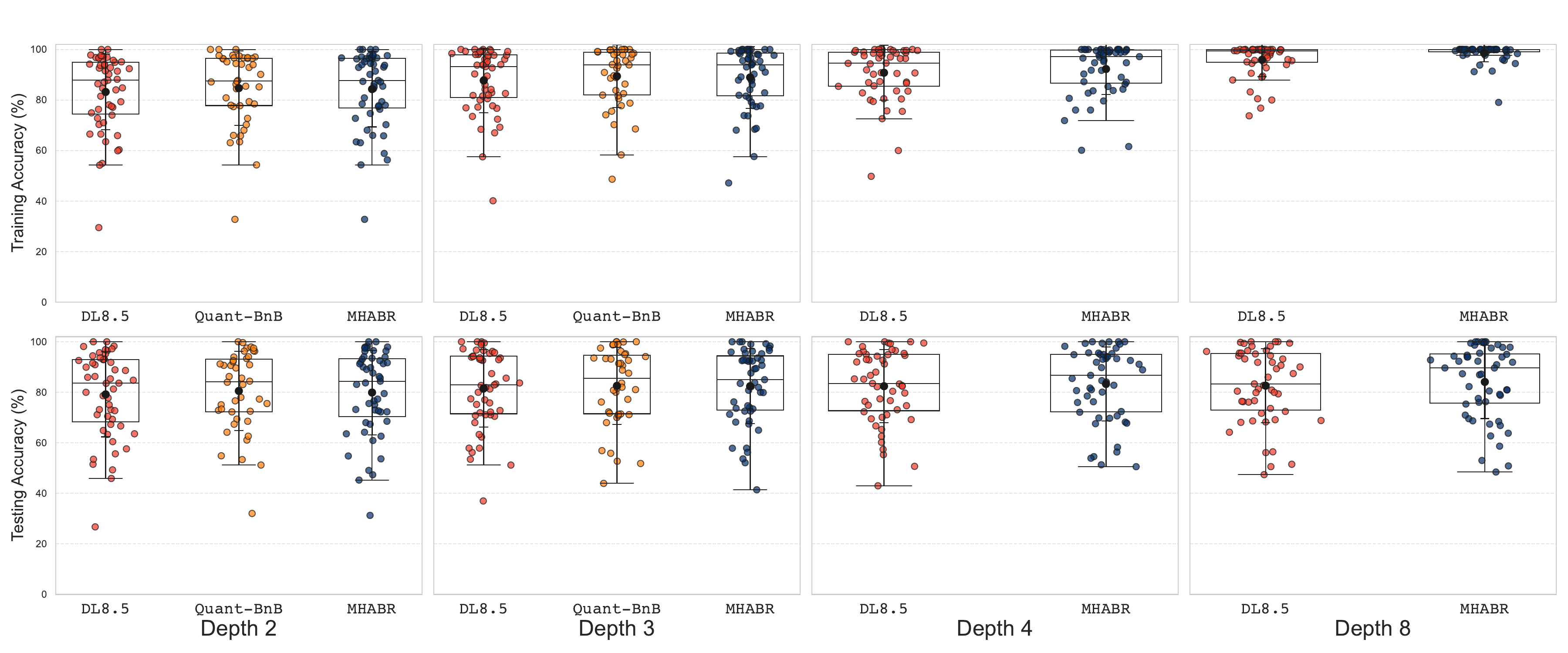}
    \caption{Training and testing accuracy of 51 small datasets across $d=2,3,4,8$.}
    \label{51trantest}
\end{figure}

We use box plots, as shown in \Cref{51trantest}, to illustrate the results across all datasets. Although \texttt{DL8.5} is a global optimal method, it relies on the binarization of continuous datasets. This preprocessing step introduces approximation errors, leading to worse training accuracy compared to the other two methods. This negative impact is particularly evident in the training results at depth 8. In contrast, \texttt{MHABR} achieves higher training accuracy than \texttt{DL8.5} on these datasets, especially at depth 8. Meanwhile, \texttt{Quant-BnB} can handle the original continuous datasets natively but is limited to a maximum depth of 3 (successfully completing only 42 datasets). \Cref{42trantest} presents the results for these 42 datasets at depths 2 and 3. \texttt{MHABR} achieves training accuracy equivalent to that of \texttt{Quant-BnB} at $d=2$. At $d=3$, it demonstrates only a minor disparity, performing marginally lower than \texttt{Quant-BnB}. This outcome suggests that the solutions generated by \texttt{MHABR} closely approximate the global optimal solutions for shallow trees. Furthermore, it indicates that the potential loss of optimality resulting from the \textit{CSP} approximation is effectively compensated for by the MH procedure.

\begin{table}[h]
\centering
\caption{Comparison on 42 small datasets (excluding 9 datasets on which \texttt{Quant-BnB} fails to complete).} 
\label{42trantest}
\scriptsize % smaller font
\renewcommand{\arraystretch}{1} % tighter rows
\setlength{\tabcolsep}{12pt} % increase horizontal spacing between columns
\begin{adjustbox}{center}
\begin{tabular}{clcccc}
\toprule[1.2pt]
 & Depth & \texttt{CART} & \texttt{DL8.5} & \texttt{Quant-BnB} & \texttt{MHABR} \\ 
\midrule[1pt]
\multirow{2}{*}{Train (\%)} 
 & $d=2$ & $84.10\pm14.11$  & $83.93\pm15.04^*$ & $\mathbf{84.70\pm14.76^*}$ & $\mathbf{84.70\pm14.76^*}$ \\
 & $d=3$ & $87.99\pm12.41$ &  $88.08\pm13.18^*$ & $\mathbf{89.44\pm12.36^*}$ & $88.86\pm12.62$ \\
\midrule[0.25pt]
\multirow{2}{*}{Test (\%)} 
 & $d=2$ & $78.98\pm16.37$  & $79.89\pm16.40$ & $\mathbf{80.53\pm15.71}$ & $80.47\pm15.91$ \\ 
 & $d=3$ & $81.41\pm14.97$ & $81.71\pm15.79$ & $\mathbf{82.56\pm15.22}$ & $82.30\pm15.14$  \\  
\bottomrule[1.2pt]
\end{tabular}
\end{adjustbox}
\begin{tablenotes}
\scriptsize
 \item[1] \textbf{Bold} numbers indicate the best solutions.
 \item[2] ``$*$'' signifies the global optimum in the corresponding formulation.
 \item[3] Even though \texttt{Quant-BnB} and \texttt{MHABR} can obtain the optimal loss when $d=2$, the optimal trees might differ. \qquad\qquad\qquad\qquad\qquad\qquad
\end{tablenotes}
\end{table}

\subsection{Optimality Gap Evaluation and An Extension}\label{MHABR2}

To assess the optimality of our algorithm, we evaluate 10 datasets (from the 51 small datasets) and compute the optimality gap as the relative difference between the solution produced by \texttt{MHABR} and the global optimum obtained by \texttt{BR}. We further introduce an extended variant, $\texttt{MHABR}^2$ (2-step lookahead method), which reduces the optimality gap by extending the \textit{RLP} from the root node to the top two layers (nodes $\{1,2,3\}$) for $d \geq 3$. Importantly, $\texttt{MHABR}^2$ attains exact optimal solutions for depth-3 trees and further reduces the optimality gap for deeper trees.

$\mathbf{\texttt{MHABR}^2}$: Our algorithm is implemented in \texttt{Julia}. For the default version, we adopt the reduction strategy with parameters $\varepsilon = 0$ and $\theta = 1$. To construct $\texttt{MHABR}^2$, which enlarges the \textit{RLP} to include the top two layers of nodes, we modify the evaluation function in \texttt{ABR} recursion as follows:
$$
\texttt{MHABR}: \; \begin{cases} \texttt{BR}, & d-d_{sub} \leq 1 \\ \texttt{CART}, & \text{otherwise} \end{cases}   \quad\Rightarrow \quad\texttt{MHABR}^2: \; \begin{cases} \texttt{BR}, & d-d_{sub} \leq 2 \\ \texttt{CART}, & \text{otherwise.} 
\end{cases} 
$$

\begin{figure}[h]
    \centering
    \includegraphics[width=0.84\linewidth]{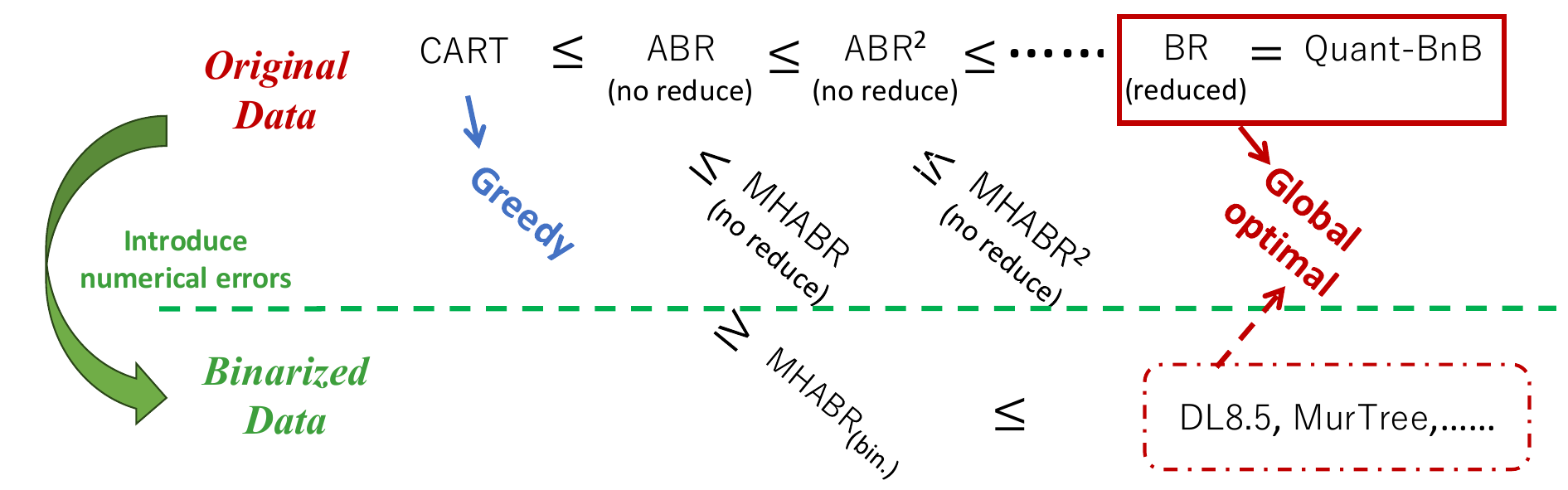}
    \caption{Theoretical relationships in terms of optimality between our algorithms and other global optimal methods}
    \label{fig_relationship}
\end{figure}

First, we illustrate the scope of our method in \Cref{fig_relationship}, which summarizes the theoretical relationships among the proposed approaches. As noted in Lemma~\ref{cartapp}, our algorithm guarantees solutions no worse than \texttt{CART} when no reductions are applied, with the MH procedure further refining results toward better optimality. Expanding the \texttt{RLP} scope improves the approximation, making $\texttt{ABR}^2$ superior to $\texttt{ABR}$, and further enlargement brings methods closer to global optimal approaches such as the unapproximated \texttt{BR} and \texttt{Quant-BnB}. For binarized datasets, the binarization process introduces errors that reduce optimality, so $\texttt{MHABR}_{\text{bin.}}$ performs worse than the original dataset version and is clearly inferior to global optimal methods such as \texttt{DL8.5} and \texttt{MurTree}.

We evaluate the optimality gap using 10 small datasets solvable by \texttt{BR}, with results reported in \Cref{tab_gap_d3,tab_gap_d4}. As shown, \texttt{CART} exhibits a relatively large gap, typically exceeding 3\% on average at both depths 3 and 4. In contrast, \texttt{MHABR} maintains a gap within 1\% on average, while the extended variant, $\texttt{MHABR}^2$, further improves optimality, consistently reducing the gap below 1\%. Correspondingly, the computational cost increases from \texttt{CART} to \texttt{BR} as optimality improves. Therefore, we generally recommend \texttt{MHABR}, which offers strong optimality while keeping the computational cost reasonable.

\begin{table}[h]
\centering
\caption{Comparison of \texttt{CART}, \texttt{MHABR}, $\texttt{MHABR}^2$, and \texttt{BR} against the global optimum across 10 datasets ($d=3$)}
\begin{adjustbox}{width=\textwidth}
\begin{tabular}{ccll||cc|cc|cc|cc}
\toprule[1.5pt]
\multirow{2}{*}{Dataset} & \multirow{2}{*}{$n$} & \multirow{2}{*}{$p$} & \multirow{2}{*}{$n_c$} 
& \multicolumn{2}{c|}{\texttt{CART} (greedy)} 
& \multicolumn{2}{c|}{\texttt{MHABR}} 
& \multicolumn{2}{c|}{$\texttt{MHABR}^2$ (optimal)} 
& \multicolumn{2}{c}{\texttt{BR} (optimal)} \\ 
\cmidrule(lr){5-6} \cmidrule(lr){7-8} \cmidrule(lr){9-10} \cmidrule(lr){11-12}
 &  &  &  & Gap (\%) & Time ($s$) & Gap (\%)& Time ($s$)& Gap (\%)& Time ($s$)& Gap (\%)& Time ($s$) \\ 
\midrule[1.2pt]
Iris & 150 & 4 & 3 & 2.68 & 0.0004 & 0.18 & 0.02 & 0.00 & 0.09 & 0.00 & 0.09 \\
Haberman-survival & 306 & 3 & 2 & 3.97 & 0.0002 & 0.69 & 0.01 & 0.00 & 0.12 & 0.00 & 0.12 \\
Monks-problems-3 & 554 & 6 & 2 & 2.41 & 0.0002 & 2.46 & 0.01 & 0.00 & 0.02 & 0.00 & 0.02 \\
Monks-problems-1 & 556 & 6 & 2 & 12.15 & 0.0005 & 1.02 & 0.01 & 0.00 & 0.02 & 0.00 & 0.02 \\
Monks-problems-2 & 600 & 6 & 2 & 2.70 & 0.0019 & 0.45 & 0.01 & 0.00 & 0.02 & 0.00 & 0.02 \\
Balance-scale & 625 & 4 & 3 & 3.45 & 0.0001 & 1.09 & 0.01 & 0.00 & 0.02 & 0.00 & 0.02 \\
Blood-transfusion & 748 & 4 & 2 & 2.36 & 0.0001 & 1.03 & 0.02 & 0.00 & 0.45 & 0.00 & 0.45 \\
Mammographic-mass & 830 & 5 & 2 & 1.16 & 0.0004 & 0.22 & 0.02 & 0.00 & 0.31 & 0.00 & 0.31 \\
Contraceptive-method-choice & 1,473 & 9 & 3 & 9.95 & 0.0010 & 1.00 & 0.03 & 0.00 & 0.51 & 0.00 & 0.51 \\
Car-evaluation & 1,728 & 6 & 4 & 3.72 & 0.0001 & 0.60 & 0.01 & 0.00 & 0.07 & 0.00 & 0.07 \\ 
\bottomrule[1.5pt]
\end{tabular}
\end{adjustbox}
\label{tab_gap_d3}
\end{table}

\begin{table}[h]
\centering
\caption{Comparison of \texttt{CART}, \texttt{MHABR}, $\texttt{MHABR}^2$, and \texttt{BR} against the global optimum across 10 datasets ($d=4$)}
\begin{adjustbox}{width=\textwidth}
\begin{tabular}{ccll||cc|cc|cc|cc}
\toprule[1.5pt]
\multirow{2}{*}{Dataset} & \multirow{2}{*}{$n$} & \multirow{2}{*}{$p$} & \multirow{2}{*}{$n_c$} 
& \multicolumn{2}{c|}{\texttt{CART} (greedy)} 
& \multicolumn{2}{c|}{\texttt{MHABR}} 
& \multicolumn{2}{c|}{$\texttt{MHABR}^2$} 
& \multicolumn{2}{c}{\texttt{BR} (optimal)} \\ 
\cmidrule(lr){5-6} \cmidrule(lr){7-8} \cmidrule(lr){9-10} \cmidrule(lr){11-12}
 &  &  &  & Gap (\%) & Time ($s$) & Gap (\%)& Time ($s$)& Gap (\%)& Time ($s$)& Gap (\%)& Time ($s$) \\ 
\midrule[1.2pt]
Iris & 150 & 4 & 3 & 0.54 & 0.0005 & 0.00 & 0.02 & 0.00 & 0.30 & 0.00 & 8.36 \\
Haberman-survival & 306 & 3 & 2 & 6.94 & 0.0002 & 1.71 & 0.03 & 0.20 & 0.24 & 0.00 & 5.05 \\
Monks-problems-3 & 554 & 6 & 2 & 0.00 & 0.0002 & 0.00 & 0.02 & 0.00 & 0.04 & 0.00 & 0.24 \\
Monks-problems-1 & 556 & 6 & 2 & 16.26 & 0.0006 & 2.48 & 0.02 & 0.07 & 0.04 & 0.00 & 0.25 \\
Monks-problems-2 & 600 & 6 & 2 & 4.46 & 0.0021 & 1.13 & 0.03 & 0.64 & 0.05 & 0.00 & 0.27 \\
Balance-scale & 625 & 4 & 3 & 5.66 & 0.0001 & 0.78 & 0.03 & 0.40 & 0.06 & 0.00 & 0.39 \\
Blood-transfusion & 748 & 4 & 2 & 4.00 & 0.0001 & 1.23 & 0.08 & 0.49 & 1.08 & 0.00 & 34.71 \\
Mammographic-mass & 830 & 5 & 2 & 2.36 & 0.0005 & 0.62 & 0.07 & 0.22 & 0.73 & 0.00 & 15.38 \\
Contraceptive-method-choice & 1,473 & 9 & 3 & 5.76 & 0.0011 & 1.19 & 0.11 & 0.27 & 1.40 & 0.00 & 28.79 \\
Car-evaluation & 1,728 & 6 & 4 & 3.82 & 0.0001 & 0.11 & 0.05 & 0.00 & 0.15 & 0.00 & 1.24 \\
\bottomrule[1.5pt]
\end{tabular}
\end{adjustbox}
\label{tab_gap_d4}
\end{table}

%Here we compare the solution from our method with that from \texttt{Quant-BnB} to verify the optimality of our solutions.

%requires an average of $\mathbf{1.65\ s}$ per dataset, compared to $\mathbf{1.56\ s}$ for \texttt{Quant-BnB}.

%\paragraph{Techniques for large-scale datasets} The mini-batch technique uses a fraction $\theta$ of the dataset partitioned into the subtrees for training, with a tolerance $\varepsilon$ as the stopping criterion. While this technique may slightly reduce accuracy, it significantly reduces computational time. The specific steps and analysis for implementation are detailed in \cref{minibatch}. Besides, a case study is given in \cref{secablation} to investigate the influence of the reduction strategy and the tunable parameters. 
% Besides, we use \texttt{Avila} as a case study in \cref{secablation} to investigate the influence of the reduction strategy and the tunable parameters $\varepsilon$ and $\theta$ on both the computational efficiency and accuracy of \cref{mhbbnp} across various layers within the MH process, focusing on a decision tree with depth 8.

%As illustrated in \cref{ablation}, we analyze the impact of the reduction strategy and parameter adjustments on computational cost and training accuracy. 

\subsection{Ablation Studies}\label{secablation}

%\subsubsection{Ablation study on additional techniques, MH, and reduction strategy} \label{app_ablation}
We now study how the reduction strategy, the MH method, and tunable parameters $\varepsilon$ and $\theta$ affect the computational efficiency and accuracy of \texttt{MHABR}, by systematically evaluating MH procedure performance per layer using an \texttt{Avila} case study on a depth-8 decision tree.

%We now investigate the influence of the reduction strategy and the tunable parameters $\varepsilon$ and $\theta$ on both the computational efficiency and accuracy of \cref{mhbbnp} across various layers within the MH process. We use \texttt{Avila} as a case study to demonstrate these effects, focusing on a decision tree with depth 8. A systematic analysis is conducted to evaluate the performance of the MH procedure at each layer. 

\begin{figure*}[ht] 
\begin{minipage}[b]{0.42\textwidth}
\centering
\includegraphics [height = 4.2cm]{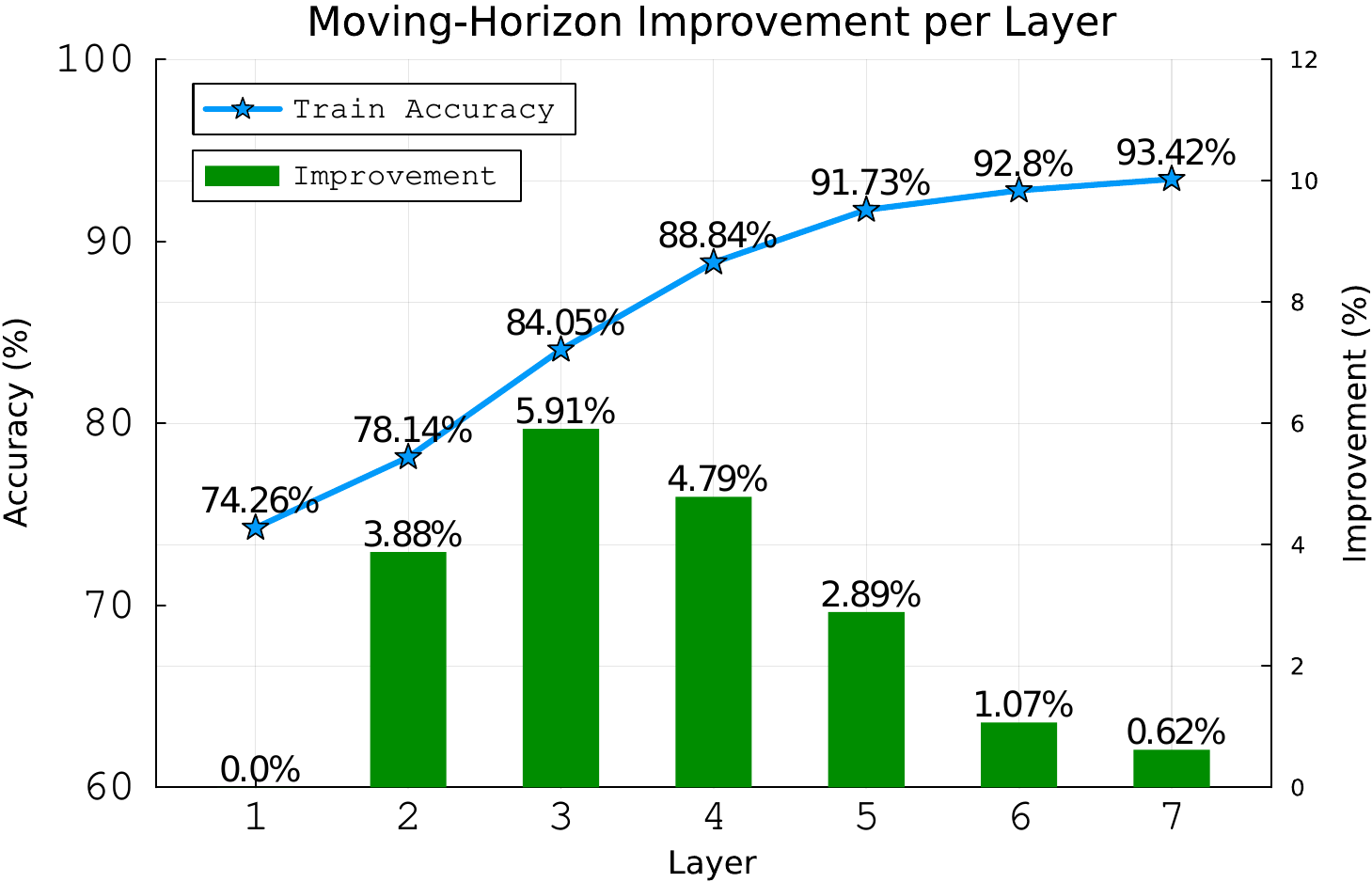} 
\footnotesize{\qquad   \qquad\qquad\qquad(a) Impact of MH on training accuracy }
\end{minipage}
\quad\quad
\centering
\begin{minipage}[b]{0.42\textwidth}
\centering
\includegraphics [height = 4.2cm]{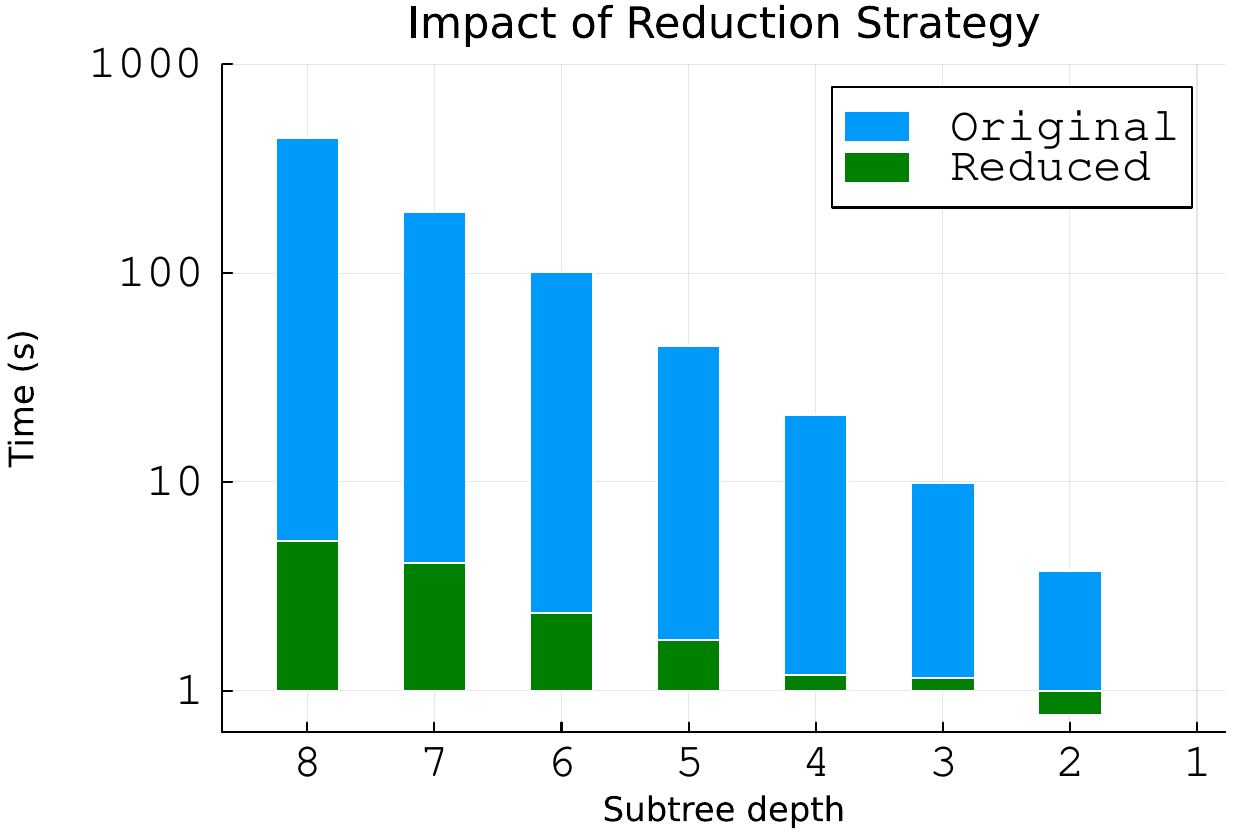} 
\footnotesize{\qquad \qquad \qquad \qquad\qquad\qquad(b) Reduction strategy }
\end{minipage} \\
\begin{minipage}[b]{0.42\textwidth}
\includegraphics [height = 4.2cm]{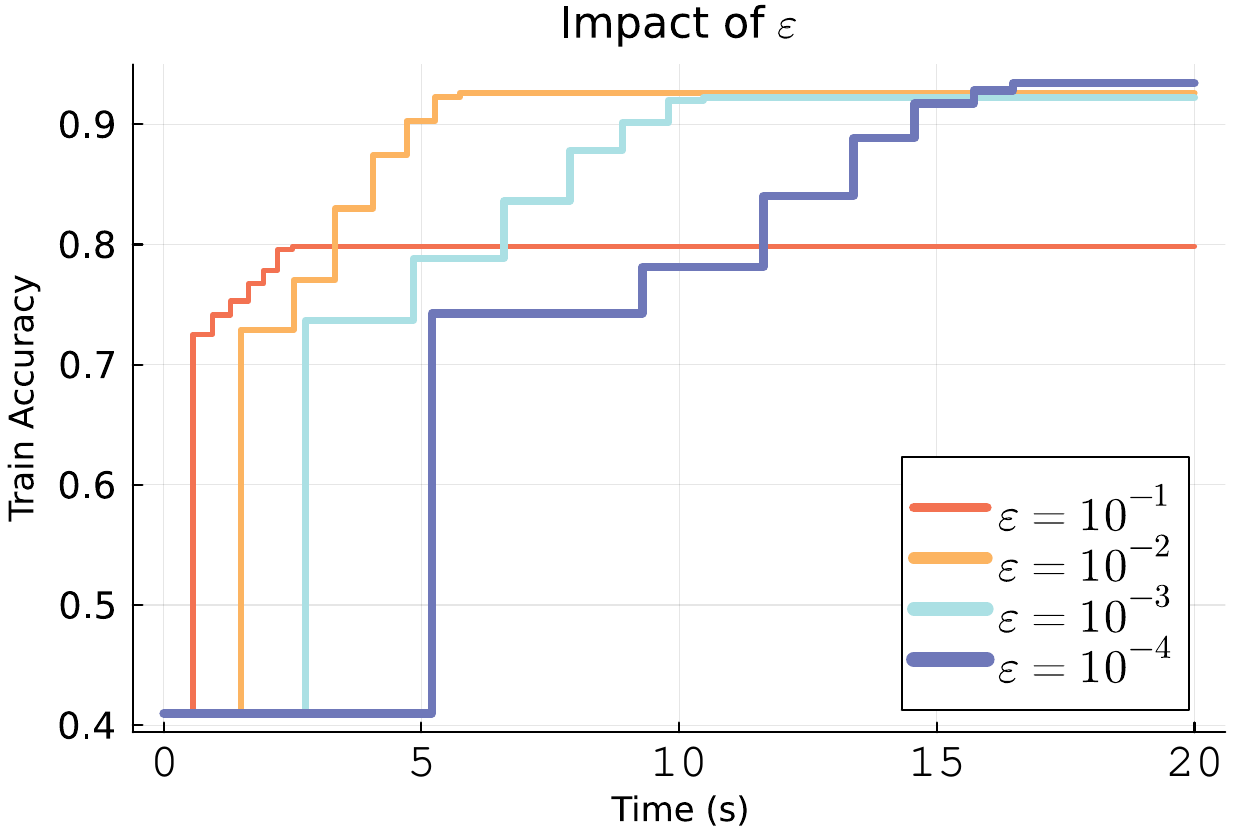}
\centering
\footnotesize{\qquad \qquad \qquad(c) Termination tolerance }
\end{minipage}
\quad
\begin{minipage}[b]{0.42\textwidth}
\centering
\includegraphics [height = 4.2cm]{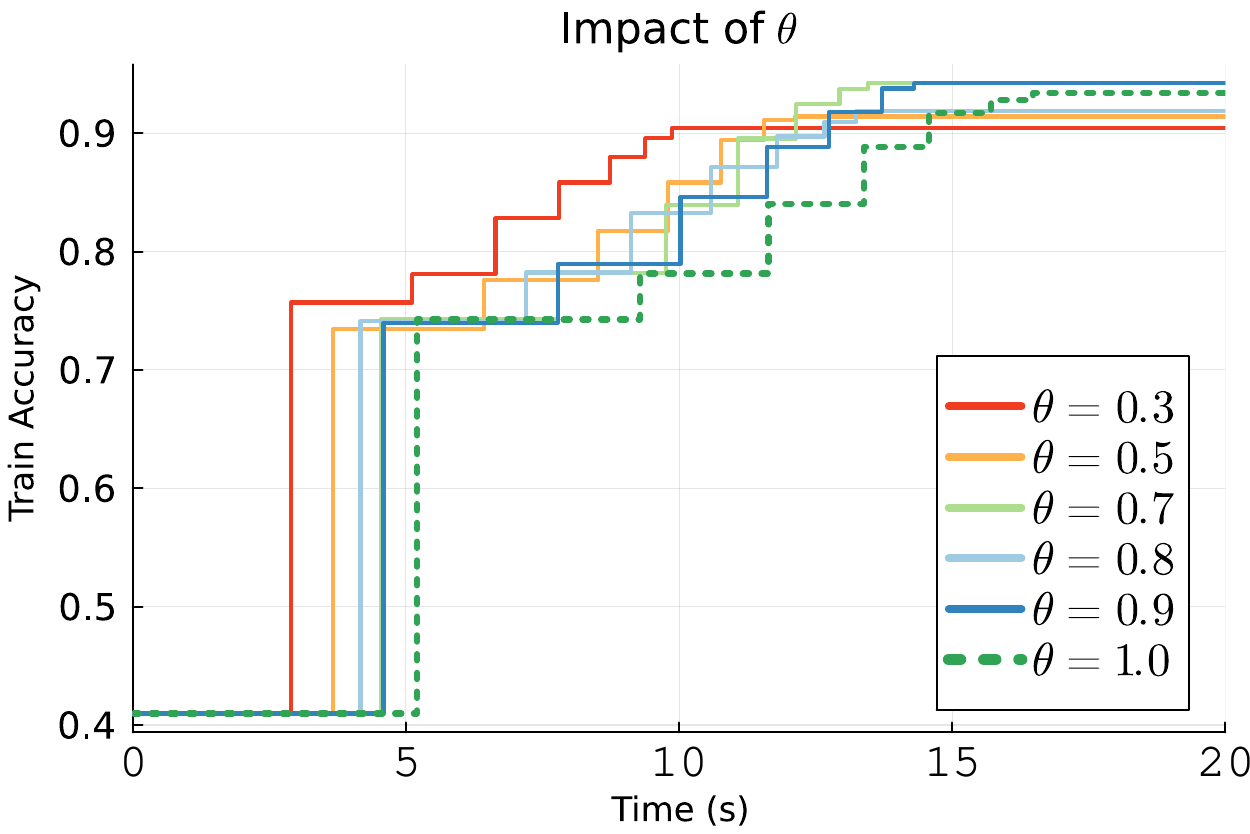}
\footnotesize{\qquad \qquad \qquad (d)  Mini-batch }
\end{minipage}
\caption{Ablation study of \texttt{MHABR} on \texttt{Avila} using a depth-8 tree. For instance, the depth-8 subtree represents the complete original tree, while the depth-7 subtrees are rooted at nodes 2 and 3. Similarly, the depth-6 subtrees are rooted at nodes 4, 5, 6, and 7, and this pattern continues for the other depths.
(a) layer-wise improvements in training accuracy from MH refinement;
(b) runtime with and without the reduction strategy across subtree depths;
(c) effect of the termination tolerance $\epsilon$; and
(d) effect of the mini-batch ratio $\theta$.}
%\caption{Impact of the MH procedure, reduction strategy, and parameters $\varepsilon$ and $\theta$. We train a depth-8 tree for \texttt{Avila}, where (c) and (d) systematically evaluate MH performance for different subtree depths. For instance, the depth-8 subtree represents the complete original tree, while the depth-7 subtrees are rooted at nodes 2 and 3. Similarly, the depth-6 subtrees are rooted at nodes 4, 5, 6, and 7, and this pattern continues for the other depths.}
\label{ablation}
\end{figure*}
%Figure \ref{ablation}(a) shows that the reduction strategy significantly decreases the computational time for subtrees across varying depths in the MH procedure. Figure \ref{ablation}(b) presents the convergence trajectories under different termination tolerances, revealing that accuracy fluctuations are minimized when $\varepsilon \leq 0.01$. At last, Figure \ref{ablation}(c) examines the influence of smaller batch sizes on computational time, demonstrating a noticeable reduction in time with only a minor effect on training accuracy.

To evaluate the contribution of the MH refinement, we show the results of a depth-8 tree on the Avila dataset, without any other techniques except the approximation. The result is shown in \Cref{ablation}(a). We observe that the most significant improvement occurs within the first five layers. This suggests that the MH refinements have a greater impact on earlier branch nodes, where the corresponding subtrees are relatively larger and contain more samples. %It demonstrates that the reduction strategy significantly accelerates the computation of subtrees at various depths within the MH procedure. It is evident that the reduction strategy yields a decrease in runtime of nearly two orders of magnitude for the depth-8 subtree. While this reduction in computational time becomes less pronounced as the subtree depth decreases, it still contributes to a considerable overall cost reduction.

\Cref{ablation}(b) demonstrates that the reduction strategy significantly accelerates the computation of subtrees at various depths within the MH procedure. It is evident that the reduction strategy yields a decrease in runtime of nearly two orders of magnitude for the depth-8 subtree. While this reduction in computational time becomes less pronounced as the subtree depth decreases, it still contributes to a considerable overall cost reduction.

\Cref{ablation}(c) illustrates the convergence of results obtained through iterative refinement using MH under varying termination conditions. The final accuracies achieved with $\varepsilon=\{ 10^{-2},10^{-3},10^{-4}\}$ are notably similar, but $\varepsilon= 10^{-2}$ results in a 60\% reduction in running time. The scenario with $\varepsilon=10^{-1}$ exhibits a discernible decrease in accuracy. Overall, larger values of $\varepsilon$ lead to lower accuracy for \texttt{MHABR}, but with the benefit of decreased training time. This suggests the possibility of identifying a suitable $\varepsilon$ that incurs a minor reduction in accuracy while yielding significant savings in computational time, as exemplified by $\varepsilon=10^{-2}$ in this case.

The influence of batch sizes is explored in \Cref{ablation}(d). Utilizing mini-batches involves training a model on a subset of the data, which invariably reduces computational time. Interestingly, this approach can sometimes yield superior performance. As depicted in the figure, the accuracies achieved with batch size ratios ($\theta$) of 0.8 and 0.9 surpass those obtained when using the complete dataset. Conversely, a batch size ratio of $\theta=0.3$ results in a relatively discernible reduction in accuracy (but less than 5\%).

\subsection{Practical Considerations and Limitations}
\paragraph{Class imbalance and multiclass settings.}
\texttt{MHABR} applies directly to both binary and multiclass classification because the feature-threshold search is independent of the number of classes, while each leaf predicts the class that minimizes its misclassification loss. Our experiments include multiclass datasets such as \texttt{Avila}, \texttt{Shuttle}, and \texttt{WESAD}, which demonstrate the applicability of the method beyond binary classification. Nevertheless, the unweighted 0-1 objective used in this study may favor majority classes on highly imbalanced datasets, and overall accuracy may not fully reflect minority-class performance. This limitation can be addressed by replacing the loss with a class-weighted objective, $L_w(\mathcal{I},T)=\sum_{i\in \mathcal{I}}w_{y_i}\mathbf{1}\{y_i\neq T(x_i)\}$, while retaining the same \texttt{MHABR} framework. For problems with many classes, the feature-threshold search space does not directly increase with the number of classes, although deeper trees may be required because a depth-$d$ binary tree has at most $2^d$ leaves and can therefore predict at most $2^d$ distinct class labels.

\paragraph{Broader Impact}
\texttt{MHABR} retains the explicit rule-based structure of decision trees, which can improve transparency and facilitate model auditing. However, interpretability alone does not guarantee fairness, robustness, or safe use. In sensitive applications, the model should be carefully evaluated for bias, distribution shift, and subgroup performance, and should be used with appropriate human oversight.

\paragraph{Practical Guidance}
\texttt{MHABR} is intended for applications in which obtaining a high-quality tree at a prescribed, interpretable depth is important and additional offline training time is acceptable. It is particularly useful when greedy methods such as \texttt{CART} provide insufficient validation accuracy, while global optimization methods are computationally infeasible at the required dataset size or tree depth. We recommend a staged workflow. First, train \texttt{CART} at the prescribed depth to obtain a low-cost accuracy benchmark and a feasible warm start for \texttt{MHABR}. If \texttt{CART} already achieves satisfactory validation performance, further optimization may be unnecessary. Otherwise, apply a light version of \texttt{MHABR}, and compare its validation gain against the additional training time. A more intensive \texttt{MHABR} configuration should be used only when the light version provides a meaningful improvement, as in \texttt{Avila}; when the gain is marginal or absent, as in \texttt{Htru}, \texttt{CART} or \texttt{DPDT} offers a better accuracy-runtime tradeoff.

\section{Conclusion and Discussion}

%\paragraph{Limitation of our method} While our method demonstrates superior testing performance on small datasets, much of this improvement stems from the $\alpha$-tuning mechanism. Overfitting becomes noticeable at $d=8$, and redundant computations from repeatedly solving similar subproblems remain an issue. Theoretically, its worst-case performance matches \texttt{CART}, but unpredictable reduction errors (detailed in \cref{rederror}) make formal guarantees, as provided in similar work \citep{NEURIPS2023_b418964b}, difficult to achieve. Therefore, our conclusions primarily rely on empirical results.

%In summary, we introduce \texttt{MHABR}, an approximate decision tree training algorithm formulated within a bilevel optimization framework. By utilizing \texttt{CART} to approximate \textit{LLP} and employing a MH procedure to iteratively refine branching, \texttt{MHABR} achieves greater scalability than exact global methods and consistently higher testing accuracy than standard heuristics. To mitigate overfitting, the framework natively integrates $\alpha$-tuning. Theoretically, our method guarantees global optimality when $d=2$ and maintains a narrow optimality gap at greater depths. Practically, through mini-batching and an $\varepsilon$-tolerance parameter, \texttt{MHABR} scales seamlessly to million-sample datasets, with further performance gains accessible by expanding the upper-level structure. Extensive numerical experiments validate these advantages, demonstrating an exceptional trade-off between optimality, computational efficiency, and scalability.

We introduced \texttt{MHABR}, an approximate method for training deep classification trees within a hierarchical root-subtree framework. The method combines a root-level branch-and-reduce search, an approximate subtree solver based on CART, and a moving-horizon refinement procedure. The resulting algorithm is exact for depth-2 trees and performs well empirically on the benchmark datasets considered in this paper. In particular, it achieves strong test accuracy relative to the compared baselines while remaining computationally feasible on larger datasets than the exact global methods included in our study. These results suggest that \texttt{MHABR} offers a favorable empirical trade-off between optimization quality, runtime, and scalability.

\subsubsection*{Acknowledgments and Disclosure of Funding}
Yankai Cao acknowledges funding from the Discovery Grants program of the Natural Sciences and Engineering Research Council of Canada under Grant RGPIN-2026-07255, and from the New Frontiers in Research Fund under Grant NFRFE2022-00663. The authors also gratefully acknowledge the computing resources and services provided by the Digital Research Alliance of Canada and Advanced Research Computing at the University of British Columbia.

\bibliography{main}

@article{shati2023sat,
  title={SAT-based optimal classification trees for non-binary data},
  author={Shati, Pouya and Cohen, Eldan and McIlraith, Sheila A},
  journal={Constraints},
  volume={28},
  number={2},
  pages={166--202},
  year={2023},
  publisher={Springer}
}

@article{shannon1999combining,
  title={Combining classification trees using MLE},
  author={Shannon, William D and Banks, David},
  journal={Statistics in medicine},
  volume={18},
  number={6},
  pages={727--740},
  year={1999},
  publisher={Wiley Online Library}
}

@inproceedings{dwyer2007decision,
  title={Decision tree instability and active learning},
  author={Dwyer, Kenneth and Holte, Robert},
  booktitle={European conference on machine learning},
  pages={128--139},
  year={2007},
  organization={Springer}
}

@article{carreira2018alternating,
  title={Alternating optimization of decision trees, with application to learning sparse oblique trees},
  author={Carreira-Perpin{\'a}n, Miguel A and Tavallali, Pooya},
  journal={Advances in neural information processing systems},
  volume={31},
  year={2018}
}

@article{van2023necessary,
  title={Necessary and sufficient conditions for optimal decision trees using dynamic programming},
  author={van der Linden, Jacobus and de Weerdt, Mathijs and Demirovi{\'c}, Emir},
  journal={Advances in Neural Information Processing Systems},
  volume={36},
  pages={9173--9212},
  year={2023}
}

@inproceedings{mctavish2022fast,
  title={Fast sparse decision tree optimization via reference ensembles},
  author={McTavish, Hayden and Zhong, Chudi and Achermann, Reto and Karimalis, Ilias and Chen, Jacques and Rudin, Cynthia and Seltzer, Margo},
  booktitle={Proceedings of the AAAI conference on artificial intelligence},
  volume={36},
  number={9},
  pages={9604--9613},
  year={2022}
}

@software{ben_sadeghi_2022_7359268,
  author       = {Ben Sadeghi and
                  Poom Chiarawongse and
                  Kevin Squire and
                  Daniel C. Jones and
                  Andreas Noack and
                  Cédric St-Jean and
                  Rik Huijzer and
                  Roland Schätzle and
                  Ian Butterworth and
                  Yu-Fong Peng and
                  Anthony Blaom},
  title        = {{DecisionTree.jl - A Julia implementation of the 
                   CART Decision Tree and Random Forest algorithms}},
  month        = nov,
  year         = 2022,
  publisher    = {Zenodo},
  version      = {0.11.3},
  doi          = {10.5281/zenodo.7359268},
  url          = {https://doi.org/10.5281/zenodo.7359268}
}

@book{bertsekas2024course,
  title={A course in reinforcement learning},
  author={Bertsekas, Dimitri},
  year={2024},
  publisher={Athena Scientific}
}

@phdthesis{dunn2018optimal,
  title        = {Optimal Trees for Prediction and Prescription},
  author       = {Dunn, Joseph},
  school       = {Massachusetts Institute of Technology},
  year         = {2018},
  type         = {Ph.D. thesis}
}

@inproceedings{carreira2020ensembles,
  title={Ensembles of bagged TAO trees consistently improve over random forests, AdaBoost and gradient boosting},
  author={Carreira-Perpi{\~n}{\'a}n, Miguel {\'A} and Zharmagambetov, Arman},
  booktitle={Proceedings of the 2020 ACM-IMS on foundations of data science conference},
  pages={35--46},
  year={2020}
}

@inproceedings{zharmagambetov2021improved,
  title={Improved boosted regression forests through non-greedy tree optimization},
  author={Zharmagambetov, Arman and Gabidolla, Magzhan and Carreira-Perpi{\~n}{\'a}n, Miguel {\^E}},
  booktitle={2021 International Joint Conference on Neural Networks (IJCNN)},
  pages={1--8},
  year={2021},
  organization={IEEE}
}

@article{zhu2021tree,
  title={Tree in tree: from decision trees to decision graphs},
  author={Zhu, Bingzhao and Shoaran, Mahsa},
  journal={Advances in Neural Information Processing Systems},
  volume={34},
  pages={13707--13718},
  year={2021}
}

@inproceedings{kohler2025breiman,
  title={Breiman meets bellman: Non-greedy decision trees with mdps},
  author={Kohler, Hector and Akrour, Riad and Preux, Philippe},
  booktitle={Proceedings of the 31st ACM SIGKDD Conference on Knowledge Discovery and Data Mining V. 2},
  pages={1207--1218},
  year={2025}
}

@article{alos2023interpretable,
  title={Interpretable decision trees through MaxSAT},
  author={Al{\`o}s, Josep and Ans{\'o}tegui, Carlos and Torres, Eduard},
  journal={Artificial Intelligence Review},
  volume={56},
  number={8},
  pages={8303--8323},
  year={2023},
  publisher={Springer}
}

@article{aghaei2024strong,
  title={Strong optimal classification trees},
  author={Aghaei, Sina and G{\'o}mez, Andr{\'e}s and Vayanos, Phebe},
  journal={Operations Research},
  year={2024},
  publisher={INFORMS}
}

@inproceedings{aglin2020learning,
  title={Learning optimal decision trees using caching branch-and-bound search},
  author={Aglin, Ga{\"e}l and Nijssen, Siegfried and Schaus, Pierre},
  booktitle={Proceedings of the AAAI Conference on Artificial Intelligence},
  volume={34},
  number={04},
  pages={3146--3153},
  year={2020}
}

@inproceedings{avellaneda2020efficient,
  title={Efficient inference of optimal decision trees},
  author={Avellaneda, Florent},
  booktitle={Proceedings of the AAAI Conference on Artificial Intelligence},
  volume={34},
  number={04},
  pages={3195--3202},
  year={2020}
}

@article{bertsimas2017optimal,
  title={Optimal classification trees},
  author={Bertsimas, Dimitris and Dunn, Jack},
  journal={Machine Learning},
  volume={106},
  pages={1039--1082},
  year={2017},
  publisher={Springer}
}

@article{breiman1984cart,
  title={Cart},
  author={Breiman, Leo and Friedman, Jerome and Olshen, Richard and Stone, Charles},
  journal={Classification and Regression Trees},
  year={1984},
  publisher={Wadsworth and Brooks/Cole Monterey, CA, USA}
}

@article{demirovic2022murtree,
  title={Murtree: Optimal decision trees via dynamic programming and search},
  author={Demirovi{\'c}, Emir and Lukina, Anna and Hebrard, Emmanuel and Chan, Jeffrey and Bailey, James and Leckie, Christopher and Ramamohanarao, Kotagiri and Stuckey, Peter J},
  journal={Journal of Machine Learning Research},
  volume={23},
  number={26},
  pages={1--47},
  year={2022}
}

@misc{Dua_2017 ,
author = "Dua, Dheeru and Graff, Casey",
year = "2017",
title = "{UCI} Machine Learning Repository",
institution = "University of California, Irvine, School of Information and Computer Sciences" }

@article{freitas2014comprehensible,
  title={Comprehensible classification models: a position paper},
  author={Freitas, Alex A},
  journal={ACM SIGKDD Explorations Newsletter},
  volume={15},
  number={1},
  pages={1--10},
  year={2014},
  publisher={ACM New York, NY, USA}
}

@article{gunluk2021optimal,
  title={Optimal decision trees for categorical data via integer programming},
  author={G{\"u}nl{\"u}k, Oktay and Kalagnanam, Jayant and Li, Minhan and Menickelly, Matt and Scheinberg, Katya},
  journal={Journal of Global Optimization},
  volume={81},
  pages={233--260},
  year={2021},
  publisher={Springer}
}

@inproceedings{hu2020learning,
  title={Learning optimal decision trees with MaxSAT and its integration in AdaBoost},
  author={Hu, Hao and Siala, Mohamed and Hebrard, Emmanuel and Huguet, Marie-Jos{\'e}},
  booktitle={IJCAI-PRICAI 2020, 29th International Joint Conference on Artificial Intelligence and the 17th Pacific Rim International Conference on Artificial Intelligence},
  year={2020}
}

@article{hua2022scalable,
  title={A scalable deterministic global optimization algorithm for training optimal decision tree},
  author={Hua, Kaixun and Ren, Jiayang and Cao, Yankai},
  journal={Advances in Neural Information Processing Systems},
  volume={35},
  pages={8347--8359},
  year={2022}
}

@inproceedings{huisman2024optimal,
  title={Optimal Survival Trees: A Dynamic Programming Approach},
  author={Huisman, Tim and van der Linden, Jacobus GM and Demirovi{\'c}, Emir},
  booktitle={Proceedings of the AAAI Conference on Artificial Intelligence},
  volume={38},
  number={11},
  pages={12680--12688},
  year={2024}
}

@article{krzyw2017class,
  title={Classification and regression trees},
  author={Krzywinski, Martin and Altman, Naomi},
  journal={Nature Methods},
  volume={14},
  number={8},
  pages={757--758},
  year={2017}
}

@article{Laurent1976obdt,
author = {Laurent, Hyafil and Rivest, Ronald L.},
journal = {Information Processing Letters},
pages = {15-17},
title = {Constructing optimal binary decision trees is NP-complete},
volume = {5},
number = {1},
year = {1976}
}

@inproceedings{lin2020generalized,
  title={Generalized and scalable optimal sparse decision trees},
  author={Lin, Jimmy and Zhong, Chudi and Hu, Diane and Rudin, Cynthia and Seltzer, Margo},
  booktitle={International Conference on Machine Learning},
  pages={6150--6160},
  year={2020},
  organization={PMLR}
}

@inproceedings{mazumder2022quant,
  title={Quant-BnB: A scalable branch-and-bound method for optimal decision trees with continuous features},
  author={Mazumder, Rahul and Meng, Xiang and Wang, Haoyue},
  booktitle={International Conference on Machine Learning},
  pages={15255--15277},
  year={2022},
  organization={PMLR}
}

@inproceedings{nijssen2007mining,
  title={Mining optimal decision trees from itemset lattices},
  author={Nijssen, Siegfried and Fromont, Elisa},
  booktitle={Proceedings of the 13th ACM SIGKDD International Conference on Knowledge Discovery and Data Mining},
  pages={530--539},
  year={2007}
}

@article{quinlan1986induction,
  title={Induction of decision trees},
  author={Quinlan, J.~Ross},
  journal={Machine Learning},
  volume={1},
  pages={81--106},
  year={1986},
  publisher={Springer}
}

@book{quinlan1993c4,
  title={C4. 5: programs for machine learning},
  author={Quinlan, J.~Ross},
  year={1993},
  publisher={The Morgan Kaufmann Series in Machine Learning}
}

@article{rudin2019stop,
  title={Stop explaining black box machine learning models for high stakes decisions and use interpretable models instead},
  author={Rudin, Cynthia},
  journal={Nature Machine Intelligence},
  volume={1},
  number={5},
  pages={206--215},
  year={2019},
  publisher={Nature Publishing Group UK London}
}

@article{rudin2022black,
  title={Why black box machine learning should be avoided for high-stakes decisions, in brief},
  author={Rudin, Cynthia},
  journal={Nature Reviews Methods Primers},
  volume={2},
  number={1},
  pages={81},
  year={2022},
  publisher={Nature Publishing Group UK London}
}

@article{van2022fair,
  title={Fair and optimal decision trees: A dynamic programming approach},
  author={Van der Linden, Jacobus~G.~M. and De Weerdt, Mathijs and Demirovi{\'c}, Emir},
  journal={Advances in Neural Information Processing Systems},
  volume={35},
  pages={38899--38911},
  year={2022}
}

@inproceedings{verwer2019learning,
  title={Learning optimal classification trees using a binary linear program formulation},
  author={Verwer, Sicco and Zhang, Yingqian},
  booktitle={Proceedings of the AAAI Conference on Artificial Intelligence},
  volume={33},
  number={01},
  pages={1625--1632},
  year={2019}
}

@article{zhu2020scalable,
  title={A scalable MIP-based method for learning optimal multivariate decision trees},
  author={Zhu, Haoran and Murali, Pavankumar and Phan, Dzung and Nguyen, Lam and Kalagnanam, Jayant},
  journal={Advances in Neural Information Processing Systems},
  volume={33},
  pages={1771--1781},
  year={2020}
}

@article{ryoo1996branch,
  title={A branch-and-reduce approach to global optimization},
  author={Ryoo, Hong S and Sahinidis, Nikolaos V},
  journal={Journal of Global Optimization},
  volume={8},
  pages={107--138},
  year={1996},
  publisher={Springer}
}

@article{tawarmalani2004global,
  title={Global optimization of mixed-integer nonlinear programs: A theoretical and computational study},
  author={Tawarmalani, Mohit and Sahinidis, Nikolaos V},
  journal={Mathematical Programming},
  volume={99},
  number={3},
  pages={563--591},
  year={2004},
  publisher={Springer}
}
\bibliographystyle{tmlr}

\newpage
\section*{Appendix}
\section{Summary of Notation} \label{app:notation} 

Table~\ref{tab:notation} summarizes the notation used throughout the paper.
% Add these packages in the preamble

\begin{longtable}{p{0.16\textwidth} p{0.81\textwidth}}
\caption{Summary of notation used in this paper.}
\label{tab:notation}\\
\toprule
\textbf{Symbol} & \textbf{Description} \\
\midrule

\multicolumn{2}{l}{\textit{Data and decision-tree notation}}\\[2pt]
\midrule
$\{(x_i,y_i)\}_{i=1}^{n}$
& Dataset of $n$ feature--label pairs. \\

$\mathcal{Y}$ 
& Set of class labels. \\

$n$, $p$, $n_c$
& Numbers of samples, features, and classes, respectively. \\

$[n]$
& Index set $\{1,\ldots,n\}$. \\

$\mathcal{I}\subseteq[n]$
& Index set of samples in a tree or subtree. \\

$d$
& Prespecified depth of a decision tree. \\

$\mathcal{T}_d$
& Set of feasible decision trees of depth $d$. \\

$\mathcal{N}_B$, $\mathcal{N}_L$
& Sets of branch nodes and leaf nodes, respectively. \\

$T=[a,k,T_L,T_R]$
& DT with root feature $a$, threshold index $k$, left and
right subtrees $T_L$ and $T_R$. \\

$\mathbf{a}$, $\mathbf{b}$, $\mathbf{c}$
& Vectors of split features, split thresholds, and leaf predictions. \\

$L(\mathcal{I},T)$
& Misclassification loss of tree $T$ on the samples indexed by $\mathcal{I}$. \\

$V_d(\mathcal{I})$
& Minimum loss achievable by a depth-$d$ tree on $\mathcal{I}$. \\

$V_d(\mathcal{I},a,k)$
& Minimum loss when the root feature and threshold index are fixed to
$a$ and $k$. \\

$V_0(\mathcal{I})$
& Minimum loss of a constant-label prediction on $\mathcal{I}$. \\[3pt]
\midrule
\multicolumn{2}{l}{\textit{Candidate splits and sample partitions}}\\[2pt]
\midrule
$\mathcal{I}^a$
& Sample indices sorted according to feature $a$. \\

$n_a$
& Number of unique values of feature $a$. \\

$u_k^a$
& The $k$th unique value of feature $a$. \\

$\beta_k^a$
& Candidate split threshold associated with index $k$ for feature $a$. \\

$\mathcal{K}^a$
& Set of candidate threshold indices for feature $a$. \\

$\mathcal{B}^a$
& Set of candidate thresholds $\{\beta_k^a:k\in\mathcal{K}^a\}$. \\

$\zeta(k)$
%& Position of candidate index $k$ in the feature-sorted sample sequence. \\
& The number of samples assigned to the left child by $\beta_k^a$. \\

$\mathcal{I}^a_{[k_1,k_2]}$
& Samples whose values of feature $a$ lie between the thresholds indexed
by $k_1$ and $k_2$. \\[3pt]
\midrule
\multicolumn{2}{l}{\textit{Branch-and-reduce notation}}\\[2pt]
\midrule
$\mathcal{K}_i=\{l,\ldots,m\}$
& Candidate threshold-index region represented by a search node. \\

$\bar{k}$
& Midpoint threshold index selected from $\mathcal{K}_i$. \\

$\mathbb{K}$
& Collection of active candidate regions in the branch-and-reduce search. \\

$U$
& Incumbent upper bound, i.e., the best feasible loss found so far. \\

$LB$
& Lower bound associated with a candidate region. \\

$\delta(\bar{k})$
& Reduction radius calculated from the evaluated loss and incumbent
upper bound. \\

$\Delta_i$
& Set of candidate threshold indices removed from $\mathcal K_i$
by the reduction rule. \\

$\mathcal{K}_L$, $\mathcal{K}_R$
& Left and right candidate regions generated after reduction and branching. \\
%$\mathrm{tol}$
%& Optimality-gap tolerance used as a stopping criterion. \\[3pt]
\midrule
\multicolumn{2}{l}{\textit{Approximation and moving-horizon notation}}\\[2pt]
\midrule

$W_d(\mathcal{I})$
& Objective value attained by the depth-$d$ \texttt{CART} tree on the sample
set $\mathcal I$. \\
%Objective value attained by the \texttt{CART} tree. \\

$\widehat{V}_d(\mathcal{I})$
& Optimal value of the \texttt{CART}-based approximate child-subtree problem. \\

$\widehat{V}_d(\mathcal{I},a,k)$
& Approximate objective value for the fixed root split
$(a,k)$. \\

$t$
& Index of the node used as the root of the current moving-horizon
subproblem. \\

$\mathcal{I}_t$
& Samples reaching node $t$. \\

\midrule
\\

\toprule
\textbf{Symbol} & \textbf{Description} \\
\midrule

$\mathcal{I}_{\mathrm{sub}}$
& Sample set associated with the current subtree optimization problem. \\

$T_{\mathrm{sub}}$
& Tree returned for the current subtree optimization problem. \\

$d_{\mathrm{sub}}$
& Depth of the current subtree. \\

$U_{\mathrm{sub}}$
& Objective value of the current subtree solution. \\[3pt]
\midrule
\multicolumn{2}{l}{\textit{Complexity and experimental parameters}}\\[2pt]
\midrule
$\widetilde{n}$
& Maximum feature--threshold pairs evaluated in a \texttt{BR}/\texttt{ABR} root search.\\

$\theta\in(0,1]$
& Mini-batch sampling ratio used in the light version of \texttt{MHABR}. \\

$\varepsilon$
& Threshold-search termination tolerance used for large-scale datasets. \\

$\alpha$
& Tree-complexity regularization coefficient. \\

\bottomrule
\end{longtable}

\section{Supplementary Experiments: Comparison with TAO} \label{comptao}

We also provide an informal comparison with the well-known heuristic method \texttt{TAO} \citep{carreira2018alternating}. Because the authors do not provide publicly available code for reproducibility, we evaluated our method on the same five datasets reported in Appendix 2.1 of their paper, using the identical data split ($50\%$ training, $25\%$ validation, and $25\%$ testing). The corresponding results are presented below:

\begin{table}[htpb]
    \centering
    \caption{Performance comparison between \texttt{TAO} and \texttt{MHABR}. Results are reported as mean accuracy with standard deviation in parentheses. Best results for each dataset, depth, and split are highlighted in bold.}
    \label{tab:tao_mhabr_comparison}
    \begin{tabular}{ll cccc}
        \toprule
        \multirow{2}{*}{\textbf{Dataset}} & \multirow{2}{*}{\textbf{Depth}} & \multicolumn{2}{c}{\textbf{Train}} & \multicolumn{2}{c}{\textbf{Test}} \\
        \cmidrule(lr){3-4} \cmidrule(lr){5-6}
        & &\texttt{TAO} & \texttt{MHABR} & \texttt{TAO} & \texttt{MHABR} \\
        \midrule
        \multirow{3}{*}{Balance-scale} 
        & $d=2$ & 72.5 (1.6) & \textbf{72.7} (0.7) & \textbf{69.5} (2.9) & 68.5 (2.0) \\
        & $d=3$ & \textbf{76.9} (1.1) & 76.8 (1.4) & 71.6 (1.9) & \textbf{74.8} (2.2) \\
        & $d=4$ & 84.0 (1.5) & \textbf{84.1} (1.1) & \textbf{79.8} (3.1) & 78.9 (2.7) \\
        \midrule
        \multirow{3}{*}{Banknote-auth.} 
        & $d=2$ & 91.9 (0.4) & \textbf{92.8} (0.4) & 90.6 (0.9) & \textbf{91.3} (1.1) \\
        & $d=3$ & 96.0 (0.5) & \textbf{98.3} (0.4) & 95.7 (1.2) & \textbf{97.2} (0.6) \\
        & $d=4$ & 98.9 (0.7) & \textbf{99.5} (0.3) & 97.2 (0.7) & \textbf{98.7} (0.5) \\
        \midrule
        \multirow{3}{*}{Blood-transfusion} 
        & $d=2$ & \textbf{78.0} (0.8) & 77.9 (1.4) & 75.8 (2.0) & \textbf{77.1} (3.7) \\
        & $d=3$ & 79.5 (1.0) & \textbf{80.3} (1.4) & 77.0 (2.0) & \textbf{77.8} (2.9) \\
        & $d=4$ & \textbf{81.6} (1.3) & 80.7 (2.3) & 77.2 (1.3) & \textbf{77.6} (3.5) \\
        \midrule
        \multirow{3}{*}{Breast-cancer} 
        & $d=2$ & 95.0 (0.5) & \textbf{96.4} (0.4) & 92.7 (2.2) & \textbf{93.9} (1.7) \\
        & $d=3$ & 97.0 (0.6) & \textbf{98.5} (0.5) & 93.1 (1.4) & \textbf{95.0} (1.7) \\
        & $d=4$ & 98.0 (0.5) & \textbf{99.6} (0.3) & 93.2 (0.5) & \textbf{95.1} (1.8) \\
        \midrule
        \multirow{3}{*}{Spambase} 
        & $d=2$ & 86.5 (0.7) & \textbf{87.3} (0.5) & 86.1 (1.0) & \textbf{86.9} (0.6) \\
        & $d=3$ & 90.0 (0.4) & \textbf{90.8} (0.3) & 89.1 (1.0) & \textbf{90.3} (0.8) \\
        & $d=4$ & 91.8 (0.3) & \textbf{92.3} (0.6) & 90.3 (0.8) & \textbf{91.3} (0.8) \\
        \bottomrule
    \end{tabular}
\end{table}

Across the matched dataset-depth combinations reported by \texttt{TAO}, \texttt{MHABR} achieves an average training accuracy of 88.5\%, compared with 87.8\% for \texttt{TAO}, and an average testing accuracy of 86.3\%, compared with 85.3\%. These differences correspond to average improvements of 0.7 and 1.0 percentage points in training and testing accuracy, respectively. Although performance varies across individual datasets and tree depths, the aggregate results favor \texttt{MHABR}. Results for deeper trees are not included because they were not reported for \texttt{TAO}.

\paragraph{Limitations of the \texttt{TAO} comparison}
The \texttt{TAO} results are taken directly from its original publication rather than obtained through a controlled reimplementation. Because a public implementation is unavailable, we could not standardize data preprocessing, train--test splits, stopping criteria, implementation details, or computational hardware across the two methods. We therefore present this comparison only as supplementary evidence and do not rely on it to support the primary empirical claims of this paper.

\section{Additional Theoretical Results and Analysis} 
In this section, we recall the \texttt{CART} variant used in our theoretical analysis, define its misclassification loss, introduce the reduced \texttt{CART} algorithm, and discuss conditions under which \texttt{MHABR} improves upon this greedy procedure.

%key properties of \texttt{CART}, provide a detailed introduction to the reduced \texttt{CART} algorithm, and discuss the specific scenarios where \texttt{MHABR} is guaranteed to outperform this greedy method.

\subsection{Recall the basics of \texttt{CART}} \label{recallCART}
Let $\mathcal{I}_t$ denote the sample index set at node $t\in\mathcal{N}_B\cup\mathcal{N}_L$. The optimal loss of assigning a constant class prediction to this node is 
\begin{equation} 
V_0(\mathcal{I}_t) := \min_{c\in\mathcal{Y}} L(\mathcal{I}_t,c) = \min_{c\in\mathcal{Y}} \sum_{i\in\mathcal{I}_t}\mathbf{1}\{y_i\neq c\}. \label{potentialloss} 
\end{equation}

The misclassification-based \texttt{CART} rule selects the feature and threshold pair that minimizes the combined potential loss of its two children: 
\begin{equation} 
(a'_t,k'_t) \in \arg\min_{a\in[p],\ k\in\mathcal{K}^a} \left\{ V_0\!\left(\mathcal{I}^{a}_{t,[0,k]}\right) + V_0\!\left(\mathcal{I}^{a}_{t,[k,n_a]}\right) \right\}, 
\label{losscart} 
\end{equation} 
where $\mathcal{I}^{a}_{t,[0,k]}$ and $\mathcal{I}^{a}_{t,[k,n_a]}$ are the sample sets assigned to the left and right children, respectively.

For a sample index set $\mathcal{I}$, let $\hat{T}_d(\mathcal{I})$ denote the depth-$d$ tree returned by this recursive \texttt{CART} procedure. Its loss is defined as the resulting misclassification count: 
\begin{equation} 
W_d(\mathcal{I}) := L\!\left( \mathcal{I}, \hat{T}_d(\mathcal{I}) \right) = \sum_{i\in\mathcal{I}} \mathbf{1} \left\{ y_i\neq \hat{T}_d(x_i) \right\}. \label{eq:cart_loss} 
\end{equation} 
For $d=0$, the tree consists of a majority-class leaf, and therefore \begin{equation} 
W_0(\mathcal{I})=V_0(\mathcal{I}). 
\label{eq:cart_leaf_loss} 
\end{equation} 
If $(a',k')$ denotes the root split selected by \texttt{CART}, then the additivity of the misclassification loss gives
\begin{equation}
W_d(\mathcal{I})=W_{d-1}\!\left(\mathcal{I}^{a'}_{[0,k']}\right)
+
W_{d-1}\!\left(\mathcal{I}^{a'}_{[k',n_{a'}]}\right).
\label{eq:cart_loss_recursion}
\end{equation}

The quantity $W_d(\mathcal{I})$ always denotes the final misclassification loss of the resulting tree. In the numerical comparison, the standard \texttt{CART} baseline may use a different impurity criterion, such as entropy, to select its splits; this does not change the definition of its reported misclassification loss.

\paragraph{Depth-1 optimality}
For a sample subset $\mathcal{I} \subseteq [n]$,  a depth-1 DT comprises two optimal constant fits, i.e., two optimal depth-0 subtrees, expressed by $V_0(\mathcal{I}) = \min_{c\in\mathcal{Y}}L(\mathcal{I},c)$, which denotes the loss of the best constant approximation to $\mathcal{Y}$, in the same manner as \texttt{CART}. For $d=1$, \texttt{CART} solves a one-stage decision problem, which, according to the greedy nature, is optimal.

\paragraph{Monotonicity}

The loss function of \texttt{CART} exhibits monotonicity with respect to tree depth, such that $W_d(\mathcal{I})\leq W_{d-1}(\mathcal{I})$. This is attributed to the monotonic nature of the branching operation concerning the count of correctly classified samples (accuracy). Branching will either increase this count (thereby decreasing the loss) or, in the worst-case scenario where the newly formed leaves offer no improvement over their parent node, the count will remain unchanged.

For example, we consider a node $t$ with data $\mathcal{I}_t$, the potential loss is $ V_0(\mathcal{I}_t)$ as defined by \Cref{potentialloss}, with prediction $c_1$. Then, we branch the node to obtain two branch nodes $\mathcal{I}_{2t},\mathcal{I}_{2t+1}$, where $\mathcal{I}_{2t}\cup\mathcal{I}_{2t+1}=\mathcal{I}_{t}$. The losses of these new nodes are $V_0(\mathcal{I}_{2t})$ and $V_0(\mathcal{I}_{2t+1})$ with predictions $c_2$ and $c_3$, respectively. Then we have:
\begin{align}
    V_0(\mathcal{I}_t) = \sum_{i\in\mathcal{I}} \mathbbm{1} \{ y_i \neq c_1 \}\nonumber& = \sum_{i\in\mathcal{I}_{2t}}  \mathbbm{1} \{ y_i \neq c_1 \} + \sum_{i\in\mathcal{I}_{2t+1}}\mathbbm{1} \{ y_i \neq c_1 \}   \nonumber\\
    & \geq \sum_{i\in\mathcal{I}_{2t}}  \mathbbm{1} \{ y_i \neq c_2 \} + \sum_{i\in\mathcal{I}_{2t+1}}\mathbbm{1} \{ y_i \neq c_3 \} \nonumber\\
    & = V_0(\mathcal{I}_{2t}) + V_0(\mathcal{I}_{2t+1}),
\end{align}
%where the inequality holds because of the definition of $c$ in \cref{leafnode} as the most frequent class.
The inequality holds because $c_2$ and $c_3$ are chosen as the predictions that minimize the misclassifications within their respective nodes $2t$ and $2t+1$ (defined as the most frequent class in \Cref{potentialloss}). By definition, the optimal prediction $c_2$ for node $\mathcal{I}_{2t}$ must classify at least as many samples correctly within $\mathcal{I}_{2t}$ as any other prediction, including $c_1$. Thus, $\sum_{i\in\mathcal{I}_{2t}} \mathbbm{1}\{ y_i \neq c_2 \} \leq \sum_{i\in\mathcal{I}_{2t}} \mathbbm{1}\{ y_i \neq c_1 \}$, and similarly for $c_3$ over $\mathcal{I}_{2t+1}$. Therefore, the loss after branching is less than or equal to the value before branching.

\subsection{A reduced \texttt{CART} method}\label{apprcart}

\begin{algorithm}[h]
\caption{Reduced \texttt{CART} for a depth-$d$ tree}
\label{rcart}
\begin{algorithmic}[1]
\FUNCTION{\texttt{R-CART}$(\mathcal{I},d)$}
    \STATE Compute $V_0(\mathcal{I})\leftarrow\min_{c\in\mathcal{Y}}L(\mathcal{I},c)$ and
    $c'\leftarrow\arg\min_{c\in\mathcal{Y}}L(\mathcal{I},c)$
    \IF{$d=0$ \OR $|\mathcal{I}|\leq1$ \OR $V_0(\mathcal{I})=0$}
        \STATE \textbf{return:} $V_0(\mathcal{I})$ and leaf $c'$
    \ENDIF
    \STATE Initialize $U\leftarrow+\infty$ and $(a',k')\leftarrow\emptyset$
    \FOR{$a\in[p]$}
        \STATE Sort $\mathcal{I}$ by feature $a$ to obtain $\mathcal{I}^a$ and $\mathcal{K}^a$, set $\mathcal{K}\leftarrow\mathcal{K}^a$
        \WHILE{$\mathcal{K}\neq\emptyset$}
            \STATE Select $\bar{k}\leftarrow\mathcal{K}[1]$ and compute $V_1(\mathcal{I},a,\bar{k})\leftarrow
            V_0(\mathcal{I}^a_{[0,\bar{k}]})+
            V_0(\mathcal{I}^a_{[\bar{k},n_a]})$
            \IF{$V_1(\mathcal{I},a,\bar{k})<U$}
                \STATE Update $U\leftarrow V_1(\mathcal{I},a,\bar{k})$ and
                $(a',k')\leftarrow(a,\bar{k})$
            \ENDIF
            \STATE Compute $\delta(\bar{k})\leftarrow V_1(\mathcal{I},a,\bar{k})-U$ and get $\mathcal{K}\leftarrow \{k\in\mathcal{K}\mid \zeta(k)-\zeta(\bar{k})>\delta(\bar{k})\}$
        \ENDWHILE
    \ENDFOR
    \STATE Get the data $\mathcal{\mathcal{I}}_L\leftarrow \mathcal{I}^{a'}_{[0,k']}$ and $\mathcal{\mathcal{I}}_R\leftarrow I^{a'}_{[k',n_{a'}]}$
    \STATE Compute $(W_{d-1}(\mathcal{I}_L),T_L)\leftarrow
    \texttt{R-CART}(\mathcal{I}_L,d-1)$ and $(W_{d-1}(\mathcal{I}_R),T_R)\leftarrow
    \texttt{R-CART}(\mathcal{I}_R,d-1)$
    \STATE Calculate $W_d(\mathcal{I})\leftarrow W_{d-1}(\mathcal{I}_L)+W_{d-1}(\mathcal{I}_R)$ and update $T\leftarrow[a',k',T_L,T_R]$
    \STATE \textbf{return:} Loss $W_d(\mathcal{I})$ and decision tree $T$
\ENDFUNCTION
\end{algorithmic}
\end{algorithm}

At each nonterminal node while using \texttt{BR} method, \texttt{CART} solves a depth-$1$ split-selection problem by evaluating $V_1(\mathcal{I},a,k)$ over the candidate feature-threshold pairs. When $d=1$, the child-subtree problems reduce to optimal constant predictions, and hence \texttt{ApproxTreeSearch}$(\mathcal{I},1)$ is exact. In this setting, the reduction strategy of \Cref{abbnp} can be applied directly to the local \texttt{CART} split search. We refer to this reduced implementation as \texttt{R-CART}, whose procedure is given in \Cref{rcart}.

Unlike the midpoint-based branching strategy used in \texttt{BR}, \texttt{R-CART} scans the candidate thresholds in ascending order and skips candidates that are certified not to improve the current incumbent. The reduction is applied only to the local depth-$1$ split objective. After a split is selected, \texttt{R-CART} is recursively applied to the two child datasets to construct a tree of depth $d$. Under the same candidate sets, stopping conditions, and tie-breaking rule, \texttt{R-CART} returns the same tree as standard \texttt{CART}, while potentially evaluating fewer split candidates.

Because \texttt{MHABR} repeatedly calls \texttt{CART} when approximating child-subtree problems, the computational savings provided by \texttt{R-CART} can accumulate over the entire algorithm. Without reduction, the root-level search evaluates at most $\sum_{a=1}^{p}|\mathcal{K}^a|$ candidate splits. The reduction strategy decreases this number by skipping candidate thresholds that cannot strictly improve the incumbent solution.

%When $d=1$, the \texttt{BR} method reduces to a special case, which is outlined in \Cref{rcart}. In this case, the reduction strategy in \Cref{abbnp} is also applied to \texttt{CART} to solve \texttt{ApproxSplitSearch}$(\mathcal{I}, 1)$. The reduced \texttt{CART} is named as \texttt{R-CART}, which is similar to a special case of \Cref{abbnp} to depth-1 trees. 

%Since \texttt{MHABR} involves multiple uses of \texttt{CART}, even a modest improvement in \texttt{CART} would significantly boost efficiency and have a cumulative effect on the overall algorithm. The reduction strategy provides a more efficient gain for $d=2$ compared to $d=1$, as the variability in subtree loss increases with depth. In the absence of reduction, \Cref{abbnp} converges to the global optimum of \Cref{bilevel} within at most $\sum^p_{a=1} n_a$ evaluations of \texttt{CART}. Unlike our BR method, \texttt{CART} follows a forward search approach rather than using a bisection branching strategy. Notably, the \texttt{R-CART} algorithm can be viewed as a special case of the \texttt{BR} method proposed in \Cref{abbnp}, corresponding to a depth-1 tree without recursion for lower depths.

\subsection{An Illustrative XOR Case for Greedy Failure}\label{appXOR}

In this section, we use XOR-style datasets as an illustrative example to explain why a lookahead-based method can outperform greedy \texttt{CART}. The XOR problem is a canonical topological trap for greedy splitting: a depth-1 criterion may fail to identify informative features, whereas a shallow lookahead can recover the correct interaction. Our goal here is to build intuition for the behavior of \texttt{MHABR}.

More broadly, \texttt{ABR} can improve upon \texttt{CART} in settings where a split that appears suboptimal under a purely greedy criterion leads to a better downstream subtree configuration under lookahead. The following XOR example provides a simple instance of this phenomenon.

\paragraph{XOR-Distractor Dataset.}
We consider the following stylized data distribution.

\begin{definition}\label{def:xor}
Let the dataset be given by $\{(x_i, y_i)\}_{i=1}^n$ where $x_i \in \{0,1\}^p$ and $y_i \in \mathcal{Y} = \{0,1\}$.
The target $y_i$ is determined exclusively by the first two features via the XOR function: $y_i = x_i^1 \oplus x_i^2$, where $x_i^1$ and $x_i^2$ are drawn from independent uniform Bernoulli distributions ($P=0.5$). The remaining features $x_i^3, \dots, x_i^p$ are pure noise, drawn independently from $P(x_i^h = 1) = 0.5$ for $h \geq 3$.
\end{definition}

In this setting, a single greedy split on either informative feature does not reduce the immediate misclassification loss, because each informative feature alone leaves the label distribution balanced within the resulting child nodes. In contrast, a depth-2 lookahead can recover the XOR interaction by selecting splits that jointly isolate the homogeneous quadrants.

\begin{theorem}[Failure of Greedy Splits] \label{thm:greedy_failure}
Given an XOR-Distractor dataset defined in Definition~\ref{def:xor}, let $V_0(\mathcal{I})$ denote the misclassification loss (obtained using a constant prediction) of a root node containing $\mathcal{I}$ with $n$ samples. Suppose that, in the realized finite sample, the classes are balanced at the root and remain balanced in both child nodes after splitting on either true feature $a=1$ or $a=2$. Suppose further that there exists a noise feature $h\geq 3$ for which the class distribution is not balanced in at least one of the resulting child nodes. Then, a greedy top-down decision tree algorithm that selects a root split by maximizing the immediate misclassification-loss reduction will select a noise feature rather than either true feature $a=1$ or $a=2$.
\end{theorem}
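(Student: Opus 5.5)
We compute the immediate loss reduction $\Delta(a,k):=V_0(\mathcal{I})-V_0(\mathcal{I}^a_{[0,k]})-V_0(\mathcal{I}^a_{[k,n_a]})$ for every candidate root split. We show that $\Delta=0$ for both true features and $\Delta>0$ for some noise split. Because the greedy rule in \Cref{losscart} minimizes the combined child loss, which is equivalent to maximizing $\Delta$, it then cannot pick feature $1$ or $2$.

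\textbf{Step 1: true features.} All features are binary, so feature $a\in\{1,2\}$ admits only one nontrivial split, into $\mathcal{I}_0=\{i:x_i^a=0\}$ and $\mathcal{I}_1=\{i:x_i^a=1\}$. The trivial thresholds $k=0$ and $k=n_a$ put every sample in one child and give $\Delta=0$ by definition. By hypothesis the root is balanced, so $V_0(\mathcal{I})=n/2$. Also by hypothesis each child is balanced, so $V_0(\mathcal{I}_j)=|\mathcal{I}_j|/2$. Summing the two children gives $n/2$, hence $\Delta(a,k)=0$ for every $k$.

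\textbf{Step 2: the noise feature.} Take the noise feature $h$ from the hypothesis and its nontrivial split into $\mathcal{J}_0,\mathcal{J}_1$. Write $m_j^c$ for the number of class-$c$ samples in $\mathcal{J}_j$.
\begin{itemize}
\item By root balance, $m_0^0+m_1^0=m_0^1+m_1^1=n/2$.
\item Then $V_0(\mathcal{J}_0)+V_0(\mathcal{J}_1)=\min(m_0^0,m_0^1)+\min(m_1^0,m_1^1)$.
\item Suppose $\mathcal{J}_0$ is unbalanced, say $m_0^0>m_0^1$. Root balance then forces $m_1^0<m_1^1$.
\item So the child losses sum to $m_0^1+m_1^0$. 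This is strictly less than $m_0^1+m_1^1=n/2$.
\item The same argument covers the case where $\mathcal{J}_1$ is the unbalanced child, and by symmetry the case $m_0^0<m_0^1$.
\end{itemize}
Hence $\Delta(h,\cdot)>0$. The conclusion of this step is that root balance plus imbalance in one child forces the two children to lean toward opposite classes. That opposite lean is what makes the reduction strict, and it is the key observation of the proof.

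\textbf{Step 3: conclusion.} The maximum of $\Delta$ over all candidates is at least $\Delta(h,\cdot)>0$. The greedy argmax therefore excludes both true features, whose $\Delta$ is $0$, regardless of how ties are broken. The selected split is on some feature $h'\ge3$, which need not be $h$ itself but is still a noise feature.

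The main obstacle is interpretive, not computational. "Balanced" in the statement must mean exactly equal class counts, so that $V_0$ equals half the node size. We must also confirm that no split on a true feature other than the single binary one can produce a positive $\Delta$. Both points follow from the binary support of the features and the stated finite-sample hypotheses, so we will state them explicitly before carrying out the computation.
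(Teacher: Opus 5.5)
Your proof is correct and follows the paper's argument. Both show zero immediate reduction for the balanced true-feature splits and strictly positive reduction for the unbalanced noise split, so the argmax cannot be feature $1$ or $2$. The paper handles Step 2 without your case split, using the identity $\min\{n_{s,0}^h,n_{s,1}^h\}=(|\mathcal{I}_s^h|-|n_{s,1}^h-n_{s,0}^h|)/2$ together with root balance to get $\Delta=(|n_{L,1}^h-n_{L,0}^h|+|n_{R,1}^h-n_{R,0}^h|)/2>0$ directly. Your explicit treatment of the trivial thresholds and of tie-breaking is a small addition that the paper leaves implicit.
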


\begin{proof}
Let $|\mathcal{I}|=n$. Since the classes are balanced at the root, $V_0(\mathcal{I})=0.5n$. For an informative feature $a\in\{1,2\}$, let $\mathcal{I}_L^a$ and $\mathcal{I}_R^a$ denote the two child-node sample sets. By assumption, both child nodes remain class-balanced. Hence, $V_0(\mathcal{I}_L^a)=0.5|\mathcal{I}_L^a|$ and $V_0(\mathcal{I}_R^a)=0.5|\mathcal{I}_R^a|$. Therefore, 
\begin{align*}
    \Delta V_0(a=1) &= V_0(\mathcal{I}) - \left( V_0(\mathcal{I}_L^1) + V_0(\mathcal{I}_R^1) \right) \\
    &= 0.5n - \left( 0.5|\mathcal{I}_L^1| + 0.5|\mathcal{I}_R^1| \right) \\
    & = 0.5n - 0.5n = 0
\end{align*}
Now consider the noise feature $h\geq3$. For $s\in\{L,R\}$ and $c\in\{0,1\}$, define
$n_{s,c}^h:=|\{i\in\mathcal{I}_s^h:y_i=c\}|$. The minimum constant-prediction loss in child node $\mathcal{I}_s^h$ is
$V_0(\mathcal{I}_s^h)=\min\{n_{s,0}^h,n_{s,1}^h\}
=(|\mathcal{I}_s^h|-|n_{s,1}^h-n_{s,0}^h|)/2$.
Thus,
\begin{align*}
\Delta V_0(a=h)
=(|n_{L,1}^h-n_{L,0}^h|+|n_{R,1}^h-n_{R,0}^h|)/2
\end{align*}

By assumption, at least one child node is class-unbalanced, so $\Delta V_0(h)>0$. Therefore,
\begin{align*}
\Delta V_0(a=h)>\Delta V_0(a=1)=\Delta V_0(a=2)=0,
\end{align*}
and a greedy algorithm maximizing immediate loss reduction selects a noise feature rather than either informative feature.
\end{proof}

\paragraph{Illustrative role of lookahead}
For XOR-style data, a depth-2 tree can represent the target rule exactly, whereas a purely greedy depth-1 criterion may not discover the required structure. This makes XOR an example for visualizing the difference between greedy splitting and  lookahead. Figure~\ref{fig:xor_topology} illustrates this contrast.

\begin{figure}[htpb]
    \centering
    \begin{tikzpicture}
        % ==========================================
        % LEFT PANEL: CART (Greedy Failure)
        % ==========================================
        \begin{scope}[shift={(0,0)}]
            % Shaded Decision Regions (CART split at x=0.6)
            \fill[blue!5] (-2.5, -2.5) rectangle (0.6, 2.5);
            \fill[orange!5] (0.6, -2.5) rectangle (2.5, 2.5);
        
            % Axes
            \draw[->, thick, draw=gray!80] (-2.5,0) -- (2.5,0) node[right, text=black] {$x_1$};
            \draw[->, thick, draw=gray!80] (0,-2.5) -- (0,2.5) node[above, text=black] {$x_2$};
            
            % Greedy Split Line
            \draw[red, dashed, line width=1.5pt] (0.6, -2.5) -- (0.6, 2.5);
            
            % Class 0 (Blue Circles)
            \fill[blue!80] (-1.2, 1.2) circle (2.5pt);
            \fill[blue!80] (-1.5, 0.8) circle (2.5pt);
            \fill[blue!80] (-0.8, 1.5) circle (2.5pt);
            \fill[blue!80] (-1.0, 1.0) circle (2.5pt);
            
            \fill[blue!80] (1.2, -1.2) circle (2.5pt);
            \fill[blue!80] (1.5, -0.8) circle (2.5pt);
            \fill[blue!80] (0.8, -1.5) circle (2.5pt);
            \fill[blue!80] (1.0, -1.0) circle (2.5pt);
            
            % Class 1 (Orange Triangles)
            \tikzset{tri/.style={regular polygon, regular polygon sides=3, fill=orange!90, inner sep=1.5pt}}
            \node[tri] at (1.2, 1.2) {};
            \node[tri] at (1.5, 0.8) {};
            \node[tri] at (0.8, 1.5) {};
            \node[tri] at (1.0, 1.0) {};
            
            \node[tri] at (-1.2, -1.2) {};
            \node[tri] at (-1.5, -0.8) {};
            \node[tri] at (-0.8, -1.5) {};
            \node[tri] at (-1.0, -1.0) {};
            
            % Title & Subtitle (Bundled to prevent overlap)
            \node[align=center] at (0, -3.5) {
                \textbf{\texttt{CART}: Greedy Depth-1}\\[0.5em]
                \textcolor{red!80!black}{\small Impurity Reduction = 0}\\
                \textcolor{red!80!black}{\small (Fails to separate classes)}
            };
        \end{scope}

        % ==========================================
        % RIGHT PANEL: MHABR (Global Success)
        % ==========================================
        \begin{scope}[shift={(8,0)}] % Increased shift to prevent crowding
            % Shaded Decision Regions (Perfect 4 quadrants)
            \fill[blue!5] (-2.5, 0) rectangle (0, 2.5);      % Top left
            \fill[orange!5] (0, 0) rectangle (2.5, 2.5);     % Top right
            \fill[orange!5] (-2.5, -2.5) rectangle (0, 0);   % Bottom left
            \fill[blue!5] (0, -2.5) rectangle (2.5, 0);      % Bottom right
            
            % Axes
            \draw[->, thick, draw=gray!80] (-2.5,0) -- (2.5,0) node[right, text=black] {$x_1$};
            \draw[->, thick, draw=gray!80] (0,-2.5) -- (0,2.5) node[above, text=black] {$x_2$};
            
            % Global Split Lines (Crosshair)
            \draw[green!60!black, line width=1.5pt] (0, -2.5) -- (0, 2.5);
            \draw[green!60!black, line width=1.5pt] (-2.5, 0) -- (2.5, 0);
            
            % Class 0 (Blue Circles)
            \fill[blue!80] (-1.2, 1.2) circle (2.5pt);
            \fill[blue!80] (-1.5, 0.8) circle (2.5pt);
            \fill[blue!80] (-0.8, 1.5) circle (2.5pt);
            \fill[blue!80] (-1.0, 1.0) circle (2.5pt);
            
            \fill[blue!80] (1.2, -1.2) circle (2.5pt);
            \fill[blue!80] (1.5, -0.8) circle (2.5pt);
            \fill[blue!80] (0.8, -1.5) circle (2.5pt);
            \fill[blue!80] (1.0, -1.0) circle (2.5pt);
            
            \tikzset{tri/.style={regular polygon, regular polygon sides=3, fill=orange, inner sep=2pt}}
            \node[tri] at (1.2, 1.2) {};
            \node[tri] at (1.5, 0.8) {};
            \node[tri] at (0.8, 1.5) {};
            \node[tri] at (1.0, 1.0) {};
            
            \node[tri] at (-1.2, -1.2) {};
            \node[tri] at (-1.5, -0.8) {};
            \node[tri] at (-0.8, -1.5) {};
            \node[tri] at (-1.0, -1.0) {};
            
            % Title & Subtitle (Bundled to prevent overlap)
            \node[align=center] at (0, -3.5) {
                \textbf{\texttt{ABR}: Global Depth-2}\\[0.5em]
                \textcolor{green!50!black}{\small Misclassification Loss = 0}\\
                \textcolor{green!50!black}{\small (Perfectly isolates quadrants)}
            };
        \end{scope}
    \end{tikzpicture}
    \caption{Illustration of a canonical XOR failure mode. A depth-1 greedy split may fail to separate the classes, whereas a depth-2 lookahead can recover the correct interaction structure. }
    \label{fig:xor_topology}
\end{figure}

This example highlights a basic limitation of greedy splitting: the usefulness of a feature may only become apparent after subsequent splits are taken into account. A lookahead-based method such as \texttt{ABR} is designed to evaluate such downstream effects more explicitly.

\begin{theorem}\label{thm:xor_reduction_immunity}
Let $\mathcal{D}$ be the XOR-distributed dataset described above, where the two informative features are binary. For $d\geq2$, \texttt{MHABR} returns a global optimal tree with zero training loss, even when the reduction strategy is enabled. Moreover, if \texttt{CART} returns a tree with positive loss, as established in \Cref{thm:greedy_failure}, then \texttt{MHABR} strictly outperforms \texttt{CART}.
\end{theorem}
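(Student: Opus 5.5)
The plan is to show first that zero training loss is attainable at depth $2$. Then I will show that the first \texttt{MHABR} step, namely the \texttt{ABR} call at the root with $d_{\mathrm{sub}}=d$, already finds a zero-loss tree. After that, the incumbent-update logic keeps it.

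\textbf{Step 1: a zero-loss tree exists.} Since $y_i=x_i^1\oplus x_i^2$ is a deterministic function of $(x_i^1,x_i^2)\in\{0,1\}^2$, the depth-$2$ tree that splits on feature $1$ at threshold $1/2$ and then on feature $2$ at threshold $1/2$ in both children has leaves that are class-pure. Hence $V_2(\mathcal{I})=0$. For $d>2$, deeper splits can be padded trivially, or one uses the monotonicity $V_d\le V_2$ (extra depth never hurts, by the Monotonicity argument in \Cref{recallCART} applied to optimal trees). So $V_d(\mathcal{I})=0$, and any tree with zero loss is globally optimal.

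\textbf{Step 2: \texttt{ABR} finds it despite reduction.} Feature $a=1$ is binary, so $\mathcal{K}^1=\{0,1,2\}$. The only nontrivial split is $k=1$, and it is the midpoint $\bar k=\lceil (0+2)/2\rceil=1$ of the initial region. It is therefore evaluated at the very first iteration of \texttt{ApproxSplitSearch} for $a=1$, before any reduction can remove it. This avoids relying on Lemma~\ref{prune}, which need not hold for $\widehat V_d$ when $d>2$ (Remark~\ref{rem:approx_reduction}). On each child $\mathcal{I}^1_{[0,1]}$ (where $x^1=0$, so $y=x^2$) and $\mathcal{I}^1_{[1,2]}$ (where $y=1-x^2$), the depth-$(d-1)\ge 1$ \texttt{CART} tree has $W_{d-1}=0$. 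The reason is that \texttt{CART}'s depth-1 split is exactly optimal (depth-1 optimality in \Cref{recallCART}), and the split on feature $2$ gives loss $0$. Monotonicity of \texttt{CART} in depth then gives $W_{d-1}\le W_1=0$. Thus $\widehat V_d(\mathcal{I},1,1)=0$.

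\textbf{Step 3: the incumbent is kept.} If the \texttt{CART} initialization already has loss $0$, the incumbent is $0$. Otherwise $0<U$, and the update replaces the incumbent. Losses are nonnegative, so $U=0$ can never be replaced, and $U_{\texttt{ABR}}(\mathcal{I},d)=0$. In \Cref{mhbbnp}, the first iteration $t=1$ has $U_1=W_d(\mathcal{I})$. Either $U_1=0$ already, or $U_{\mathrm{sub}}=0<U_1$ and the tree is replaced. After that, by Corollary~\ref{mh_monotonicity}, the total loss never increases and stays $0$. This proves global optimality with zero loss. For the second claim: if \texttt{CART} has positive loss (for instance, under the conditions of \Cref{thm:greedy_failure} where it picks a noise root split), then $U_{\texttt{MHABR}}=0<W_d(\mathcal{I})$, so \texttt{MHABR} strictly outperforms it.

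\textbf{Main obstacle.} The delicate point is Step 2, arguing that reduction cannot prune the split on feature $1$. Since the Lipschitz guarantee is unavailable for $\widehat V_d$, I rely on the binary feature having a single interior candidate that is the first midpoint evaluated. The degenerate case where the sample has only one value of $x^1$ also needs handling; there the split on feature $2$ at the root is symmetric and plays the same role. I will also check carefully that \texttt{CART} at depth $d-1$ in the children attains zero loss, which uses exact depth-1 optimality.
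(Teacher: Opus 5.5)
Your proof is correct, but your key step takes a different route from the paper's.

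\textbf{The paper's route.} The paper does not use the three-element candidate set of a binary feature. It fixes an informative feature $a$ with a zero-loss split $k^*$ and notes that
$\widehat V_d(\mathcal{I},a,k)\le W_1(\mathcal{I}^a_{[0,k]})+W_1(\mathcal{I}^a_{[k,n_a]})=V_2(\mathcal{I},a,k)$ for every $k$. It then applies Lemma~\ref{conti} to the \emph{exact} depth-$2$ objective. This gives $\widehat V_d(\mathcal{I},a,\bar k)\le V_2(\mathcal{I},a,\bar k)\le|\zeta(\bar k)-\zeta(k^*)|$ for any midpoint $\bar k$ evaluated before $k^*$. Since $U>0$ until a zero-loss tree is found, $\delta(\bar k)<|\zeta(\bar k)-\zeta(k^*)|$. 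Hence $k^*$ is never pruned and is eventually evaluated.

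This sandwich argument sidesteps the missing Lipschitz property of $\widehat V_d$. It works for any search order and any number of thresholds. It would therefore also cover XOR with continuous-valued informative features, as in the figure, whenever a zero-loss depth-$2$ split exists.

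\textbf{Your route.} You observe that $\mathcal{K}^1=\{0,1,2\}$ and that the unique nontrivial split is the first midpoint $\lceil (0+2)/2\rceil=1$. It is therefore evaluated before any reduction can act. This is more elementary and needs no Lipschitz bound at all, but it depends on the binary hypothesis.

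\textbf{What your proof adds.} You are more complete about the wrapper. You pass explicitly through the $t=1$ iteration of \texttt{MHABR}, where $U_1=W_d(\mathcal{I})$ and $d_{\mathrm{sub}}=d$, and then invoke Corollary~\ref{mh_monotonicity}. You also handle the degenerate sample in which $x^1$ takes a single value. The paper's proof leaves both points implicit, identifying the \texttt{MHABR} output with that of the root-level \texttt{ABR} search and presupposing that a nontrivial split index exists.
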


\begin{proof}
Let $W_d(\mathcal{I})$ denote the misclassification loss of the initial
tree generated by \texttt{CART}, establishing the initial upper bound
$U=W_d(\mathcal{I})$ for \texttt{MHABR}. Let
$a\in\{1,2\}$ be an informative feature and let $k^*$ denote its
nontrivial split index. We aim to prove that $k^*$ is never removed by
the reduction strategy defined in Lemma~\ref{prune} before a globally
optimal solution is found.

By the definition of XOR, splitting on feature $a$ at $k^*$ and then
on the other informative feature partitions the sample space into four
class-pure quadrants. Therefore, the corresponding depth-$2$ tree has
zero misclassification loss: $V_2(\mathcal{I},a,k^*)=0$. For every $d\geq2$, the training loss of \texttt{CART} is nonincreasing with the available depth, and a depth-$1$ tree is solved exactly. Hence, for every candidate split $(a,k)$,
\begin{align}
\widehat V_d(\mathcal{I},a,k)
&=W_{d-1}(\mathcal{I}^a_{[0,k]})+W_{d-1}(\mathcal{I}^a_{[k,n_a]}) \nonumber\\
&\leq W_1(\mathcal{I}^a_{[0,k]})+W_1(\mathcal{I}^a_{[k,n_a]}) \nonumber\\
&=V_2(\mathcal{I},a,k). \nonumber
\end{align}
Since the loss is nonnegative and $V_2(\mathcal{I},a,k^*)=0$, it follows that $\widehat V_d(\mathcal{I},a,k^*)=0$. During the search, suppose that the algorithm evaluates a midpoint $\bar{k}$ before evaluating $k^*$. Applying Lemma~\ref{conti} to the exact depth-$2$ objective gives $V_2(\mathcal{I},a,\bar{k})
=\left| V_2(\mathcal{I},a,\bar{k})-V_2(\mathcal{I},a,k^*)\right| 
\leq|\zeta(\bar{k})-\zeta(k^*)|$. Consequently,
\begin{align}
\widehat V_d(\mathcal{I},a,\bar{k})
\leq
V_2(\mathcal{I},a,\bar{k})
\leq
|\zeta(\bar{k})-\zeta(k^*)|.
\label{eq:lipschitz_bound}
\end{align}

Before a zero-loss solution is found, the incumbent satisfies $U>0$.
According to the reduction strategy, the reduction margin is
$\delta(\bar{k})
:=
\widehat V_d(\mathcal{I},a,\bar{k})-U$.
Therefore,
\begin{align}
\delta(\bar{k})=\widehat V_d(\mathcal{I},a,\bar{k})-U<\widehat V_d(\mathcal{I},a,\bar{k}). \nonumber
\end{align}
Combining this inequality with \Cref{eq:lipschitz_bound} yields
\begin{align}
\delta(\bar{k})
<
|\zeta(\bar{k})-\zeta(k^*)|. \nonumber
\end{align}

Hence, the condition required to remove $k^*$, $|\zeta(\bar{k})-\zeta(k^*)|\leq\delta(\bar{k})$ cannot hold. Thus, $k^*$ is not removed by the reduction strategy. If $U=0$ at an earlier iteration, a global optimal solution has already been found. Since the candidate set is finite and $k^*$ remains active, it is
eventually evaluated, after which the incumbent is updated to $U_{\texttt{MHABR}}(\mathcal{I},d)=V_d(\mathcal{I})=0$. Because misclassification loss is nonnegative, this solution is global optimal. Finally, if the initial \texttt{CART} tree satisfies
$W_d(\mathcal{I})>0$, then $U_{\texttt{MHABR}}(\mathcal{I},d)=0<W_d(\mathcal{I})$, so \texttt{MHABR} strictly outperforms \texttt{CART}.
\end{proof}

\section{Details of Numerical Experiments} \label{appnum}

\subsection{Information of small-scale datasets}  \label{appsmall}
The details of the 51 small datasets are presented in \Cref{info51}.

\begin{table}[h]
\centering
\caption{The information of 51 small-scale datasets.}
\begin{adjustbox}{width=0.96\textwidth}
\begin{tabular}{cccc|cccc}
\toprule[1.5pt]
Dataset Name & $n$ & $p$ & Class & Dataset Name & $n$ & $p$ & Class \\
\midrule[1.2pt]
Soybean-small & 47 & 35 & 4 & Body & 507 & 5 & 2 \\
Echocardiogram & 61 & 11 & 2 & Climate-model-crashes & 540 & 20 & 2 \\
Hepatitis & 80 & 19 & 2 & Monks-problems-3 & 554 & 6 & 2 \\
Fertility & 100 & 9 & 2 & Monks-problems-1 & 556 & 6 & 2 \\
Acute-inflammations-1 & 120 & 6 & 2 & Breast-cancer-diagnosti & 569 & 30 & 2 \\
Acute-inflammations-2 & 120 & 6 & 2 & Monks-problems-2 & 600 & 6 & 2 \\
Hayes–roth & 132 & 5 & 3 & Balance-scale & 625 & 4 & 3 \\
Iris & 150 & 4 & 3 & Credit-approval & 653 & 15 & 2 \\
Teaching-assistant-evaluation & 151 & 5 & 3 & Breast-cancer & 683 & 9 & 2 \\
Wine & 178 & 13 & 3 & Blood-transfusion & 748 & 4 & 2 \\
Breast-cancer-prognostic & 194 & 31 & 2 & Mammographic-mass & 830 & 5 & 2 \\
Parkinsons & 195 & 23 & 2 & Tic-tac-toe-endgame & 958 & 9 & 2 \\
Connectionist-bench-sonar & 208 & 60 & 2 & Connectionist-bench & 990 & 13 & 11 \\
Image-segmentation & 210 & 19 & 7 & Statlog-project-German-credit & 1,000 & 20 & 2 \\
Seeds & 210 & 7 & 3 & Concrete & 1,030 & 8 & 3 \\
Glass & 214 & 9 & 6 & Banknote-authentication & 1,372 & 4 & 2 \\
Thyroid-disease-new-thyroid & 215 & 5 & 3 & Contraceptive-method-choice & 1,473 & 9 & 3 \\
Congressional-voting-records & 232 & 16 & 2 & Car-evaluation & 1,728 & 6 & 4 \\
Spect-heart & 267 & 22 & 2 & Ozone-level-detection-eight & 1,847 & 72 & 2 \\
Spectf-heart & 267 & 44 & 2 & Ozone-level-detection-one & 1,848 & 72 & 2 \\
Cylinder-bands & 277 & 39 & 2 & Seismic-bumps & 2,584 & 18 & 2 \\
Heart-disease-Cleveland & 282 & 13 & 5 & Chess-king-rook-versus-king-pawn & 3,196 & 36 & 2 \\
Haberman-survival & 306 & 3 & 2 & Thyroidann & 3,772 & 21 & 3 \\
Ionosphere & 351 & 34 & 2 & Wall-following-robot-2 & 5,456 & 2 & 4 \\
Dermatology & 358 & 34 & 6 & Thyroid-disease-ann-thyroid & 7,200 & 21 & 3 \\
Thoracic-surgery & 470 & 16 & 2 &  &  &  &  \\
\bottomrule[1.5pt]
\end{tabular}
\label{info51}
\end{adjustbox}
\end{table}

\subsection{Configuration of additional techniques for large-scale datasets} \label{largeapp}
The configurations for datasets containing more than $10,000$ are described below. 
For the 5 medium-scale datasets, we do not introduce the parameters $\varepsilon$ and $\theta$; \texttt{MHABR} successfully completes all the training tasks. 
For the three large-scale datasets, we configure the algorithms to terminate within the time limit. The configurations are SUSY: $\varepsilon=0.01, \theta=0.25$,
HIGGS: $\varepsilon=0.01, \theta=0.5$, WESAD: $\varepsilon=0.00025, \theta=0.25$.

\subsection{Implementation details}\label{app_IPD}
Details and experimental settings of all comparison algorithms are stated below. Unless otherwise specified, the implementations used in our experiments are obtained from their original authors.

\textbf{MHABR}: Our algorithm is implemented in \texttt{Julia}. For the default version, we adopt the reduction strategy with parameters $\varepsilon = 0$ and $\theta = 1$.

\textbf{CART} \citep{ben_sadeghi_2022_7359268}: We use the implementation from the \texttt{Julia} package \texttt{DecisionTree}, selecting entropy loss as it provides the best results among the three options; its performance may occasionally surpass \texttt{MHABR} and \texttt{DPDT} on several datasets.

\textbf{DPDT} \citep{kohler2025breiman}: This algorithm is written in \texttt{Python} and calls \texttt{CART} through the \texttt{Python} package \texttt{scikit-learn}, using the \textbf{Gini loss} (default). Therefore, in our results, it may be surpassed by \texttt{CART}.

\textbf{LS-OCT} \citep{dunn2018optimal}: Since the original code is not available, we implement both methods in \texttt{Julia} and call \texttt{Gurobi} to solve MIP models.

\textbf{DL8.5} \citep{aglin2020learning}: This algorithm is written in \texttt{C++} and is run as an extension of \texttt{Python}. The current version also integrates the methods of \texttt{MurTree} \citep{demirovic2022murtree}. Besides, we use \texttt{GUESS} \citep{mctavish2022fast} for binarization because it provided the best
performance among the evaluated options.  

\textbf{Quant-BnB} \citep{mazumder2022quant}: The authors provide an open-source implementation of this algorithm, which is written in \texttt{Julia}.

\textbf{TAO} \citep{carreira2018alternating}: Because the source code for this method is not publicly available, we directly compare our approach with the results reported in the original paper under the same experimental configuration.

Additionally, we experimented with \textbf{STreeD} \citep{van2023necessary}, implemented in \texttt{C++} with a \texttt{Python} interface. However, due to changes in computational hardware, a fair comparison could not be ensured; therefore, we omit its results.

\end{document}